\theoremstyle{plain}
\newtheorem{theorem}{Theorem}[section]
\newtheorem{lemma}[theorem]{Lemma}
\theoremstyle{definition}
\newtheorem{definition}[theorem]{Definition}
\newtheorem{assumption}[theorem]{Assumption}
\theoremstyle{remark}
\newcommand{\A}{\mathcal{A}}
\newcommand{\D}{\mathcal{D}}
\newcommand{\B}{\mathcal{B}}
\newcommand{\LL}{\mathcal{L}}
\newcommand{\E}{\mathbb{E}}
\newcommand{\argmax}{\arg\max}
\newcommand{\ignore}[1]{}
\icmltitlerunning{ICML 2023 Workshop on The 2nd New Frontiers
In Adversarial Machine Learning}
\begin{document}

\twocolumn[
\icmltitle{A First Order Meta Stackelberg Method for Robust Federated Learning}



\icmlsetsymbol{equal}{*}

\begin{icmlauthorlist}
\icmlauthor{Yunian Pan}{equal,nyu}
\icmlauthor{Tao Li}{equal,nyu}
\icmlauthor{Henger Li}{tulane}
\icmlauthor{Tianyi Xu}{tulane}
\icmlauthor{Zizhan Zheng}{tulane}
\icmlauthor{Quanyan Zhu}{nyu}
\end{icmlauthorlist}

\icmlaffiliation{nyu}{Department of Electrical and Computer Engineering, New York, NY, USA}
\icmlaffiliation{tulane}{Department of Computer Science, Tulane University, New Orleans, LA, USA}

\icmlcorrespondingauthor{Tao Li}{taoli@nyu.edu}

\icmlkeywords{Adversarial Federated Learning, Meta Learning, Meta Equilibrium, Bayesian Stackelberg Markov Game, ICML}

 \vskip 0.3in
]



\printAffiliationsAndNotice{\icmlEqualContribution} 

\begin{abstract}
\vspace{10pt}
Previous research has shown that federated learning (FL) systems are exposed to an array of security risks. Despite the proposal of several defensive strategies, they tend to be non-adaptive and specific to certain types of attacks, rendering them ineffective against unpredictable or adaptive threats. This work models adversarial federated learning as a Bayesian Stackelberg Markov game (BSMG) to capture the defender's incomplete information of various attack types. We propose meta-Stackelberg learning (meta-SL), a provably efficient meta-learning algorithm, to solve the equilibrium strategy in BSMG, leading to an adaptable FL defense. We demonstrate that meta-SL converges to the first-order $\varepsilon$-equilibrium point in $O(\varepsilon^{-2})$ gradient iterations, with $O(\varepsilon^{-4})$ samples needed per iteration, matching the state of the art. Empirical evidence indicates that our meta-Stackelberg framework performs exceptionally well against potent model poisoning and backdoor attacks of an uncertain nature.
\end{abstract}

\section{Introduction}
Federated learning (FL) provides a way for several devices possessing private data to collaboratively train a learning model without the need to share their local data~\cite{mcmahan2017communication}. Nonetheless, FL systems remain susceptible to antagonistic attacks, including untargeted model poisoning and specific backdoor attacks. To counter these vulnerabilities, a range of robust aggregation techniques like Krum~\cite{blanchard2017machine}, coordinate-wise median~\cite{yin2018byzantine}, trimmed mean~\cite{yin2018byzantine}, and FLTrust~\cite{cao2020fltrust} have been suggested for defense against non-specific attacks. Furthermore, different post-training protective measures like Neuron Clipping~\cite{wang2022universal} and Pruning~\cite{wu2020mitigating} have been recently introduced to reduce the impact of backdoor attacks.

Existing defenses are typically built to resist specific attack types and attacks that do not evolve in response to defensive measures. In this work, we introduce a meta-Stackelberg game (meta-SG) framework that delivers robust defensive performance, even against adaptive attacks such as the reinforcement learning (RL)-based attack~\cite{li2022learning}, which current state-of-the-art defenses struggle to address, or an amalgamation of different attack types like the simultaneous occurrence of model poisoning and backdoor attacks (see Section~\ref{sec:exp}).

\begin{figure}
    \centering
    \includegraphics[width=0.45\textwidth]{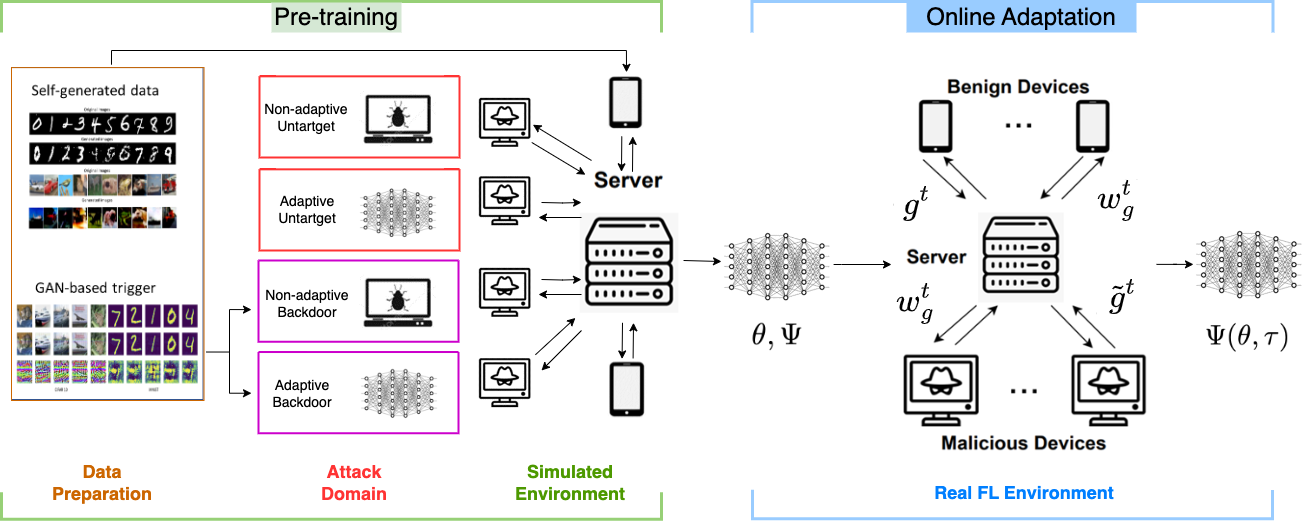}
    \caption{A schematic illustration of the meta-Stagberg game framework.  In the pertaining stage, a simulated environment is constructed using generated data and {a set of attacks sampled from the attack domain consisting of various attack strategies}. The defender utilizes meta-Stackelberg learning (\Cref{algo:meta-sl}) to obtain the meta policy $\theta$ and the adaptation $\Psi$ in \eqref{eq:meta-se}. Then, in the online execution, the defender can adapt its defense to $\Psi(\theta,\tau)$ using received feedback $\tau$ in the presence of unknown attacks.}
    \label{fig:fl-game}
    \vspace{-0.6cm}
\end{figure}
Our meta-SG defense framework is established on several key observations. Firstly, the issue of robust FL in the face of a non-adaptive attack can be perceived as a Markov decision process (MDP), where the state represents model updates from selected devices, and the action refers to the gradient for updating the global model. Moreover, when the attack is known beforehand, the defender can employ the limited amount of local data at the server and publicly accessible data to construct an (approximate) MDP model and determine a robust defense policy by pre-training prior to the commencement of FL training. Secondly, for situations where the attack is adaptive but with specific parameters, we consider a Markov game between the attacker and the defender and establish a robust defense by solving the Stackelberg equilibrium of the game, wherein the defender is the leader and the attacker the follower. This approach applies to both single and multiple concurrent attacks and may yield an (almost) optimal defense. Thirdly, in more realistic scenarios where attacks are unknown or uncertain, this situation can be treated as a Bayesian Stackelberg Markov game (BSMG), offering a comprehensive model for adversarial FL. {Uncertain attacks refer to those involved in the pre-training stage but undisclosed in the online FL process, leaving the defender unsure about their existence. On the other hand, unknown attacks point to those excluded in the pre-training, of which the defender is unaware. }Nonetheless, the standard solution concept for BSMG, the Bayesian Stackelberg equilibrium, aims at the expected case and does not adapt to the actual attack.


In this study, we introduce a novel solution concept, the meta-Stackelberg equilibrium (meta-SE), for BSMG as a systematic approach to creating resilient and adaptive defenses for federated learning. By merging meta-learning with Stackelberg reasoning, meta-SE provides a computationally efficient method to handle information asymmetry in adversarial FL and facilitates strategic adaptation during online execution amidst multiple (adaptive) attackers. Prior to training an FL model, a meta-policy is trained by solving the BSMG using experiences sampled from a set of potential attacks. During FL training, when confronted with an actual attacker, the meta-policy rapidly adapts using a relatively small batch of samples gathered in real-time. Importantly, our proposed Meta-SG framework only requires a rough estimate of potential attacks during meta-training due to the generalization capability offered by meta-learning.

To solve the BSMG in the pre-training stage, we develop a meta-Stackelberg learning (meta-SL) algorithm, based on the concept of debiased meta-reinforcement learning \cite{fallah2021convergence}. Meta-SL is proven to converge to the first-order $\varepsilon$-approximate meta-SE in $O(\varepsilon^{-2})$ iterations, and the corresponding sample complexity per iteration is $O(\varepsilon^{-4})$. Such algorithmic complexity aligns with the latest complexity results in nonconvex bi-level stochastic optimization \cite{ji2021bilevel}. Due to the conflicting interests between the defender and the attacker in FL, the ensuing BSMG is strictly competitive, which can be seen as a generalization of zero-sum. Therefore, meta-SL does not require second-order derivatives of the attacker's value function (the low-level problem), even though the Hessian of the defender's value function remains due to the meta adaptation. Inspired by Reptile \cite{nichol2018first}, a first-order meta-learning algorithm, we propose a fully first-order pre-training algorithm, referred to as Reptile Meta-SL, as a substitute for meta SL in our experiments. Reptile Meta-SL uses only the first-order stochastic gradients of the attacker's and defender's objective functions to solve for the approximate equilibrium. As evidenced by numerical results in \Cref{sec:exp} and Appendix, it is effective in managing adaptive and/or uncertain (or unknown) attacks.

\textbf{Our contributions} can be summarized as follows, with the discussion of related work relocated to the Appendix due to space constraints:
\begin{itemize}
\item We tackle vital security issues in federated learning in the face of multiple adaptive (non-adaptive) attackers of uncertain or unknown types.
\item We formulate a Bayesian Stackelberg game model (\Cref{subsec:bsmg}) to encapsulate the information asymmetry in adversarial FL under uncertain or unknown adaptive attacks.
\item To provide the defender with strategic adaptability, we introduce a new equilibrium concept, the meta-Stackelberg equilibrium (\Cref{def:meta-se}). Here, the defender (the leader) commits to a meta policy and an adaptation strategy, leading to a data-driven method to handle information asymmetry.
\item To learn the meta equilibrium defense during the pre-training phase, we develop meta-Stackelberg learning (\Cref{algo:meta-sl}), an efficient first-order meta RL algorithm. This algorithm provably converges to $\varepsilon$-approximate equilibrium in $O(\varepsilon^{-2})$ gradient steps with $O(\varepsilon^{-4})$ samples per iteration, 
matching the state-of-the-art efficiency in stochastic bilevel optimization.
\item We carry out comprehensive experiments in real-world scenarios to demonstrate the outstanding performance of our proposed method.
\end{itemize}

\section{Model Formulation}

\subsection{Federated Learning and Threat Model}
\textbf{FL objective.} Consider a learning system that includes one server and $n$ clients, each client possesses its own private dataset $D_i={(x_i^j,y_i^j)_{j=1}^{|D_i|}}$ and $|D_i|$ signifies the size of the dataset for the $i$-th client.
Let $U=\{D_1, D_2, \dots, D_n\}$ represent the compilation of all client datasets. 
The objective of federated learning is defined as identifying a model $w$ that minimizes the average loss across all the devices: $\min_w F(w, U):=\frac{1}{n}\sum_{i=1}^{n}f(w, D_i)$, where $f(w, D_i):=\frac{1}{|D_i|}\sum_{j=1}^{|D_i|}\ell(w,(x_i^j,y_i^j))$ is the local empirical loss with $\ell(\cdot,\cdot)$ being the loss function.

\textbf{Attack objective.} 
We consider two major categories of attacks, namely, backdoor attacks and untargeted model poisoning attacks. 
Our framework can be extended to other attack scenarios.
For simplicity, assume that the first $M_1$ malicious clients carry out the backdoor attack and the following $M_2$ malicious clients undertake the poisoning attack.
The model poisoning attack aims to maximize the average model loss, i.e., $\max_w F(w)$; 
the backdoor attack aims to preserve decent performance on clean test inputs (``main task'') while causing misclassification of poisoned test inputs to one or more target labels (``backdoor task''). 
Each malicious client in the backdoor attack produces a poisoned data set $D'_{i\leq M_1}$, obtained by altering a subset of data samples $(x_i^j,y_i^j) \in D_i$ to $(\hat{x}^{j}_i,c^*)$, where $\hat{x}^{j}_i$ is the tainted sample with a backdoor trigger inserted, and $c^* \neq y^j_i, c^*\in C$ is the targeted label. 
Let $U'=\{D'_1, D'_2, \dots, D'_{M_1}\}$ denote the compilation of poisoned datasets. The objective function in the backdoor attack is defined as: $\min_w F'(w)=\lambda F(w, U)+(1-\lambda) F(w,U')$, where $\lambda\in [0,1]$ serves to balance between the main task and the backdoor task.

\textbf{FL process.} 
The federated learning process works in an adversarial setting as follows. 
At each round $t$ out of $H$ FL rounds, the server randomly selects a subset of clients $\mathcal{S}^t$ and sends them the most recent global model $w_g^t$. Every benign client in $\mathcal{S}^t$ updates the model using their local data via one or more iterations of stochastic gradient descent and returns the model update $g^t$ to the server. Conversely, an adversary in $\mathcal{S}^t$ creates a malicious model update $\widetilde{g}^t$ clandestinely and sends it back. The server then collects the set of model updates $\{{\widetilde{g}_i^t}\cup {\widetilde{g}_j^t}\cup {g_k^t}\}_{i,j,k\in \mathcal{S}^t, i\in [M_1], j\in [M_2], k\notin [M_1]\cup [M_2]}$, utilizing an aggregation rule $Aggr$ to combine them and updates the global model $w_g^{t+1}=w_g^t-Aggr({\widetilde{g}_i^t}\cup {\widetilde{g}_j^t}\cup {g_k^t})$, which is then sent to clients in round $t+1$. At the final round $T$, the server applies a post-training defense $h(\cdot)$ on the global model to generate the final global model $\widehat{w}_g^T=h(w_g^T)$.

\textbf{Attacker type and behavior.} We anticipate multiple types of attacks occurring simultaneously, emanating from various categories. For clarity, we consider a single mastermind attacker present within the FL system who controls a group of malicious clients employing diverse attack strategies, which may be either non-adaptive or adaptive. Non-adaptive attacks involve a fixed attack strategy that solves a short-sighted optimization problem, disregarding the defense mechanism implemented by the server (i.e., the 
robust aggregation rule and the post-training defense). Such attacks include explicit boosting (EB)~\cite{bhagoji2019analyzing}, inner product manipulation (IPM)~\cite{xie2020fall}, and local model poisoning attack (LMP)~\cite{fang2020local}), federated backdoor attack (BFL)~\citep{bagdasaryan2020backdoor}, distributed backdoor attack (DBA)~\citep{xie2019dba}, projected gradient decent (PGD) backdoor attack~\citep{wang2020attack}. On the other hand, an adaptive attack, such as the RL-based model poisoning attack~\cite{li2022learning} and RL-based backdoor attack~\cite{li2023learning}, designs model updates by simulating the server's reactions to optimize a long-term objective. {These adaptive and non-adaptive untargeted/backdoor attack methods mentioned above constitute the attack domain in our adversarial FL setting.} One significant hurdle in addressing adversarial attacks is the \textit{information asymmetry}~\cite{tao_info}. This is when the server (i.e., the defender) lacks knowledge of the behavior and identities of malicious clients in a realistic black-box scenario. We denote the collective attack configuration of malicious clients as the type of the mastermind attacker, detailing $M_1, M_2$, attack behaviors (adaptive or not), and other required parameters of the attack method.

\subsection{Bayesian Stackelberg Markov Game Model}
\label{subsec:bsmg}
In this study, we propose a comprehensive framework for robust defense against potent unknown or uncertain attacks. 
The central principle is to construct RL-based defenses by simulating unknown attack behavior using RL-based attacks. 
As demonstrated in prior research~\cite{li2022learning,li2023learning}, RL-based attacks serve as a robust baseline for both model poisoning and backdoor attacks. Therefore, a defense that is resilient to RL-based attacks could potentially safeguard the system against other (less potent) attacks. To manage the high-dimensional state and action spaces, we 
integrate a 
set 
of lightweight defenses in the training stage and post-training stage. The groundbreaking element of our approach is the use of RL to optimize these defenses, moving away from the conventional fixed and manually-tuned hyperparameters. This approach requires a Bayesian Stackelberg Markov game formulation, encapsulated in the tuple $G = (\mathcal{P}, Q, S, O, A, \mathcal{T}, r, \gamma)$, where $\gamma \in(0,1)$ is the reward discounting factor:  



\begin{itemize}
    \item The player set $\mathcal{P}=\{\D, \mathcal{A}\}$ contains $\mathcal{D}$ as the leader (defender), and $\mathcal{A}$ as the follower (attacker) who controls multiple malicious clients.
    \item $Q(\cdot): \Xi \to [0,1]$ denotes the probability distribution over the attacker's private types. $\Xi := \{\xi_i\}_{i=1}^{|\Xi|}$ where $\xi_i$ denotes $i$-th type attacks.
    \item  $O$ is the observation space; the observation for the server (i.e., defender) at round $t$ is $w^{t}_g$ (the server does not have access to the client's identities); the observation for the attacker at round $t$ is $s^t:=(w^{t}_g,\mathbf{I}^t)$ since the attacker controls these malicious clients. 
    $\mathbf{I}^t\in \{0,1\}^{|\mathcal{S}^t|}$ is the identity vector for the random client subset $\mathcal{S}^t \subseteq \{1, \ldots, n \}$, where the identities of malicious and benign devices are $1$ and $0$ respectively. Notice that the clients' identities are independent of players' actions.
    \item $A = \{ A_{\mathcal{D}}, A_\xi \}$ is the joint action set, where $A_\D$ and $A_{\xi}$ denote the set of defense actions and type-$\xi$ attack actions, respectively; in the FL setting, $a_\D^t=\widehat{w}_g^{t+1}:=h({w}_g^{t+1})$. The attacker's action is characterized by the joint actions of malicious clients $a^t_{A_{\xi}}:=\{\widetilde{g}_i^t\}_{i=1}^{M_1}\cup \{\widetilde{g}_i^t\}_{i=M_1+1}^{M_2}$.  Note that a malicious device not sampled at round $t$ does not send any information to the server; hence its action has no effect on the model update. The subscript $\xi$ is suppressed if it is clear from the context. 
    \item $\mathcal{T}: S\times A \rightarrow \Delta(S)$ 
    is the state transition function, which represents the probability of reaching a state $s'\in S$ from current state $s\in S$, where the defender and the attacker chose actions $a^t_{\D}$ and $a^t_{A_{\xi}}$ respectively.
    \item $r=\{r_{\mathcal{D}},r_{\mathcal{A}_\xi}\}$, where  $r_{\mathcal{D}}: S \times A \rightarrow \mathbb{R}_{\leq 0}$ and $r_{\mathcal{A}_\xi}: S \times A \rightarrow \mathbb{R}$ are the reward functions for the defender and the attacker, respectively. Define the expected reward at round $t$ as $r_{\mathcal{D}}^t:= -\mathbb{E}[F(\widehat{w}^{t+1}_g)]$ and $r_{\mathcal{A}_{\xi}}^t:=\rho\mathbb{E}[F'(\widehat{w}^{t+1}_g)]-(1-\rho)\mathbb{E}[F(\widehat{w}^{t+1}_g)]$, $\rho=M_1/(M_1+M_2)$, if $\mathbf{1}\cdot \mathbf{I}^t > 0$, and $r^t_{\mathcal{A}_{\xi}}:=0$ otherwise. 
\end{itemize}

\section{Meta-Stackelberg Equilibrium}

Let the defender's and the attacker's policies be parameterized by neural networks $\pi_\D(a_\D^t|s^t;\theta)$, $\pi_\A(a_\A^t|s^t;\phi)$  with model weights $\theta\in \Theta$ and $\phi\in \Phi$, respectively. Given the two players' policies $\theta$, $\phi$ and the private attack type $\xi$, the defender's expected utility is defined as $J_\D(\theta, \phi, \xi):=\E_{a_\A^t\sim \pi_\A(\cdot;\phi, \xi), a_\D^t\sim \pi_\D(\cdot;\theta)}[\sum_{t=1}^H \gamma^t r_\D(s^t, a_\D^t, a_\A^t)]$. Similarly, the attacker's expected utility is $J_\A(\theta, \phi, \xi):=\E_{a_\A^t\sim \pi_\A(\cdot;\phi, \xi), a_\D^t\sim \pi_\D(\cdot;\theta)}[\sum_{t=1}^H \gamma^t r_\A(s^t, a_\D^t, a_\A^t)]$. Denote by $\tau_\xi:=  (s^k, a^k_\D, a^k_{\A})_{k=1}^H$ the trajectory of the BSMG under type-$\xi$ attacker, which is subject to the distribution $q(\theta, \phi, \xi):=\prod_{t=1}^H \pi_\D(a_\D^t|s^t;\theta)\pi_\A(a^t_\A|s^t;\phi, \xi)\mathcal{T}(s^{t+1}|s^t, a_\D^t, a_\A^t)$. In the later development of meta-SG, we consider the gradient $\nabla_\theta J_\D(\theta,\phi,\xi)$ and its sample estimate $\hat{\nabla}_\theta J_\D(\tau_\xi)$ based on the trajectory $\tau_\xi$. 
The estimation is due to the policy gradient theorem \cite{sutton_PG} reviewed in Appendix D, and we note that such an estimate takes a batch of $\tau_\xi$ (the batch size is $N_b$) for variance reduction.
For simplicity, we use the one-trajectory estimate denoted by $\hat{\nabla}_\theta J_\D(\tau_\xi)$.

A natural defense strategy to tackle the information asymmetry is to find a Bayesian Stackelberg equilibrium (BSE):  
\begin{equation}
\begin{aligned}
      & \quad \max_{\theta\in \Theta}\E_{\xi\sim Q(\cdot)}[J_\D(\theta, \phi^*_\xi, \xi)]\quad \\
     &  \text{s.t. } \phi_\xi^* \in \argmax J_\A(\theta, \phi, \xi), \forall \xi\in \Xi. \label{eq:bse}
\end{aligned}
\end{equation}
 \eqref{eq:bse} admits a simple characterization for optimal defense, yet its limitation is evident. The attacker's actions (equivalently, the aggregated models) reveal partial information about its hidden type (its attack objective), which the defender does not properly handle, as the strategy is fixed throughout the BSMG. 
 Consequently, the defender does not adapt to the specific attacker in the online execution.

To equip the defender with responsive intelligence in the face of unknown multi-type attacks, we propose a new equilibrium concept, meta-Stackelberg equilibrium in \Cref{def:meta-se}. 
The intuition of this meta-equilibrium is that $\Psi(\theta,\tau_\xi)$ is tailored to each realized $\xi$  when the defender observes the attacker's moves included in $\tau_\xi$. 
\begin{definition}[Meta Stackelberg Equilibrium]
\label{def:meta-se}
    A triple of the defender's meta policy $\theta$, the adaptation mapping $\Psi$, and the attacker's type-dependent policy $\phi$ is a meta Stackelberg equilibrium if it satisfies 
\begin{equation}
\begin{aligned}
    \label{eq:meta-se}
    & \quad \max_{\theta\in\Theta, \Psi} V(\theta):= \mathbb{E}_{\xi\sim Q}\mathbb{E}_{\tau\sim q}[J_\D(\Psi(\theta,\tau),\phi^*_{\xi}, \xi)], \\ & \text{s.t. }  \phi^*_{\xi}\in \arg\max \mathbb{E}_{\tau\sim q}J_\A(\Psi(\theta, \tau), \phi, \xi), \forall \xi\in \Xi, 
\end{aligned}
\end{equation}
where $q=q(\theta, \phi, \xi)$ is the trajectory distribution. 
\end{definition}
In practice, $\Psi(\theta,\tau)$ is simply fixed as a one-step (or multi-step, see \Cref{app:algo}) SGD operation, i.e., $\Psi(\theta,\tau)=\theta+\eta \hat{\nabla}_\theta J_\D(\tau)$ to leave $\theta$ as the only variable to be optimized. In comparison with meta-defense and BSE-defense, the proposed meta-SE defense highlights \textbf{strategic adaptation} in adversarial FL modeled by the BSMG. A detailed discussion on this meta-equilibrium is deferred to \Cref{app:meta-se}.

\section{Meta-Stackelberg Learning}
Based on the aforementioned meta-Stackelberg equilibrium, we introduce the meta-learning-based defense approach~\citep{li2023robust} (referred to as the meta-defense in the sequel) by considering non-adaptive attack methods.
The goal of meta-defense is to find a meta-policy and an adaptation rule such that the adapted policy gives satisfying defense performance. The mathematical characterization is presented in \Cref{app:algo}.

The meta-defense framework includes three stages: \textbf{pre-training}, \textbf{online adaptation}, and 
\textbf{post-training}. The \textbf{pre-training} stage is implemented in a simulated environment (as discussed in 
our technical report), which allows sufficient alternative training with trajectories generated from random potential attacks, which includes both adaptive (e.g., RL-based attacks as discussed in 
our technical report) and non-adaptive (e.g., IPM and LMP) attacks. After obtaining a meta-policy, the defender will interact with the real FL environment in the \textbf{online adaptation} stage to tune its defense policy using feedback (i.e., rewards) received in the face of real attacks. In real-world FL training, the server typically waits for $1\sim 10$ minutes before receiving responses from the clients, allowing the defender to update the defense policy using SGD in the interim. In the \textbf{post-training} stage, the defender finally performs a post-training defense on the global model. 
\subsection{Robust FL via Meta-Stackelberg Learning}
\label{subsec:meta-sl}
Even though meta-learning enables an adaptable defense, it fails to address adaptative attacks in the online adaptation phase, e.g., the RL attacker learns to evade the adapted defense. To create \textbf{strategic} online adaptation, the defender needs to learn the Stackelberg equilibrium of the BSMG using meta-learning, preparing itself for the worst possible case: adaptive attacks in the online phase.

Inspired by a first-order meta-learning algorithm called Reptile~\cite{nichol2018first}, we propose Reptile meta-Stackelberg learning (meta-SL) defense (Algorithm~\ref{algo:meta-sl} solving for the BSMG) in adversarial FL. The algorithms start from 
an initial meta-defense policy model $\theta^0$ and a set of initial attack policies $\{\phi_i\}_{i\leq m}$ from the attack domain. Meta-SL alternatively updates the defender's and attacker's policies, and each iteration comprises three major steps. \textbf{Step 1 attack sampling}: in each iteration, the algorithm first sample a batch of $K$ attacks from $Q(\xi)$. \textbf{Step 2 policy adaptation}: for each attack, meta-SL adapts the defender's policy $\theta^t_k$ to the $k$-th attack by performing $l$-step gradient descent to the defender's value function (see Appendix~\ref{app:algo}) with step size $\eta$. \textbf{Step 3 meta policy update}: meta-SL first derives the attacker's best response policy for each sampled type by applying gradient descent to the attacker's value function until convergence. After this alternative learning, the algorithm then performs $l$-step gradient descent to the defender's value function under the convergent attack policy (of each type) for adaptation. Finally, the average of all adapted policies of the sampled attacks becomes the new meta policy.

\subsection{First-order Approximate Meta Equilibrium}
 
Now we unfold the theoretical analysis for the \textbf{pre-training} stage, which we refer to as meta-Stackelberg learning (meta-SL). 
The main task of meta-SL is solving \eqref{eq:meta-se}, a bi-level optimization problem. We employ a bi-level approach, applying gradient ascent to the upper-level problem (the defender's) where the gradient estimation involves the optimizer of the lower-level problem (the attacker's).
The details are deferred to \Cref{app:algo}.

In general, the meta-SE (\Cref{def:meta-se}) may not be feasible~\citep{nouiehed2019solving}, we hereby propose a weaker characterization that only involves the first-order necessary conditions. 
  To simplify our exposition, we let $\LL_\D (\theta, \phi, \xi) := \E_{\tau \sim q} J_{\D}(\theta+\eta \hat{\nabla}_{\theta} J_\D (\tau), \phi, \xi) $ 
 and $\LL_\A (\theta, \phi, \xi) :=  \E_{\tau \sim q} J_\A ( \theta + \hat{\nabla}_\theta J_\D(\tau), \phi, \xi )$, for a fixed type $\xi \in \Xi$.. In the sequel, we will assume $\LL_\D$ and $\LL_\A$ to be continuously twice differentiable and Lipschitz-smooth with respect to both $\theta$ and $\phi$ as in~\citep{li2022sampling}, and the Lipschitz assumptions are deferred to \Cref{app:theory}. 
 
 \begin{definition}[$\varepsilon$-meta First-Order Stackelberg Equilibrium] \label{def:meta-fose}
For a small $\varepsilon \in (0,1)$, a set of parameters $(\theta^*,\{\phi^*_\xi\}_{\xi \in \Xi}) \in \Theta \times \Phi^{|\Xi|}$ is a $\varepsilon$-\textit{meta First-Order Stackelbeg Equilibrium} ($\varepsilon$-meta-FOSE) of the meta-SG if it satisfies the following conditions for  $\xi \in \Xi$, 
\begin{equation} \label{eq:meta-fose}
\begin{aligned}
    \max_{\theta \in \Theta \bigcap B(\theta^*)} \langle \nabla_{\theta} \LL_\D (\theta^*, \phi^*_\xi, \xi), \theta - \theta^*\rangle &\leq \varepsilon, \\ 
       \max_{\phi \in \Phi \bigcap B(\phi^*_\xi )} \langle \nabla_{\phi} \LL_\A (\theta^*, \phi^*_\xi, \xi) , \phi - \phi^*_\xi \rangle &\leq \varepsilon ,
\end{aligned}
\end{equation} 
where $B( \theta^* ): =  \{ \theta \in \Theta : \| \theta - \theta^*\| \leq 1\}$, and  $B( \phi^*_\xi ): =  \{ \theta \in \Theta : \| \phi - \phi^*_\xi \| \leq 1\} $. When $\varepsilon = 0$, the parameter set $(\theta^*,\{\phi^*_\xi\}_{\xi \in \Xi})$ is said to be the meta-FOSE. 
 \end{definition}
The necessary equilibrium condition for  \Cref{def:meta-se} can be reduced to $\| \nabla_{\theta} \LL_\D (\theta^*, \phi_\xi, \xi)\| \leq \varepsilon$ and $\|\nabla_{\phi} \LL_\A (\theta^*, \phi_\xi, \xi)\| \leq \varepsilon$ in the unconstraint settings. 
 Since we utilize stochastic gradient in practice, all inequalities mentioned above shall be considered in expectation.  These conditions, along with the positive-semi-definiteness of the Hessians, construct the optimality conditions for a local solution for the meta-SE, which may not exist even in the zero-sum cases \cite{Jin2019MinmaxOS}. 
The advantage of considering meta-FOSE is that its existence is guaranteed. 
 \begin{theorem} \label{thm:existence}
     Under the condition that $\Theta$ and $\Phi$ are compact and convex, the meta-SG admits at least one meta-FOSE.
 \end{theorem}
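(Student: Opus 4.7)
My plan is to realize the meta-FOSE conditions as fixed points of a continuous self-map on a compact convex domain, and then invoke Brouwer's fixed-point theorem. This is the standard route for existence of first-order stationary points in a coupled system and uses only the ingredients already assumed: compactness, convexity, and continuous differentiability.

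First, the product $Z := \Theta \times \Phi^{|\Xi|}$ is compact and convex as a finite product of compact convex sets in Euclidean space ($\Xi$ is finite by the model's definition). For any step-size $\alpha > 0$, define the projected-gradient map $T : Z \to Z$ by
\begin{equation*}
T(\theta, (\phi_\xi)_{\xi}) = \Big( P_\Theta\bigl(\theta + \alpha\, \E_{\xi \sim Q}[\nabla_\theta \LL_\D(\theta, \phi_\xi, \xi)]\bigr),\ \bigl(P_\Phi(\phi_\xi + \alpha \nabla_\phi \LL_\A(\theta, \phi_\xi, \xi))\bigr)_{\xi}\Big),
\end{equation*}
where $P_\Theta, P_\Phi$ are the Euclidean projections onto $\Theta$ and $\Phi$. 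Under the stated $C^2$/Lipschitz-smoothness assumption on $\LL_\D$ and $\LL_\A$, the gradients are continuous in $(\theta,\phi)$, and projection onto a closed convex set is $1$-Lipschitz; hence $T$ is continuous. Brouwer's theorem then yields some $(\theta^*, (\phi^*_\xi)_\xi) \in Z$ with $T(\theta^*, (\phi^*_\xi)_\xi) = (\theta^*, (\phi^*_\xi)_\xi)$.

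To close the argument, I would translate this fixed-point identity into the first-order inequalities of Definition~\ref{def:meta-fose} through the standard variational characterization of the projection: $y = P_C(y + \alpha v)$ with $\alpha>0$ if and only if $\langle v, z - y\rangle \leq 0$ for all $z \in C$. Applied componentwise, this gives
\begin{equation*}
\langle \E_{\xi \sim Q}[\nabla_\theta \LL_\D(\theta^*, \phi^*_\xi, \xi)], \theta - \theta^*\rangle \leq 0 \ \forall\, \theta \in \Theta, \qquad \langle \nabla_\phi \LL_\A(\theta^*, \phi^*_\xi, \xi), \phi - \phi^*_\xi\rangle \leq 0 \ \forall\, \phi \in \Phi,\, \forall \xi,
\end{equation*}
which in particular holds on the local balls $\Theta \cap B(\theta^*)$ and $\Phi \cap B(\phi^*_\xi)$, recovering Definition~\ref{def:meta-fose} at $\varepsilon = 0$.

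The main obstacle I anticipate is interpretational rather than technical: Definition~\ref{def:meta-fose} writes the defender's inequality per attacker type $\xi$, while the upper-level objective in~\eqref{eq:meta-se} is $\E_{\xi \sim Q}[\LL_\D]$. I would adopt the expectation reading consistent with~\eqref{eq:meta-se}, since requiring a single $\theta^*$ to be stationary against every type-specific $\phi^*_\xi$ simultaneously is generically infeasible and would make the existence claim fail. All remaining issues (well-posedness of the projections, closedness of the feasible sets, boundedness of the gradient field on the compact domain) follow immediately from compactness plus $C^2$ regularity, so no machinery beyond Brouwer is needed; if one preferred a correspondence-based formulation, Kakutani applied to the set-valued projected-gradient map would produce the same conclusion.
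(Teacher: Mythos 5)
Your proof is correct and shares the paper's overall strategy---realize meta-FOSE points as fixed points of a continuous self-map on the compact convex product $\Theta\times\Phi^{|\Xi|}$ and invoke Brouwer---but the self-map you choose is different. The paper regularizes the utilities with a proximal term $-\frac{\gamma_c}{2}\|\cdot-\cdot'\|^2$ to make them strongly concave and takes the map to be the (unique) maximizer of these auxiliary objectives; at a fixed point the proximal term has zero gradient, so the first-order conditions of the regularized and original problems coincide. You instead use the projected-gradient map $x\mapsto P_C(x+\alpha\nabla f(x))$ and read off the variational inequality from the projection characterization. The two constructions buy slightly different things: the paper's best-response map needs an argument (strong concavity plus a maximum-theorem step, which the proof states rather tersely) to establish continuity of the argmax, whereas your map is continuous essentially for free since the projection is nonexpansive and the gradients are continuous on a compact set; conversely, the paper's construction avoids introducing a step size and works directly with optimality of the auxiliary problems. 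Your observation about the per-$\xi$ versus expectation reading of the defender's condition in \Cref{def:meta-fose} is well taken and consistent with the paper: its own proof verifies the condition for $\ell_\D(\theta,\phi)=\E_{\xi\sim Q}\LL_\D(\theta,\phi_\xi,\xi)$, i.e., it adopts exactly the expectation reading you propose. No gaps.
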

For the rest of this section, we assume the attacker is unconstrained, i.e., $\Phi$ is a finite-dimensional Euclidean space.

\subsection{Sufficiency for First-Order Estimation in Strictly Competitive Games}
Finding a meta-FOSE for \eqref{eq:meta-se} is challenging due to the non-convex equilibrium constraint at the lower level. To see this more clearly, consider differentiating the defender's value function: $\nabla_{\theta}V = \E_{\xi \sim Q } [\nabla_{\theta} \LL_\D (\theta, \phi_\xi, \xi) + (\nabla_{\theta} \phi_\xi(\theta))^{\top}\nabla_{\phi}\LL_\D(\theta, \phi_\xi, \xi)]$, where $\nabla_{\theta}\phi_\xi(\cdot)$ is locally characterized by the implicit function theorem, i.e., $\nabla_{\theta}\phi_\xi (\theta) = ( -\nabla^2_{ \phi}\LL_\A (\theta, \phi, \xi))^{-1}  \nabla^2_{ \phi \theta} \LL_\A (\theta, \phi, \xi)$. 
Therefore, the gradient estimation requires iteratively estimating the second-order information for the attacker (lower level) objective, which can be costly and prohibitive in many scenarios \cite{song2019maml}. Hence, we introduce the following assumption to bypass the technicality involved in calculating $\nabla_{\theta}\phi_\xi$.
\begin{assumption}[Strict-Competitiveness]
\label{ass:sc}
    The BSMG is strictly competitive, i.e., there exist constants $c<0$, $d$ such that $\forall \xi \in \Xi$, $s \in S$, $a_\D, a_\A \in A_\D \times A_\xi$, $r_\D(s,a_\D, a_\A)=c r_\A(s,a_\D, a_\A)+d$.
\end{assumption}
The above assumption is a direct extension of the strict-competitiveness (SC) notion in matrix games \cite{adler09compete}. One can treat the SC notion as a generalization of zero-sum games: if one joint action $(a_\D, a_\A)$ leads to payoff increases for one player, it must decrease the other's payoff. In adversarial FL, the untargeted attack naturally makes the game zero-sum (hence, SC), and the backdoor attack also leads to the SC (see \Cref{app:theory}). The purpose of introducing \Cref{ass:sc} is to establish the Danskin-type result \cite{danskin-type} for the Stackelberg game with nonconvex value functions (see \Cref{lem:liplemma}), which spares us from the Hessian inversion. 

Another key regularity assumption we impose on the nonconvex value functions is adapted from the Polyak-Łojasiewicz (PL) condition \cite{karimi2016linear}, which is customary in nonconvex analysis. 
\begin{assumption}[Stackelberg Polyak-Łojasiewicz condition] \label{plass}
There exists a positive constant $\mu$ such that for any $(\theta, \phi) \in \Theta\times \Phi$ and $\xi \in \Xi$, the following inequalities hold: $\frac{1}{2\mu} \|\nabla_{\phi } \mathcal{L}_{\D} (\theta, \phi, \xi)\|^2 \geq  \max_{\phi} \LL_\D (\theta, \phi, \xi) -  \mathcal{L}_{\D}(\theta, \phi, \xi)$, $\
    \frac{1}{2\mu} \|\nabla_{\phi } \LL_{\A} (\theta, \phi, \xi)\|^2 \geq 
   \max_{\phi} \LL_\A (\theta, \phi, \xi) -  \LL_\A(\theta, \phi, \xi) $.
\end{assumption}

Under \Cref{plass}, the first-order estimation is sufficient by \Cref{lem:liplemma}. 
\begin{lemma}\label{lem:liplemma}
    Under Assumptions \ref{plass} and regularity conditions, there exists $\{ \phi_\xi: \phi_\xi \in \arg\max_{\phi}  \LL_{\A} (\theta, \phi, \xi) \}_{\xi \in \Xi}$, such that $\nabla_{\theta}  V(\theta) =  \nabla_{\theta} \mathbb{E}_{\xi \sim Q, \tau \sim q} J_\D (\theta + \eta \hat{\nabla}_{\theta} J_{\D}(\tau), \phi_\xi, \xi)$. Moreover, there exists a constant $L > 0$ such that the defender value function $V(\theta)$ is $L$-Lipschitz-smooth. 
\end{lemma}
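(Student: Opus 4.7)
The plan is to use Assumption~\ref{ass:sc} (strict competitiveness) to collapse the bi-level problem into a single-level envelope, and then use Assumption~\ref{plass} (PL) to obtain both the gradient identity and the Lipschitz-smoothness bound on $V$. As a first step, I would propagate strict competitiveness from rewards to value functions: summing the per-step identity $r_\D = c\,r_\A + d$ along any $\gamma$-discounted $H$-step rollout gives $J_\D(\theta',\phi,\xi) = c\,J_\A(\theta',\phi,\xi) + d'$ for $d' := d\sum_{t=1}^H \gamma^t$, uniformly in $(\theta',\phi,\xi)$. Since $\LL_\D$ and $\LL_\A$ share the same adaptation map $\Psi$, substituting $\theta' = \Psi(\theta,\tau)$ and taking $\E_{\tau \sim q}$ yields $\LL_\D(\theta,\phi,\xi) = c\,\LL_\A(\theta,\phi,\xi) + d'$; in particular $\nabla_\phi \LL_\D = c\,\nabla_\phi \LL_\A$, and $\arg\max_\phi \LL_\A(\theta,\cdot,\xi) = \arg\min_\phi \LL_\D(\theta,\cdot,\xi)$ because $c<0$.

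Next, I would establish the gradient identity via a Danskin-type argument. By Assumption~\ref{plass}, the set $\Phi_\xi^*(\theta) := \arg\max_\phi \LL_\A(\theta,\phi,\xi)$ is nonempty, and in the unconstrained setting every $\phi_\xi^* \in \Phi_\xi^*(\theta)$ is a critical point of $\LL_\A(\theta,\cdot,\xi)$, so $\nabla_\phi \LL_\A(\theta,\phi_\xi^*,\xi) = 0$. Combining with the previous step, $\nabla_\phi \LL_\D(\theta,\phi_\xi^*,\xi) = c\,\nabla_\phi \LL_\A(\theta,\phi_\xi^*,\xi) = 0$. The formal chain rule applied to $V(\theta) = \E_\xi \LL_\D(\theta,\phi_\xi^*(\theta),\xi)$ produces a Jacobian-of-best-response term multiplying $\nabla_\phi \LL_\D$, which now vanishes identically, leaving $\nabla_\theta V(\theta) = \E_\xi \nabla_\theta \LL_\D(\theta,\phi_\xi^*(\theta),\xi)$, i.e.\ exactly the claimed identity. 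Any measurable selection $\phi_\xi^*(\theta)$ satisfying the attacker's first-order condition will serve.

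To establish $L$-Lipschitz smoothness of $V$, the PL condition on $\LL_\A(\theta,\cdot,\xi)$ implies quadratic growth around $\Phi_\xi^*(\theta)$, which combined with joint Lipschitz-smoothness of $\LL_\A$ in $(\theta,\phi)$ yields a measurable selection $\phi_\xi^*(\theta)$ with Lipschitz modulus of order $L_\A/\mu$ in $\theta$. Applying the joint Lipschitz-smoothness of $\LL_\D$ then gives
\begin{equation*}
\|\nabla V(\theta_1) - \nabla V(\theta_2)\| \leq L_\D\bigl(\|\theta_1 - \theta_2\| + \E_\xi\|\phi_\xi^*(\theta_1) - \phi_\xi^*(\theta_2)\|\bigr) \leq L_\D(1 + L_\A/\mu)\|\theta_1 - \theta_2\|,
\end{equation*}
so $V$ is $L$-Lipschitz-smooth with $L := L_\D(1 + L_\A/\mu)$.

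The main obstacle I anticipate is the construction of the Lipschitz selection $\phi_\xi^*(\theta)$ used in the third step: the PL condition does not imply uniqueness of the argmax, so the standard implicit-function approach fails without invertibility of $\nabla_\phi^2 \LL_\A$. The remedy is the quadratic-growth-based $\ell_2$-projection selection sketched above; verifying that this projection is Lipschitz in $\theta$ using the joint smoothness of $\LL_\A$ is the technical crux, and is exactly the place where the strict-competitiveness assumption pays off, because it eliminates the need to differentiate through $\Phi_\xi^*(\theta)$ altogether.
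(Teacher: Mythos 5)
Your proposal is correct and follows the same three-step architecture as the paper's proof: (i) align the attacker-gradients of $\LL_\D$ and $\LL_\A$ via strict competitiveness so that the best-response Jacobian term in the chain rule is annihilated (a Danskin-type envelope argument, with the directional-derivative/difference-quotient reading you mention at the end serving, as in the paper, to sidestep differentiability of the argmax); (ii) use the PL condition's quadratic growth together with $\xi$-wise gradient Lipschitzness to get a $(L_{21}/\mu)$-Lipschitz-stable selection from $\arg\max_\phi \LL_\A(\theta,\cdot,\xi)$; (iii) combine with joint smoothness of $\nabla_\theta\LL_\D$ via the triangle inequality to obtain $L = L_{11} + L_{12}L_{21}/\mu$, which matches your $L_\D(1+L_\A/\mu)$ up to naming of constants. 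The one genuinely different sub-step is how you establish $\nabla_\phi\LL_\D = c\,\nabla_\phi\LL_\A$: you push the pointwise reward identity $r_\D = c\,r_\A + d$ through the discounted sum and the (shared) adaptation map to get the exact affine relation $\LL_\D = c\,\LL_\A + d'$, and then differentiate; the paper instead invokes the policy gradient theorem and uses the zero-mean property of the score function $\E_\mu[\sum_t \nabla_\phi\log\pi_\A]=0$ to kill the contribution of the constant $d$. Your route is more elementary and gives the stronger statement that the gradients are proportional everywhere (not just verified at stationary points), while the paper's score-function route is the one that generalizes if the affine relation were only assumed at the level of expected returns rather than per-step rewards; note only that your argument silently requires $\LL_\D$ and $\LL_\A$ to use the identical adaptation map (the paper's displayed definitions differ by a stray missing $\eta$, evidently a typo), which you correctly flag as an assumption.
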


\subsection{Non-Asymptotic Iteration Complexity}

{We now present the main iteration complexity results. \Cref{lemma:attackstablize} states that one can stabilize the lower-level simulated RL attacks under mild conditions,  with proper choices of the batch size $(\mathcal{O}( \varepsilon^{-4}))$ and the attacker learning iteration $(\mathcal{O}(\log \varepsilon^{-1}))$.
Moreover, the defender's gradient feedback can be approximated using the last iterate of the inner loop. }

\begin{lemma}\label{lemma:attackstablize}
Under Assumption \ref{plass} and regularity assumptions. For any given $\varepsilon \in (0,1)$, at any iteration $t \in 1, \ldots, N_\D$, if the attacker learning iteration $N_\A$ and the batch size $N_b$ are large enough such that $N_\A  \sim  \mathcal{O}( \log \varepsilon^{-1} )$ and $N_b  \sim \mathcal{O}(\varepsilon^{-4})$, then, for any $\xi \in \Xi$, the attack policy is stabilized, i.e., $$\E \left[ \max_\phi\langle \nabla_{\phi} \LL_\D (\theta^t, \phi^t_{\xi}(N_\A), \xi), \phi - \phi^t_\xi(N_\A)\rangle \right] \leq \varepsilon.$$
 Further, the defender's gradient feedback can be $\varepsilon$-approximated, i.e., 
 $$\E\left[ \| \nabla_{\theta} V(\theta^t) - \nabla_{\theta} \E_{\xi \sim Q}\LL_\D (\theta^t, \phi^t_\xi(N_\A), \xi)\|\right] \leq  \varepsilon, $$ 
 where the expectation $\E[\cdot]$ is taken over all the randomness from the algorithm.
\end{lemma}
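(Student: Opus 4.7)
The plan is to use the Polyak-Łojasiewicz condition (\Cref{plass}) to establish linear convergence of the attacker's stochastic gradient ascent (the inner loop), and then propagate this convergence, via the Danskin-type identity in \Cref{lem:liplemma} and the Lipschitz smoothness of the value functions, into the approximation error for the defender's meta-gradient. Note that, as written, the first displayed bound involves $\nabla_\phi \LL_\D$; since the inner loop performs ascent on $\LL_\A$, I read this as $\LL_\A$ and argue accordingly.

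First I would fix an outer iteration $t$ and a type $\xi\in\Xi$ and analyze the inner loop $\phi^t_\xi(k{+}1)=\phi^t_\xi(k)+\eta\,\hat{\nabla}_\phi \LL_\A$ with a mini-batch of $N_b$ trajectories. Since $\LL_\A(\theta^t,\cdot,\xi)$ is Lipschitz-smooth and PL, a standard one-step descent argument yields the geometric recursion
\[
\E\!\left[\LL_\A^{*} - \LL_\A(\theta^t,\phi^t_\xi(k{+}1),\xi)\right] \leq \rho\,\E\!\left[\LL_\A^{*} - \LL_\A(\theta^t,\phi^t_\xi(k),\xi)\right] + \frac{C\sigma^2}{N_b},
\]
with $\rho = 1 - c\eta\mu \in (0,1)$, $\LL_\A^{*} := \max_\phi \LL_\A(\theta^t,\phi,\xi)$, and $\sigma^2$ a uniform bound on the stochastic policy-gradient variance. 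Unrolling produces an exponentially decaying term $\rho^{N_\A}\Delta_0$ and a noise floor $C\sigma^2/(\mu N_b)$. Choosing $N_\A \sim \log \varepsilon^{-1}$ and $N_b \sim \varepsilon^{-4}$ drives both terms to $O(\varepsilon^2)$, so $\E[\LL_\A^{*} - \LL_\A(\theta^t,\phi^t_\xi(N_\A),\xi)] \leq O(\varepsilon^2)$. Smoothness then gives $\|\nabla_\phi \LL_\A\|^2 \leq 2L(\LL_\A^{*} - \LL_\A)$; taking expectations and applying Jensen yields $\E\|\nabla_\phi \LL_\A(\theta^t,\phi^t_\xi(N_\A),\xi)\|\leq O(\varepsilon)$, and in the unconstrained regime fixed in this subsection $\max_{\phi\in B(\phi^t_\xi)}\langle \nabla_\phi \LL_\A,\phi-\phi^t_\xi\rangle = \|\nabla_\phi \LL_\A\|$, establishing the first claim.

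For the gradient-approximation claim I would invoke \Cref{lem:liplemma} to write $\nabla_\theta V(\theta^t) = \nabla_\theta \E_{\xi\sim Q}\LL_\D(\theta^t,\phi_\xi,\xi)$ for some $\phi_\xi \in \arg\max_\phi \LL_\A(\theta^t,\cdot,\xi)$. The discrepancy is then $\E_\xi\|\nabla_\theta[\LL_\D(\theta^t,\phi_\xi,\xi)-\LL_\D(\theta^t,\phi^t_\xi(N_\A),\xi)]\|$, which by Lipschitz smoothness of $\nabla_\theta\LL_\D$ in $\phi$ is bounded by $L\,\E_\xi\|\phi_\xi-\phi^t_\xi(N_\A)\|$. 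The PL-implied quadratic growth $\|\phi^t_\xi(N_\A)-\phi_\xi\|^2 \leq (2/\mu)(\LL_\A^{*}-\LL_\A(\theta^t,\phi^t_\xi(N_\A),\xi))$, combined with Jensen and the $O(\varepsilon^2)$ function-gap bound from the previous paragraph, gives $\E\|\phi_\xi-\phi^t_\xi(N_\A)\| \leq O(\varepsilon)$, completing the proof.

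The main obstacle will be the careful variance and bias bookkeeping in the SGD-under-PL recursion, and in particular justifying why $N_b = O(\varepsilon^{-4})$ (rather than the naively sufficient $O(\varepsilon^{-2})$) is the right sample complexity: the adaptation map $\Psi(\theta,\tau)=\theta+\eta\,\hat{\nabla}_\theta J_\D(\tau)$ uses a one-trajectory estimator, so the composite objective $\LL_\D$ inherits a stochastic-approximation bias from the noisy adaptation step. Controlling this bias in the debiased-MAML style of Fallah et al.\ requires the extra factor, and must be handled uniformly across types $\xi$ and without cascading errors across outer iterations, which is where most of the technical work lies.
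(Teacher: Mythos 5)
Your proposal is essentially correct and shares the paper's overall skeleton --- a geometric contraction of a function gap under the PL condition with an $O(\sigma^2/N_b)$ noise floor, followed by PL-implied quadratic growth to convert the function gap into $\E\|\phi^t_\xi(N_\A)-\phi^*_\xi\|$, and finally the cross-Lipschitz constant $L_{12}$ to bound $\|z_t\|$. The one genuine difference is in the contraction step: you run the standard SGD-under-PL recursion directly on the attacker's objective $\LL_\A$, whereas the paper tracks the \emph{defender's} gap $V(\theta^t)-\ell_\D(\theta^t,\phi^t(k))$, using the PL condition imposed on $\LL_\D$ (with respect to $\phi$) together with strict competitiveness and a completing-the-square step to turn the cross term $\langle\nabla_\phi\LL_\D,\nabla_\phi\LL_\A\rangle$ into a contraction with rate $\rho=1+\mu/(cL_{22})$. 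Your route is more elementary and avoids the parameter condition $0<\mu<-cL_{22}$ at that stage; the paper's route has the advantage that the contracted quantity is exactly the one re-used (via SC) when bounding $\|\phi^t_\xi(N_\A)-\phi^*_\xi\|^2\le\tfrac{2|c|}{\mu}|\LL_\D(\theta^t,\phi^*_\xi,\xi)-\LL_\D(\theta^t,\phi^t_\xi(N_\A),\xi)|$.

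Two caveats. First, your re-reading of the first display with $\LL_\A$ in place of $\LL_\D$ matches the appendix version of the lemma, but to recover the statement as printed you still need SC: the paper's policy-gradient computation shows $\nabla_\phi\LL_\D=c\,\nabla_\phi\LL_\A$ (the $d$-term vanishes since the score function has zero mean), so the two stationarity measures differ only by $|c|$; this step should be made explicit rather than waved away. Second, your diagnosis of the $\varepsilon^{-4}$ batch size is off: the paper does not perform any debiasing analysis of the adaptation map $\Psi$ in this lemma. The extra factor arises simply because the proof actually establishes the stronger bound $\E\|z_t\|\le L\varepsilon^2/\bigl(4D_V(2V_\infty+LD_\Theta)^2\bigr)=O(\varepsilon^2)$, which is what the outer-loop telescoping in the main theorem requires (summing $\E\|z_t\|$ over $N_\D\sim\varepsilon^{-2}$ iterations); driving the squared function gap to $O(\varepsilon^4)$ then forces $N_b\sim\varepsilon^{-4}$. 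For the lemma exactly as stated, $N_b\sim\varepsilon^{-2}$ would indeed suffice, as you suspected.
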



Equipped with Lemma \ref{lemma:attackstablize}, we apply the standard analysis for first-order methods in a non-convex setting to the outer loop, leading to the main complexity result in \Cref{thm:main}.

\begin{theorem}\label{thm:main}
     Under assumption \ref{plass} and regularity assumptions, for any given $\varepsilon \in (0,1)$, let the learning rates $\kappa_\A$ and $\kappa_\D$ be properly chosen (see Appendix D); let $N_\A$ and $N_b$ be chosen as required by Lemma \ref{lemma:attackstablize}, then, meta-SL finds a $\varepsilon$-meta-FOSE within $N_\D \sim \mathcal{O}(\varepsilon^{-2})$ iterations. 
\end{theorem}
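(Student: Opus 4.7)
The plan is to treat the outer loop of meta-SL as biased nonconvex gradient ascent on $V(\theta)$ and glue together the two preceding lemmas. \Cref{lem:liplemma} supplies the two ingredients required by any first-order nonconvex analysis: (i) $V$ is $L$-Lipschitz-smooth and (ii) thanks to the Danskin-type identity enabled by \Cref{ass:sc}, $\nabla_\theta V(\theta)$ equals $\nabla_\theta \mathbb{E}_{\xi,\tau} J_\D(\theta + \eta \hat\nabla_\theta J_\D(\tau), \phi_\xi(\theta), \xi)$ with no Hessian-inversion term. \Cref{lemma:attackstablize} in turn certifies that with $N_\A \sim \mathcal{O}(\log \varepsilon^{-1})$ inner attacker iterations and batch size $N_b \sim \mathcal{O}(\varepsilon^{-4})$, the outer gradient surrogate $\widetilde g^t := \nabla_\theta \mathbb{E}_{\xi \sim Q} \LL_\D(\theta^t, \phi^t_\xi(N_\A), \xi)$ satisfies the bias bound $\mathbb{E}\|\nabla V(\theta^t) - \widetilde g^t\| \leq \varepsilon$, while its stochastic realization $\hat g^t$ carries residual variance of order $\sigma^2/N_b = \mathcal{O}(\varepsilon^4)$.

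\textbf{Outer-loop descent and telescoping.} First I would apply the standard descent lemma for $L$-smooth maps to $V$ along the update $\theta^{t+1} = \theta^t + \kappa_\D \hat g^t$ and decompose $\hat g^t = \nabla V(\theta^t) + (\widetilde g^t - \nabla V(\theta^t)) + (\hat g^t - \widetilde g^t)$. Cauchy--Schwarz and Young's inequality convert the cross-terms into a $\tfrac12\|\nabla V(\theta^t)\|^2$ piece plus an $\mathcal{O}(\varepsilon^2)$ contribution from the bias and an $\mathcal{O}(\kappa_\D L \sigma^2/N_b) = \mathcal{O}(\kappa_\D L \varepsilon^4)$ piece from the variance. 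Choosing $\kappa_\D = \Theta(1/L)$ and taking total expectations yields a per-iterate ascent inequality of the form $\mathbb{E}[V(\theta^{t+1})] \geq \mathbb{E}[V(\theta^t)] + c_1 \kappa_\D \mathbb{E}\|\nabla V(\theta^t)\|^2 - c_2 \varepsilon^2$. Telescoping from $t=0$ to $N_\D - 1$ and invoking the boundedness of $V$ gives
\begin{equation*}
\min_{0 \leq t < N_\D} \mathbb{E}\|\nabla V(\theta^t)\|^2 \;\leq\; \frac{V^* - V(\theta^0)}{c_1 \kappa_\D N_\D} + \mathcal{O}(\varepsilon^2),
\end{equation*}
so $N_\D \sim \mathcal{O}(\varepsilon^{-2})$ drives the right-hand side below $\varepsilon^2$. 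Combining this with the trivial inner-product reduction $\max_{\theta \in B(\theta^*)} \langle \nabla_\theta \LL_\D, \theta - \theta^*\rangle \leq \|\nabla_\theta \LL_\D\|$ and \Cref{lem:liplemma} yields the defender's condition in \Cref{def:meta-fose}.

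\textbf{Attacker side and main obstacle.} The attacker's inequality is inherited essentially for free: \Cref{lemma:attackstablize} already controls $\mathbb{E}[\max_\phi \langle \nabla_\phi \LL_\D(\theta^t, \phi^t_\xi(N_\A), \xi), \phi - \phi^t_\xi(N_\A)\rangle] \leq \varepsilon$, and \Cref{ass:sc} gives $\nabla_\phi \LL_\A = (1/c)\nabla_\phi \LL_\D$, so the same bound transfers to $\LL_\A$ after the constant $|1/c|$ is absorbed into the tuning of $\kappa_\A$, $N_\A$, and $N_b$. The main technical obstacle I anticipate is not the telescoping itself but the uniform control of the smoothness constant $L$ along the outer trajectory: because $\Psi(\theta,\tau) = \theta + \eta \hat\nabla_\theta J_\D(\tau)$ is composed inside $J_\D$, the effective Lipschitz modulus of $V$ depends on $\eta$ and on second-order derivatives of $J_\D$, and one must verify that all constants hidden in the $\mathcal{O}(\cdot)$ expressions above are independent of $\varepsilon$ and of the current inner iterate $\phi^t_\xi(N_\A)$. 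This is precisely where the smoothness hypotheses preceding \Cref{def:meta-fose} and the regularity assumptions deferred to \Cref{app:theory} are used; once they give a uniform $L$, the telescoping argument above closes the proof.
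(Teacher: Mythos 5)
Your proposal is correct and follows essentially the same route as the paper's proof: treat the outer loop as biased stochastic gradient ascent on the $L$-smooth $V$ (smoothness and the Hessian-free gradient identity from \Cref{lem:liplemma}), control the bias via \Cref{lemma:attackstablize}, telescope the descent inequality, and read off $N_\D \sim \mathcal{O}(\varepsilon^{-2})$, with the attacker's condition inherited from the inner-loop stabilization. The only cosmetic difference is that the paper telescopes the variational-inequality gap $e_t = \langle \nabla_\theta \ell_\D(\theta^t,\phi^t(N_\A)), \theta-\theta^t\rangle$ directly (bounding the cross term by $D_\Theta \sum_t \E\|z_t\|$, which needs only the first moment of the bias), whereas your Young's-inequality step implicitly requires a second-moment bound $\E\|z_t\|^2 = \mathcal{O}(\varepsilon^2)$ before converting $\|\nabla V\|$ to the inner-product form via Cauchy--Schwarz on the unit ball; that second moment is available from the same intermediate estimates in the proof of \Cref{lemma:attackstablize}, so this is a presentational rather than substantive gap.
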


{\Cref{thm:main} implies that meta-SL requires $\mathcal{O}(\varepsilon^{-2}\log \varepsilon^{-1})$ iterations of first-order evaluation, which matches the $\Omega(\varepsilon^{-2})$ lower bound for the general non-convex smooth setting \cite{carmon2020lower}, (up to a logarithmic factor) indicating the efficiency of meta-SL.}

\section{Experiments}\label{sec:exp}
For the detailed setup of the experiment and corresponding results, please refer to 
our technical report~\cite{li2023first}. In our experiments, we evaluate our meta-SG defense using the MNIST~\cite{lecun1998gradient} and CIFAR-10~\cite{krizhevsky2009learning} datasets. The evaluation is performed under a range of advanced attacks, including non-adaptive and adaptive untargeted model poisoning attacks (specifically, IPM~\cite{xie2020fall}, LMP~\cite{fang2020local}, RL~\cite{li2022learning}), backdoor attacks (BFL~\cite{bagdasaryan2020backdoor}, BRL~\cite{li2023learning}), and a combination thereof. Various robust defenses are taken into account as baselines, including training-stage defenses such as Krum~\cite{blanchard2017machine}, Clipping Median~\cite{yin2018byzantine,sun2019can,li2022learning}, FLTrust~\cite{cao2020fltrust}, and post-training defenses like Neuron Clipping~\cite{wang2022universal}, Pruning~\cite{wu2020mitigating}. 

\begin{figure}
   \centering
   \begin{subfigure}{0.225\textwidth}
   \centering
   \includegraphics[width=1\textwidth]{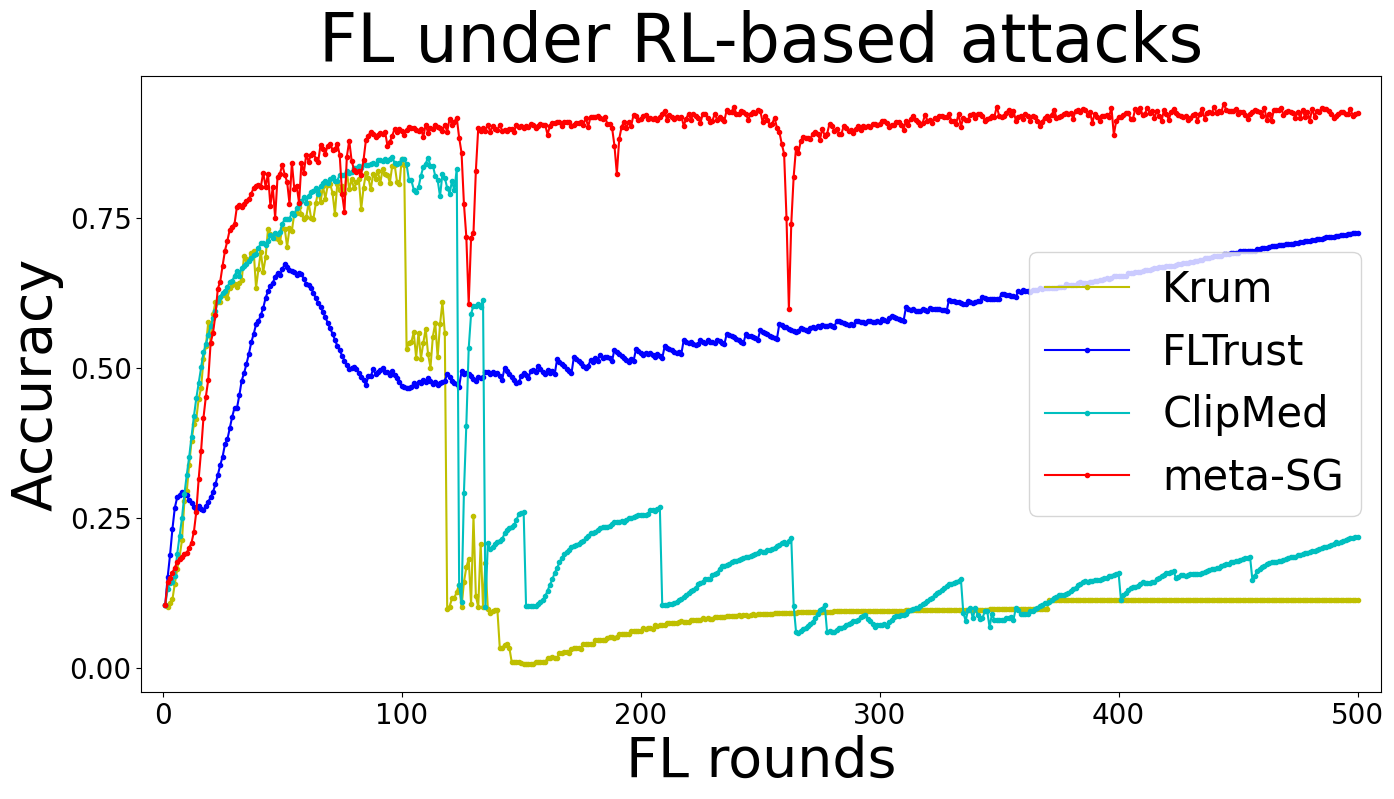}   
   \caption{}
   \end{subfigure}
   \hfill
      \begin{subfigure}{0.225\textwidth}
   \centering
   \includegraphics[width=1\textwidth]{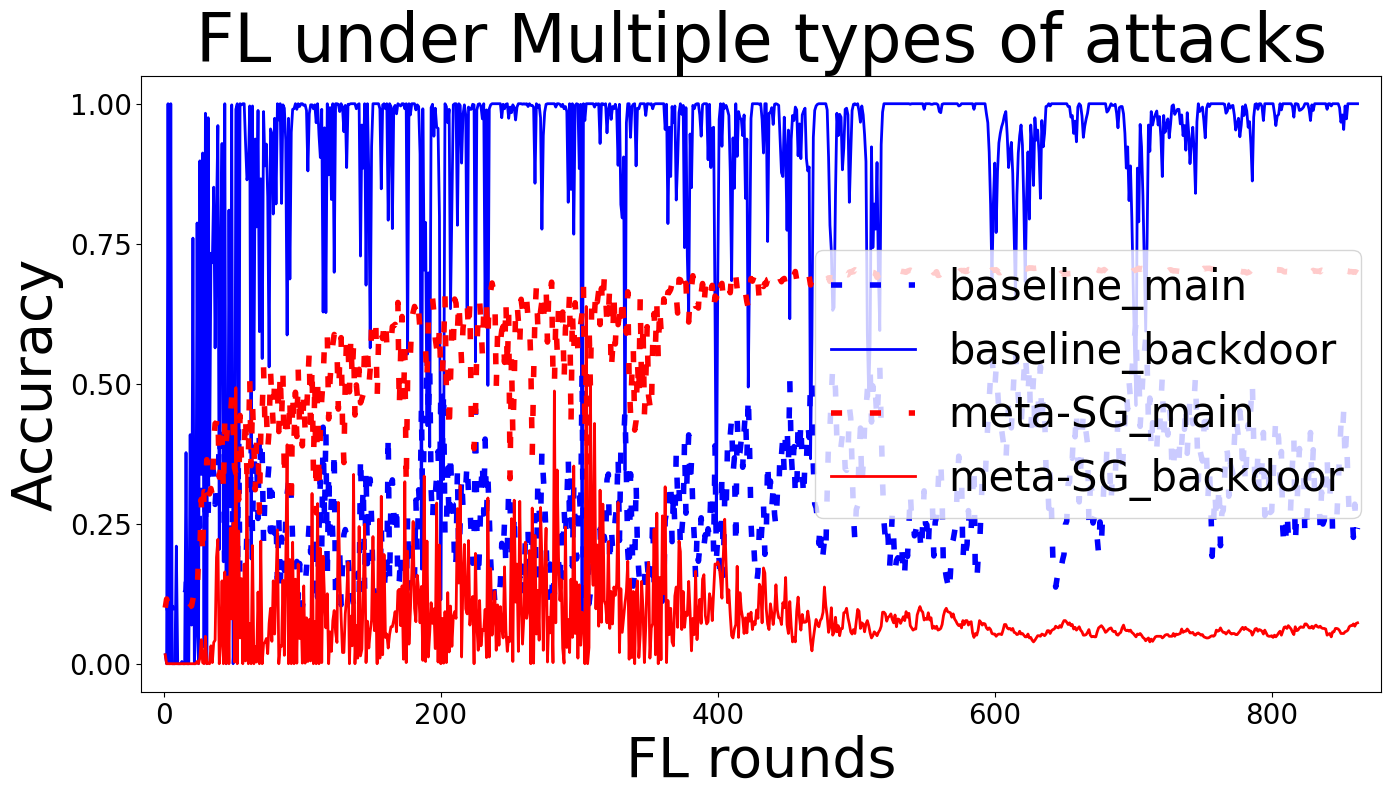}  
   \caption{}
   \end{subfigure}
   \caption{\small{Advantages of the Meta-SG framework against (a) the RL-based model poisoning attack~\cite{li2022learning} on MNIST with $20\%$ malicious devices and (b) a mix of the backdoor attack against FL (BFL)  ~\cite{bagdasaryan2020backdoor} ($5\%$ malicious devices) and the inner product manipulation (IPM) based model poisoning attack~\cite{xie2020fall} ($10\%$ malicious devices) on CIFAR-10. The baseline defense combines the training-stage FLTrust and the post-training Neuron Clipping. }}\label{fig:meta-SG}
   \vspace{-0.7cm}
\end{figure}

As shown in Figure~\ref{fig:meta-SG}(a), meta-SG demonstrates excellent accuracy in federated learning models when facing the RL-based model poisoning attack. Furthermore, as seen in Figure~\ref{fig:meta-SG}(b), meta-SG maintains high model accuracy for unpoisoned data and significantly lowers backdoor accuracy in scenarios involving both backdoor and model poisoning attacks. In contrast, the baseline defense that merely combines a training-stage defense with a post-training defense leads to low model accuracy and fails to shield the FL system from a backdoor attack. {Due to the page limit, more experiment results are deferred to our technical report~\cite{li2023first}.}

\section{Conclusion and Future Work}

In this work, we have proposed a data-driven approach to tackle information asymmetry in adversarial federated learning, which can also be applied to a variety of scenarios in adversarial machine learning, where information asymmetry is prevalent. We have offered a meta-equilibrium solution concept that is computationally tractable and strategically adaptable. In addition, theoretical guarantees on sample complexity and convergence rate have been established under mild assumptions, indicating the computational efficiency of training the meta-equilibrium-based FL defense.

\bibliography{icml2023}
\bibliographystyle{icml2023}

\newpage
\appendix
\onecolumn

\section{Broader Impacts and Limitations}
\textbf{Meta Equilibrium and Information Asymmetry.} Information asymmetry is a prevailing phenomenon arising in a variety of contexts, including adversarial machine learning (e.g. FL discussed in this work), cyber security \citep{manshaei13gamesec}, and large-scale network systems \citep{tao22confluence}. Our proposed meta-equilibrium (\cref{def:meta-se}) offers a data-driven approach tackling asymmetric information structure in dynamic games without Bayesian-posterior beliefs (see \Cref{app:meta-se}). Achieving the strategic adaptation through stochastic gradient descent, the meta-equilibrium is computationally superior to perfect Bayesian equilibrium and better suited for real-world engineering systems involving high-dimensional continuous parameter spaces. It is expected that the meta-equilibrium can also be relevant to other adversarial learning contexts, cyber defense, and decentralized network systems.         

\textbf{First-order Method with Strict Competitiveness.} 
Due to the hardness of the stochastic bilevel optimization problem, we have expanded our search scope with an alternative solution concept that merely involves the first-order necessary conditions for meta-SE. 
Our analytical result relies on the special game structure induced by the strict competitiveness assumption, which essentially ``aligns'' the defender/attacker objectives leveraging the nature of policy gradient, despite them being general-sum. 
Relaxing this assumption allows our framework to deal with a more general class of problems, yet may potentially disrupt the Danskin-type structure of gradient estimation. 
For simplicity of exposition, we neglected the stochastic analysis for the defender policy gradient estimation in the outer loop of the algorithm, the concentration of which depends on the trajectory batch size and attacker-type sample size. We leave the outer loop sample-complexity analysis to future work.

\textbf{Incomplete Universal Defense.} Our aim is to establish a comprehensive framework for universal federated learning defense. This framework ensures that the server remains oblivious to any details pertaining to the environment or potential attackers. Still, it possesses the ability to swiftly adapt and respond to uncertain or unknown attackers during the actual federated learning process. Nevertheless, achieving this universal defense necessitates an extensive attack set through pre-training, which often results in a protracted convergence time toward a meta-policy. Moreover, the effectiveness and efficiency of generalizing from a wide range of diverse attack distributions 
pose additional challenges. Considering these, we confine our experiments in this paper to specifically address a subset of uncertainties and unknowns. This includes 
parameters that determine
attack types, the number of malicious devices, 
the heterogeneity of local data distributions, backdoor triggers, backdoor targets, and other relevant aspects. However, we acknowledge that our focus is not all-encompassing, and there may be other factors that remain unexplored in our research.

\section{Further Justification on Meta Equilibrium}\label{app:meta-se}
\setcounter{equation}{0}
\renewcommand{\theequation}{\Alph{section}\arabic{equation}}
This section offers further justification for the meta-equilibrium, and we argue that meta-equilibrium provides a data-driven approach to address incomplete information in dynamic games. Note that information asymmetry is prevalent in the adversarial machine learning context, where the attacker enjoys an information advantage (e.g., the attacker's type). The proposed meta-equilibrium notion can shed light on these related problems beyond the adversarial FL context. 

We begin with the insufficiency of Bayesian Stackelberg equilibrium \eqref{eq:bse} in handling information asymmetry, a customary solution concept in security studies \cite{tao_info}. One can see from \eqref{eq:bse} that such an equilibrium is of ex-ante type: the defender's strategy is determined before the game starts. It targets an ``representative'' attacker (an average of all types). As the game unfolds, new information regarding the attacker's private type is revealed (e.g., through the global model updates). However, this ex-ante strategy does not enable the defender to handle this emerging information as the game proceeds. Using game theory language, the defender fails to adapt its strategy in the interim stage. 

To create interim adaptability in this dynamic game of incomplete information, one can consider introducing the belief system to capture the defender's learning process on the hidden type. Let $I^t$ be the defender's observations up to time $t$, i.e., $I^t:=(s^k, a_{\mathcal{D}}^k)_{k=1}^t s^{t+1}$. Denote by $\B$ the belief generation operator $b^{t+1}(\xi)=\B[I^t]$. With the Bayesian equilibrium framework, the belief generation can be defined recursively as below
\begin{align}
    b^{t+1}(\xi)=\B[s^t,a^t_\D, b^t]:=\frac{b^{t}(\xi)\pi_\A(a^t_\A|s^t;\xi)\mathcal{T}(s^{t+1}|s^t,a^t_\A,a^t_\D)}{\sum_{\xi'}b^{t}(\xi')\pi_\A(a^t_\A|s^t;\xi')\mathcal{T}(s^{t+1}|s^t,a^t_\A,a^t_\D)}.
    \label{eq:belief}
\end{align}
Since $b^t$ is the defender's belief on the hidden type at time $t$, its belief-dependent Markovian strategy is defined as $\pi_\D(s^t, b^t)$. Therefore, the interim equilibrium, also called Perfect Bayesian Equilibrium (PBE) \cite{fudenberg} is given by a tuple $(\pi^*_\D, \pi^*_\A, \{b^t\}_{t=1}^H)$ satisfying 
\begin{equation}
  \begin{aligned}
    & \pi_\D^*=\arg\max \E_{\xi\sim Q}\mathbb{E}_{\pi_\D, \pi^*_\A}[\sum_{t=1}^H r_\D(s^t, a_\D^t,a_\A^t)b^t(\xi)]\\
    & \pi_\A^*=\arg\max \mathbb{E}_{\pi_\D, \pi_\A}[\sum_{t=1}^H r_\A(s^t, a_\D^t, a_\A^t)], \forall \xi,\\
    & \{b^k\}_{k=1}^H \text{ satisfies } \eqref{eq:belief} \text{ for realized actions and states}.
\end{aligned}\tag{PBE}\label{eq:mpbe}  
\end{equation}
In contrast with \eqref{eq:bse}, this perfect Bayesian equilibrium notion \eqref{eq:mpbe} enables the defender to make good use of the information revealed by the attacker, and subsequently adjust its actions according to the revealed information through the belief generation. From a game-theoretic viewpoint, both \eqref{eq:mpbe} and \eqref{eq:meta-se} create strategic online adaptation: the defender can infer and adapt to the attacker's private type through the revealed information since different types aim at different objectives, hence, leading to different actions. Compared with PBE, the proposed meta-equilibrium notion is better suited for large-scale complex systems where players' decision variables can be high-dimensional and continuous, as argued in the ensuing paragraph. 

To achieve the strategic adaptation, PBE relies on the Bayesian-posterior belief updates, which soon become intractable as the denominator in \eqref{eq:belief} involves integration over high-dimensional space and discretization inevitably leads to the curse of dimensionality. Despite the limited practicality, PBE is inherently difficult to solve even in finite-dimensional cases. It is shown in \cite{bhaskar2016hardness} that the equilibrium computation in games with incomplete information is NP-hard, and how to solve for PBE in dynamic games remains an open problem. Even though there have been encouraging attempts at solving PBE in two-stage games \cite{tao23pot}, it is still challenging to address PBE computation in generic Markov games.


\section{Algorithms}\label{app:algo}
\newcommand{\tp}{\mathsf{T}}
\setcounter{equation}{0}
\renewcommand{\theequation}{\Alph{section}\arabic{equation}}
This section elaborates on meta-learning defense and meta-Stackelberg learning.  To begin with, we first review the policy gradient method \cite{sutton_PG} in RL and its Monte-Carlo estimation. To simplify our exposition, we fix the attacker's policy $\phi$, and then BSMG reduces to a single-agent MDP, where the optimal policy to be learned is the defender's $\theta$. 
\paragraph{Policy Gradient}
The idea of the policy gradient method is to apply gradient ascent to the value function $J_\D$. Following \cite{sutton_PG}, we obtain $\nabla_\theta J_\D:=\E_{\tau\sim q(\theta)}[g(\tau;\theta)]$, where $g(\tau;\theta)=\sum_{t=1}^H \nabla_\theta \log \pi(a_\D^t|s^t;\theta)R(\tau)$ and $R(\tau)=\sum_{t=1}^H \gamma^t r(s^t, a_{\D}^t)$. Note that for simplicity, we suppress the parameter $\phi, \xi$ in the trajectory distribution $q$, and instead view it as a function of $\theta$. In numerical implementations, the policy gradient $\nabla_\theta J_\D$ is replaced by its Monte-Carlo (MC) estimation using sample trajectory. Suppose a batch of trajectories $\{\tau_i\}_{i=1}^{N_b}$, and $N_b$ denotes the batch size, then the MC estimation is 
\begin{equation}
    \hat{\nabla}_\theta J_\D(\theta,\tau):=1/{N_b} \sum_{\tau_i} g(\tau_i;\theta)
    \label{eq:mc-pg}
\end{equation}
 The same deduction also holds for the attacker's problem when fixing the defense $\theta$. 

\paragraph{Meta-Learning FL Defense}
Meta-learning-based defense (meta defense) mainly targets non-adaptive attack methods, where $\pi_\A(\cdot;\phi,\xi)$ is a pre-fixed attack strategy following some rulebook, such as IPM \cite{xie2020fall} and LMP \cite{fang2020local}. In this case, the BSMG reduces to single-agent MDP for the defender, where the transition kernel is determined by the attack method.   Mathematically, the meta-defense problem is given by 
\begin{equation}
    \max_{\theta, \Psi} \E_{\xi\sim Q(\cdot)}[J_\D(\Psi(\theta,\tau), \phi, \xi)]
    \label{eq:meta-defense}
\end{equation}
Since the attack type is hidden from the defender, the adaptation mapping $\Psi$ is usually defined in a data-driven manner. For example, $\Psi(\theta, \tau)$ can be defined as a one-step stochastic gradient update with learning rate $\eta$: $\Psi(\theta, \tau)=\theta+\eta \hat{\nabla} J_\D(\tau_\xi)$  \cite{finn2017model} or a recurrent neural network in \cite{duan2016rl}. This work mainly focuses on gradient adaptation for the purpose of deriving theoretical guarantees in \Cref{app:theory}.  

With the one-step gradient adaptation, the meta-defense problem in \eqref{eq:meta-defense} can be simplified as 
\begin{equation}
    \max_{\theta} \E_{\xi \sim Q(\cdot)}\E_{\tau\sim q(\theta)}[J_\D(\theta+\eta \hat{\nabla}_\theta J_\D(\tau), \phi, \xi)]
    \label{eq:meta-def-sgd}
\end{equation}
Recall that the attacker's strategy is pre-determined, $\phi,\xi$ can be viewed as fixed parameters, and hence, the distribution $q$ is a function of $\theta$. To apply the policy gradient method to  \eqref{eq:meta-def-sgd}, one needs an unbiased estimation of the gradient of the objective function in \eqref{eq:meta-def-sgd}. Consider the gradient computation using the chain rule:
\begin{equation}
   \begin{aligned}
    &\nabla_\theta \E_{\tau\sim q(\theta)}[J_\D(\theta+\eta \hat{\nabla}_\theta J_\D(\tau), \phi, \xi )]\\
    &=\E_{\tau\sim q(\theta)}\{\underbrace{\nabla_\theta J_\D(\theta+\eta\hat{\nabla}_\theta J_\D(\tau), \phi, \xi)(I+\eta \hat{\nabla}^2_\theta J_D(\tau))}_{\text{\ding{172}}}\\
    &\quad  +\underbrace{J_\D(\theta+\eta \hat{\nabla}_\theta J_\D(\tau))\nabla_\theta \sum_{t=1}^H \pi(a^t|s^t;\theta)}_{\text{\ding{173}}}\}.
\end{aligned} 
\label{eq:chain}
\end{equation}
The first term results from differentiating the integrand $J_\D(\theta+\eta \hat{\nabla}_\theta J_\D(\tau), \phi, \xi )$ (the expectation is taken as integration), while the second term is due to the differentiation of $q(\theta)$. One can see from the first term that the above gradient involves a Hessian $\hat{\nabla}^2 J_\D$, and its sample estimate is given by the following. For more details on this Hessian estimation, we refer the reader to \cite{fallah2021convergence}.
\begin{align}
    \hat{\nabla}^2 J_\D(\tau)=\frac{1}{N_b}\sum_{i=1}^{N_b} [ g(\tau_i;\theta)\nabla_\theta \log q(\tau_i;\theta)^\tp+\nabla_\theta g(\tau_i;\theta)]
    \label{eq:hessian-est}
\end{align}
Finally, to complete the sample estimate of $\nabla_\theta \E_{\tau\sim q(\theta)}[J_\D(\theta+\eta \hat{\nabla}_\theta J_\D(\tau), \phi, \xi )]$, one still needs to estimate $\nabla_\theta J_\D(\theta+\eta\hat{\nabla}_\theta J_\D(\tau), \phi, \xi)$ in the first term. To this end, we need to first collect a batch of sample trajectories ${\tau'}$ using the adapted policy $\theta'=\theta+\eta \hat{\nabla}_\theta J_D(\tau)$. Then, the policy gradient estimate of $\hat{\nabla}_\theta J_\D(\theta')$ proceeds as in \eqref{eq:mc-pg}. To sum up, constructing an unbiased estimate of \eqref{eq:chain} takes two rounds of sampling. The first round is under the meta policy $\theta$, which is used to estimate the Hessian \eqref{eq:hessian-est} and to adapt the policy to $\theta'$. The second round aims to estimate the policy gradient $\nabla_\theta J_\D(\theta+\eta\hat{\nabla}_\theta J_\D(\tau), \phi, \xi)$ in the first term in \eqref{eq:chain}.

In the experiment, we employ a first-order meta-learning algorithm called Reptile~\cite{nichol2018first} to avoid the Hessian computation. The gist is to simply ignore the chain rule and update the policy using the gradient $\nabla_\theta J_\D(\theta',\phi,\xi)|_{\theta'=\theta+\eta\hat{\nabla}_\theta J_\D(\tau)}$. Naturally, without the Hessian term, the gradient in this update is biased, yet it still points to the ascent direction as argued in \cite{nichol2018first}, leading to effective meta policy. The advantage of Reptile is more evident in multi-step gradient adaptation. Consider a $l$-step gradient adaptation, the chain rule computation inevitably involves multiple Hessian terms (each gradient step brings a Hessian term) as shown in \cite{fallah2021convergence}. In contrast, Reptile only requires first-order information, and the meta-learning algorithm ($l$-step adaptation) is given by \Cref{algo:meta-rl}. 

\begin{algorithm}[ht]
\begin{algorithmic}[1]
\STATE \textbf{Input: } the type distribution $Q(\xi)$, step size parameters $\kappa,\eta$
\STATE \textbf{Output: }$\theta^T$
\STATE randomly initialize $\theta^0$
\FOR{iteration $t=1$ to $T$}
\STATE Sample a batch $\hat{\Xi}$ of $K$ attack types from $Q(\xi)$;
\FOR{each $\xi\in \hat{\Xi}$}
\STATE $\theta_\xi^{t}(0) \leftarrow \theta^{t}$
\FOR{$k=0$ to $l-1$}
\STATE Sample a batch trajectories ${\tau}$ of the horizon length $H$ under $\theta^{t}_\xi(k)$;
\STATE Evaluate $\hat{\nabla}_\theta J_\D(\theta^t_\xi(k),\tau)$ using MC in \eqref{eq:mc-pg};
\STATE $\theta_\xi^{t}(k+1) \leftarrow \theta_\xi^{t}(k) + \kappa \hat{\nabla}_\theta J_\D(\theta^t,\tau)$
\ENDFOR
\ENDFOR
\STATE Update $\theta^{t+1} \leftarrow \theta^{t}+1/K \sum_{\xi\in \hat{\Xi}}(\theta^t_\xi(l)-\theta^t)$;
\ENDFOR
\end{algorithmic}
 \caption{Reptile Meta-Reinforcement Learning with $l$-step adaptation}
 \label{algo:meta-rl}
\end{algorithm}

\paragraph{Meta-Stackelberg Learning}
Recall that in meta-SE, the attacker's policy $\phi_\xi^*$ is not pre-fixed, instead, it is the best response to the defender's adapted policy.  To obtain this best response, one needs alternative training: fixing the defense policy, and applying gradient ascent to the attacker's problem until convergence. It should be noted that the proposed meta-SL utilizes the unbiased gradient estimation in \eqref{eq:hessian-est}, which paves the way for theoretical analysis in \Cref{app:theory}. Yet, we turn to the Reptile to speed up pre-straining in the experiments. We present both algorithms in \Cref{algo:meta-sl}, and only consider one-step adaptation for simplicity. The multi-step version is a straightforward extension of \Cref{algo:meta-sl}.   
\begin{algorithm}
 \begin{algorithmic}[1]
     \STATE \textbf{Input: } the type distribution $Q(\xi)$, initial defense meta policy $\theta^0$, pre-trained attack policies $\{\phi_\xi^{0}\}_{\xi\in \Xi}$, step size parameters $\kappa_\D$, $\kappa_\A$, $\eta$, and iterations numbers $N_\A, N_\D$;
     \STATE \textbf{Output: }$\theta^{N_\D}$
     \FOR{iteration $t=0$ to $N_\D-1$}
     \STATE Sample a batch $\hat{\Xi}$ of $K$ attack types from $Q(\xi)$;
     \FOR{each $\xi\in \hat{\Xi}$}
     \STATE Sample a batch of trajectories using $\phi^t$ and $\phi_\xi^t$;
     \STATE Evaluate $\hat{\nabla}_\theta J_D(\theta^t, \phi^t_\xi, \xi)$ using \eqref{eq:mc-pg};
     \STATE Perform one-step adaptation $\theta_\xi^t\gets \theta^t+\eta \hat{\nabla}_\theta J_D(\theta^t_\xi(k), \phi^t_\xi, \xi) $;
     \STATE $\phi^t_\xi(0)\gets \phi^t_\xi$;
     \FOR{$k=0,\ldots, N_\A-1$}
     \STATE Sample a batch of trajectories using $\theta_\xi^t$ and $\phi^t_\xi(k)$;
     \STATE $\phi^{t}_\xi(k+1)\gets \phi^{t}_\xi(k)+\kappa_\A \hat{\nabla}_{\phi} J_\A(\theta_\xi^t, \phi_\xi^t(k), \xi)$;
     \ENDFOR
     \IF{Reptile}
     \STATE Sample a batch of trajectories using $\theta_\xi^t$ and $\phi_\xi^t(N_\A)$;
     \STATE Evaluate $\hat{\nabla}J_D(\xi):= \hat{\nabla}_\theta J_\D(\theta, \phi^t_\xi(N_\A), \xi)|_{\theta=\theta_\xi^t}$ using \eqref{eq:mc-pg};
     \ELSE
     \STATE Sample a batch of trajectories using $\theta^t$ and $\phi^t_\xi(N_\A)$;
     \STATE Evaluate the Hessian using \eqref{eq:hessian-est};
     \STATE Sample a batch of trajectories using $\theta^t_\xi$ and $\phi^t_\xi(N_\A)$; 
     \STATE Evaluate $\hat{\nabla}J_D(\xi):= \hat{\nabla}_\theta J_\D(\theta_\xi^t, \phi^t_\xi(N_\A), \xi)$  using \eqref{eq:chain};
     \ENDIF
     \STATE $\bar{\theta}^t_\xi\gets \theta^t+\kappa_\D \hat{\nabla}J_D(\xi)$;
     \ENDFOR
     \STATE $\theta^{t+1}\gets   \theta^t+1/{K} \sum_{\xi\sim \hat{\Xi}} (\bar{\theta}^t_\xi-\theta_t) $, $\phi^{t+1}_\xi\gets  \phi^t_\xi(N_\A)$;
     \ENDFOR

 \end{algorithmic}
 \caption{(Reptile) Meta-Stackelberg Learning with one-step adaptation}
 \label{algo:meta-sl}
\end{algorithm}

\section{Theoretical Results}
\label{app:theory}

\subsection{Existence of Meta-SG}

\begin{theorem}[\Cref{thm:existence}]
 Under the conditions that $\Theta$ and $\Phi$ are compact and convex, the meta-SG admits at least one meta-FOSE.
\end{theorem}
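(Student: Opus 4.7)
The plan is to recast the meta-FOSE conditions (with $\varepsilon = 0$) as a variational inequality (VI) on the product space $\Theta \times \Phi^{|\Xi|}$, and then invoke the Hartman–Stampacchia theorem. Since both $\Theta$ and $\Phi$ are compact and convex, so is the product, placing us squarely in the finite-dimensional VI setting. The key observation is that the FOSE conditions are nothing more than the first-order necessary optimality conditions for a multi-block Nash-like equilibrium, and such conditions always admit a solution on a compact convex domain whenever the underlying gradient field is continuous.

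First I would define the pseudo-gradient vector field $F: \Theta \times \Phi^{|\Xi|} \to \mathbb{R}^{d_\theta + |\Xi|d_\phi}$ by stacking one defender block and $|\Xi|$ attacker blocks,
\begin{equation*}
F_\D(\theta,\phi) = -\nabla_\theta \E_{\xi \sim Q}\LL_\D(\theta,\phi_\xi,\xi), \qquad F_{\A,\xi}(\theta,\phi) = -\nabla_\phi \LL_\A(\theta,\phi_\xi,\xi),\; \forall \xi \in \Xi.
\end{equation*}
Under the standing twice-continuous-differentiability and Lipschitz-smoothness assumptions on $\LL_\D$ and $\LL_\A$, the map $F$ is continuous on the compact convex domain. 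I would then apply the Hartman–Stampacchia theorem (equivalently, Brouwer's fixed-point theorem applied to the projected-gradient self-map $z \mapsto P_{\Theta\times\Phi^{|\Xi|}}(z - \alpha F(z))$ for sufficiently small $\alpha > 0$) to obtain $(\theta^*,\{\phi^*_\xi\})$ with $\langle F(\theta^*,\phi^*), z - (\theta^*,\phi^*)\rangle \geq 0$ for every $z$ in the domain. Because the feasible set is a Cartesian product, I can probe this inequality along a single block at a time: choosing $z = (\theta, \phi^*)$ with arbitrary $\theta \in \Theta$ yields $\langle \nabla_\theta \E_\xi \LL_\D(\theta^*,\phi^*_\xi,\xi),\, \theta - \theta^*\rangle \leq 0$, while letting $z$ vary only in the $\phi_\xi$-slot gives $\langle \nabla_\phi \LL_\A(\theta^*,\phi^*_\xi,\xi),\, \phi - \phi^*_\xi\rangle \leq 0$ for each $\xi \in \Xi$. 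Restricting the test directions further to $\Theta \cap B(\theta^*)$ and $\Phi \cap B(\phi^*_\xi)$ only sharpens the feasible region, so the conditions of \Cref{def:meta-fose} are satisfied with $\varepsilon = 0$.

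The main obstacle is the nonconvexity of the value functions: the pseudo-gradient $F$ is \emph{not} monotone, so standard arguments based on monotone-operator theory or on the existence of saddle points of a convex–concave Lagrangian collapse, and one cannot generally identify the VI solution with a true Stackelberg equilibrium in the sense of \Cref{def:meta-se}. What salvages existence is that FOSE only demands a first-order stationarity certificate, which is exactly what the VI encodes; continuity of $F$ on a compact convex set is all that is needed. A secondary subtlety worth flagging is the per-$\xi$ appearance of the defender's condition in \Cref{def:meta-fose}: I interpret it in the expectation-over-$\xi$ sense consistent with $\nabla_\theta V$ from \Cref{lem:liplemma}, so that the existence statement matches the very gradient that \Cref{algo:meta-sl} drives to zero in \Cref{thm:main}; without this alignment one would have to argue that a shared $\theta^*$ is simultaneously stationary against every slice $\LL_\D(\cdot,\phi^*_\xi,\xi)$, a substantially stronger requirement whose existence would need a more delicate Kakutani-style fixed-point construction.
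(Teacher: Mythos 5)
Your proposal is correct, and it reaches the same conclusion by a genuinely different (though cousin) route. The paper also reduces existence to Brouwer's fixed-point theorem, but via a regularized best-response map: it adds a proximal penalty $-\frac{\gamma_c}{2}\|\theta-\theta'\|^2$ (and likewise for each $\phi_\xi$) to make the auxiliary utilities strongly concave, takes the resulting single-valued continuous $\argmax$ self-map on $\Theta\times\Phi^{|\Xi|}$, extracts a fixed point, and observes that at the fixed point the gradient of the regularized objective coincides with that of the original, yielding the first-order conditions. You instead linearize at the candidate point and solve the variational inequality directly via Hartman--Stampacchia (equivalently, Brouwer on the projected-gradient map), then probe the product structure block by block. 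What your route buys is economy of hypotheses: continuity of the gradient field on a compact convex set is literally all that is used, with no need to choose $\gamma_c$ large enough to dominate the weak-concavity modulus or to invoke continuity of a parametric $\argmax$. What the paper's route buys is a direct connection to the quasi-Nash existence machinery it cites (Nouiehed et al.), where the proximal best-response map is the standard object. Two remarks in your favor: your observation that monotonicity of the pseudo-gradient is unavailable but unnecessary is exactly the right thing to flag, and your resolution of the per-$\xi$ versus expectation-over-$\xi$ reading of the defender's condition in \Cref{def:meta-fose} matches what the paper's own proof actually establishes --- its fixed point is stationary for $\ell_\D(\theta,\phi)=\E_{\xi\sim Q}\LL_\D(\theta,\phi_\xi,\xi)$, i.e.\ the averaged condition, not the per-$\xi$ one, so you are not proving anything weaker than the paper does. (Minor nit: any $\alpha>0$ suffices for the fixed-point/VI equivalence of the projected-gradient map; smallness of $\alpha$ is only needed for algorithmic convergence, not for existence.)
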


\begin{proof}
 Clearly, $\Theta \times \Phi^{|\Xi|}$ is compact and convex, let $\phi \in \Phi^{|\Xi|}, \phi_\xi \in \Phi$ be the (type-aggregated) attacker's strategy, since the  consider twice continuously differentiable utility functions $\ell_\D(\theta, \phi) := \E_{\xi \sim Q} \LL_\D (\theta, \phi_\xi, \xi)$ and $\ell_\xi (\theta, \phi) := \LL_\A(\theta, \phi_\xi, \xi)$ for all $\xi \in \Xi$. 
 Then, there exists a constant $\gamma_c > 0$, such that the auxiliary utility functions:
 \begin{equation}\label{auxiliaryutility}
\begin{aligned}
      \tilde{\ell}_\D (\theta; (\theta^{\prime}, \phi^{\prime}))  & \equiv  \ell_\D (\theta, \phi)  - \frac{\gamma_c}{2} \| \theta -  \theta^{\prime}\|^2
      \\  \tilde{\ell}_\xi (\phi_\xi ; ( \theta^{\prime}, \phi^{\prime})& \equiv \ell_\xi(\theta^{\prime}, (\phi_{\xi }, \phi^{\prime}_{-\xi}) )   - \frac{\gamma_c}{2} \| \phi_\xi -  \phi^{\prime}_\xi\|^2 \quad \forall \xi \in \Xi
\end{aligned}
\end{equation}
are $\gamma_c$-strongly concave in spaces $\theta \in \Theta$, $\phi_\xi \in \Phi$ for all $\xi \in \Xi$, respectively for any fixed $(\theta^{\prime}, \phi^{\prime}) \in \Theta \times \Phi^{|\Xi|}$.

Define the self-map $h: \Theta \times \Phi^{|\Xi|} \to \Theta \times \Phi^{|\Xi|}$ with $h(\theta^{\prime},\phi^{\prime}) \equiv ( \bar{\theta}(\theta^{\prime},\phi^{\prime}), \bar{\phi}(\theta^{\prime},\phi^{\prime}))$, where  
\begin{align*}
      \bar{\theta}(\theta^{\prime},\phi^{\prime})  = \argmax_{\theta \in \Theta}  \tilde{\ell}_\D (\theta, \phi^{\prime} ), 
      \quad \quad
     \bar{\phi}_\xi( \theta^{\prime},\phi^{\prime})   = \argmax_{\phi_\xi \in \Phi}  \tilde{\ell}_\xi (\theta^{\prime}, (\phi_{\xi}, \phi^{\prime}_{-\xi})) . 
\end{align*}
Due to compactness, $h$ is well-defined. 
By strong concavity of $ \tilde{\ell}_\D (\cdot; (\theta^{\prime},\phi^{\prime}))$ and $\tilde{\ell}_\xi (\cdot; (\theta^{\prime},\phi^{\prime}))$, it follows that $\bar{\theta}, \bar{\phi }$ are continuous self-mapping from $\Theta \times \Phi^{|\Xi|}$ to itself. By Brouwer's fixed point theorem, there exists at least one $(\theta^*, \phi^*) \in \Theta \times \Phi^{|\Xi|}$ such that $h(\theta^*, \phi^*) = (\theta^*, \phi^*)$. Then, one can verify that $(\theta^*, \phi^*)$ is a meta-FOSE of the meta-SG with utility function $\ell_\D$ and $\ell_\xi$, $\xi \in \Xi$, in view of the following inequality
\begin{align*}
   \langle \nabla_{\theta} \tilde{\ell}_\D (\theta^*; (\theta^*, \phi^*)) ,  \theta  - \theta^* \rangle & = \langle \nabla_{\theta} \ell_\D ( \theta^*, \phi^* )  ,  \theta  - \theta^* \rangle  \\
 \langle \nabla_{\phi_\xi} \tilde{\ell}_\xi (\theta^*; (\theta^*, \phi^*)) ,  \phi_\xi  - \phi^*_\xi \rangle & = \langle \nabla_{\phi_\xi } \ell_\xi ( \theta^*, \phi^* )  ,  \phi_\xi  - \phi^*_\xi \rangle , 
\end{align*}
therefore, the equilibrium conditions for meta-SG with utility functions $\tilde{\ell}_\D$ and $\{\tilde{\ell}_\xi\}_{\xi \in \Xi}$ are the same as with utility functions $\ell_\D$ and $\{\ell_\xi\}_{\xi \in \Xi}$, hence the claim follows.
\end{proof}

\subsection{Proofs: Non-Asymptotic Analysis}

In the sequel, we make the following smoothness assumptions for every attack type $\xi \in \Xi$. In addition, we assume, for analytical simplicity, that all types of attackers are unconstrained, i.e., $\Phi$ is the Euclidean space with proper finite dimension.



     


 \begin{assumption}[($\xi$-wise) Lipschitz smoothness] \label{asslip}
 The functions $\mathcal{L}_{\D}$ and $\LL_\A$ are continuously diffrentiable in both $\theta$ and $\phi$. Furthermore, there exists constants $L_{11}$, $L_{12}, L_{21}$, and $L_{22}$ such that
 for all $\theta, \theta_1, \theta_2 \in \Theta$ and $\phi, \phi_1, \phi_2 \in \Phi$, we have, for any $\xi \in \Xi$,
\begin{align}
    \left\|\nabla_{\theta} \mathcal{L}_\D  \left(\theta_{1}, \phi, \xi\right)-\nabla_{\theta} \mathcal{L}_\D\left(\theta_{2}, \phi, \xi\right)\right\| &  \leq L_{11}\left\|\theta_{1}-\theta_{2}\right\|  \label{lip1}
    \\ 
    \left\|\nabla_{\phi} \mathcal{L}_\D \left(\theta, \phi_{1}, \xi\right)-\nabla_{\phi} \mathcal{L}_\D \left(\theta, \phi_{2}, \xi\right)\right\| & \leq L_{22}\left\|\phi_{1}-\phi_{2}\right\|  \label{lip2}
     \\ 
     \left\|\nabla_{\theta} \mathcal{L}_\D \left(\theta, \phi_{1}, \xi\right)-\nabla_{\theta} \mathcal{L}_\D \left(\theta, \phi_{2}, \xi\right)\right\| & \leq L_{12}\left\|\phi_{1}-\phi_{2}\right\|  \label{lip3}
    \\ 
    \left\|\nabla_{\phi} \mathcal{L}_\D \left(\theta_{1}, \phi, \xi\right)-\nabla_{\phi} \mathcal{L}_\D \left(\theta_{2}, \phi, \xi\right)\right\|  & \leq L_{12}\left\|\theta_{1}-\theta_{2}\right\|  \label{lip4}  
    \\ 
    \| \nabla_{\phi} \LL_\A (\theta, \phi_1, \xi ) - \nabla_{\phi} \LL_\A (\theta, \phi_2, \xi)\| & \leq  L_{21} \| \phi_1 - \phi_2\| . \label{lip5}
\end{align}
\end{assumption}

\begin{lemma}[Implicit Function Theorem (IFT) for Meta-SG] \label{lemma:ift}
 Suppose for $(\bar{\theta}, \bar{\phi}) \in \Theta \times \Phi^{|\Xi|}$, $\xi \in \Xi$ we have $\nabla_{\phi} \LL_\A (\bar{\theta}, \bar{\phi}, \xi)= 0 $ the Hessian $\nabla^2_{\phi} \LL_\A (\bar{\theta}, \bar{\phi}, \xi)$ is non-singular. Then, there exists a neighborhood $B_{\varepsilon}(\bar{\theta}), \varepsilon > 0$ centered around $\bar{\theta}$ and a $C^1$-function $\phi(\cdot): B_{\varepsilon}(\bar{\theta}) \to \Phi^{|\Xi|}$ such that near $(\bar{\theta}, \bar{\phi})$ the solution set $\{ (\theta, \phi) \in \Theta \times \Phi^{|\Xi|}: 
 \nabla_{\phi} \LL_\A (\theta, \phi, \xi)= 0 \}$ is a $C^1$-manifold locally near $(\bar{\theta}, \bar{\phi})$. The gradient $\nabla_{\theta} \phi(\theta)$ is given by $- ( \nabla^2_{ \phi}\LL_\A (\theta, \phi, \xi))^{-1}  \nabla^2_{ \phi \theta} \LL_\A (\theta, \phi, \xi)$.

\end{lemma}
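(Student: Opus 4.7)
The plan is to invoke the classical Implicit Function Theorem for the map that captures the attacker's first-order optimality condition, and then differentiate this relation to obtain the stated gradient formula. Concretely, I would fix a type $\xi \in \Xi$ and define the auxiliary map
\begin{equation*}
F: \Theta \times \Phi^{|\Xi|} \to \Phi^{|\Xi|}, \qquad F(\theta, \phi) := \nabla_{\phi} \LL_\A(\theta, \phi, \xi).
\end{equation*}
By hypothesis, $F(\bar{\theta}, \bar{\phi}) = 0$. The Lipschitz-smoothness bundle in \Cref{asslip} together with the earlier assumption that $\LL_\A$ is twice continuously differentiable guarantees that $F$ is itself a $C^1$ map on a neighborhood of $(\bar{\theta}, \bar{\phi})$, so the hypotheses of the classical IFT are in place.

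Next I would verify the nonsingularity hypothesis. The partial Jacobian of $F$ with respect to $\phi$ at $(\bar{\theta}, \bar{\phi})$ equals $\nabla^2_\phi \LL_\A(\bar{\theta}, \bar{\phi}, \xi)$, which is assumed invertible. Applying the classical IFT therefore yields an open neighborhood $B_\varepsilon(\bar{\theta}) \subseteq \Theta$ and a $C^1$ map $\phi(\cdot): B_\varepsilon(\bar{\theta}) \to \Phi^{|\Xi|}$ with $\phi(\bar{\theta}) = \bar{\phi}$ and $F(\theta, \phi(\theta)) = 0$ for every $\theta \in B_\varepsilon(\bar{\theta})$. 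The same statement also asserts that the zero-level set of $F$ is locally the graph of $\phi$, which is exactly the claim that $\{(\theta,\phi) : \nabla_\phi \LL_\A(\theta, \phi, \xi) = 0\}$ is a $C^1$-manifold in a neighborhood of $(\bar{\theta}, \bar{\phi})$.

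Finally, to extract the gradient formula, I would differentiate the defining identity $F(\theta, \phi(\theta)) = 0$ with respect to $\theta$ using the chain rule, obtaining
\begin{equation*}
\nabla^2_{\phi\theta} \LL_\A(\theta, \phi(\theta), \xi) + \nabla^2_\phi \LL_\A(\theta, \phi(\theta), \xi)\, \nabla_\theta \phi(\theta) = 0.
\end{equation*}
Because $\nabla^2_\phi \LL_\A$ remains invertible in a possibly smaller neighborhood of $(\bar{\theta}, \bar{\phi})$ by continuity of the Hessian and of matrix inversion on the set of invertible matrices, one may solve for $\nabla_\theta \phi(\theta)$ to recover the stated expression $-(\nabla^2_\phi \LL_\A)^{-1} \nabla^2_{\phi\theta} \LL_\A$. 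There is no genuine obstacle in this argument beyond citing the classical IFT; the only subtlety worth flagging is ensuring the regularity required by IFT, namely that $\nabla_\phi \LL_\A$ is $C^1$ jointly in $(\theta,\phi)$, which follows from the twice continuous differentiability already postulated together with the mixed-partial Lipschitz bounds in \Cref{asslip}.
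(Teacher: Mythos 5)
Your proposal is correct: it is the canonical application of the classical Implicit Function Theorem to the map $F(\theta,\phi)=\nabla_\phi \LL_\A(\theta,\phi,\xi)$, whose partial Jacobian in $\phi$ is the Hessian assumed nonsingular, followed by differentiating the identity $F(\theta,\phi(\theta))=0$ to obtain the gradient formula. The paper states this lemma without proof, treating it as a standard fact, so your argument supplies exactly the omitted justification and there is nothing to compare beyond that.
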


\begin{lemma}\label{liplemma}
    Under assumptions \ref{asslip}, \ref{plass},  there exists $\{ \phi_\xi: \phi_\xi \in \arg\max_{\phi}  \LL_{\A} (\theta, \phi, \xi) \}_{\xi \in \Xi}$, such that 
    $$\nabla_{\theta}  V(\theta) =  \nabla_{\theta} \mathbb{E}_{\xi \sim Q, \tau \sim q} J_\D (\theta + \eta \hat{\nabla}_{\theta} J_{\D}(\tau), \phi_\xi, \xi).$$
    Moreover, the function $V(\theta)$ is $L$-Lipschitz-smooth, where $L = L_{11} + \frac{L_{12}L_{21}}{\mu}$
    \begin{equation*}
        \|  \nabla_{\theta} V(\theta_1) -  \nabla_{\theta} V(\theta_2) \| \leq L \|\theta_1 - \theta_2 \|.
    \end{equation*}
\end{lemma}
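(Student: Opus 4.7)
The plan is to prove the two claims sequentially: first the Danskin-type gradient identity for $V$, then the Lipschitz-smoothness bound. The core insight is that Assumption~\ref{ass:sc} (strict competitiveness) collapses the apparently general-sum bilevel gradient into a pure envelope, and Assumption~\ref{plass} (PL) then lets us control the variation of the attacker's best response in $\theta$ without strong convexity.

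\textbf{Step 1: Envelope identity.} Starting from $V(\theta) = \mathbb{E}_{\xi \sim Q}\,\LL_\D(\theta,\phi^*_\xi(\theta),\xi)$ and applying the total derivative formula, one obtains $\nabla_\theta V(\theta) = \mathbb{E}_\xi\bigl[\nabla_\theta \LL_\D(\theta,\phi^*_\xi,\xi) + (\nabla_\theta \phi^*_\xi(\theta))^{\top}\nabla_\phi \LL_\D(\theta,\phi^*_\xi,\xi)\bigr]$, where the implicit gradient $\nabla_\theta \phi^*_\xi(\theta)$ is well-defined by Lemma~\ref{lemma:ift} in neighborhoods where $\nabla^2_\phi \LL_\A$ is non-singular. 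To eliminate the second term I would use Assumption~\ref{ass:sc}: since $r_\D = c\,r_\A + d$ with $c<0$, linearity of expectation propagates the affine relation to the cumulative utilities $J_\D = c J_\A + d_H$, and hence to $\LL_\D(\theta,\phi,\xi) = c\,\LL_\A(\theta,\phi,\xi) + d_H$ for a horizon-dependent constant $d_H$. Differentiating in $\phi$ gives $\nabla_\phi \LL_\D = c\,\nabla_\phi \LL_\A$. In the unconstrained $\Phi$ setting, any $\phi^*_\xi \in \arg\max_\phi \LL_\A(\theta,\phi,\xi)$ satisfies the first-order condition $\nabla_\phi \LL_\A(\theta,\phi^*_\xi,\xi)=0$, so $\nabla_\phi \LL_\D(\theta,\phi^*_\xi,\xi)=0$. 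This kills the implicit term and yields the claimed identity.

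\textbf{Step 2: Lipschitz continuity of the best-response selection.} The plan is to extract, for each $\xi$, a measurable selection $\theta \mapsto \phi^*_\xi(\theta)$ that is $(L_{21}/\mu)$-Lipschitz. Fix $\theta_1,\theta_2$. Since $\nabla_\phi\LL_\A(\theta_2,\phi^*_\xi(\theta_2),\xi)=0$, combining the cross-Lipschitz bound inherited from \eqref{lip4} via strict competitiveness with the smoothness estimate \eqref{lip5} gives $\|\nabla_\phi \LL_\A(\theta_1,\phi^*_\xi(\theta_2),\xi)\| \lesssim \|\theta_1-\theta_2\|$. Plugging this bound into the PL inequality of Assumption~\ref{plass} at the point $(\theta_1,\phi^*_\xi(\theta_2))$ controls the optimality gap by $\|\theta_1-\theta_2\|^2/(2\mu)$, and the quadratic-growth consequence of PL (the standard PL $\Rightarrow$ error-bound implication in Karimi--Nutini--Schmidt) then converts this optimality gap into $\|\phi^*_\xi(\theta_1)-\phi^*_\xi(\theta_2)\| \le (L_{21}/\mu)\,\|\theta_1-\theta_2\|$, tracking the strict-competitiveness constant $c$ carefully so it can be absorbed into $\mu$.

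\textbf{Step 3: Assembling the smoothness constant.} Using the envelope identity from Step~1 and adding and subtracting $\nabla_\theta \LL_\D(\theta_2,\phi^*_\xi(\theta_1),\xi)$ inside the expectation,
\begin{align*}
\|\nabla_\theta V(\theta_1)-\nabla_\theta V(\theta_2)\| &\le \mathbb{E}_\xi\bigl\|\nabla_\theta \LL_\D(\theta_1,\phi^*_\xi(\theta_1),\xi)-\nabla_\theta \LL_\D(\theta_2,\phi^*_\xi(\theta_1),\xi)\bigr\| \\
&\quad+ \mathbb{E}_\xi\bigl\|\nabla_\theta \LL_\D(\theta_2,\phi^*_\xi(\theta_1),\xi)-\nabla_\theta \LL_\D(\theta_2,\phi^*_\xi(\theta_2),\xi)\bigr\|.
\end{align*}
The first summand is bounded by $L_{11}\|\theta_1-\theta_2\|$ using \eqref{lip1}, and the second by $L_{12}\,\|\phi^*_\xi(\theta_1)-\phi^*_\xi(\theta_2)\| \le (L_{12}L_{21}/\mu)\|\theta_1-\theta_2\|$ using \eqref{lip3} together with Step~2. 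Summing gives $L = L_{11}+L_{12}L_{21}/\mu$, completing the proof.

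\textbf{Main obstacle.} The delicate point is Step~2. Under PL alone the best response need not be unique, so one must argue the existence of a Lipschitz \emph{selection} rather than a Lipschitz function. The cleanest way is to combine the local IFT from Lemma~\ref{lemma:ift} on the set where $\nabla^2_\phi \LL_\A$ is non-singular with a global patching argument driven by the PL-induced error-bound inequality; keeping the strict-competitiveness factor $|c|$ consistent across these two regimes (so that it does not spuriously appear in the final constant $L_{11}+L_{12}L_{21}/\mu$) is the main bookkeeping nuisance.
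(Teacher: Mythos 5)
Your proposal is correct and follows essentially the same route as the paper's proof: a Danskin-type envelope identity in which strict competitiveness forces $\nabla_\phi \LL_\D(\theta,\phi^*_\xi,\xi)=0$ at any stationary point of $\LL_\A$, a $(L_{21}/\mu)$-Lipschitz best-response selection obtained by evaluating the PL inequality at $(\theta_1,\phi^*_\xi(\theta_2))$ and invoking its quadratic-growth consequence, and the final triangle-inequality split bounded via \eqref{lip1} and \eqref{lip3} to get $L = L_{11} + L_{12}L_{21}/\mu$. The only cosmetic difference is in Step 1, where you derive the vanishing of $\nabla_\phi\LL_\D$ from the pointwise affine relation $\LL_\D = c\,\LL_\A + d_H$, whereas the paper routes the same conclusion through the policy-gradient representation and the zero-mean score-function identity to absorb the additive constant $d$ — the two arguments are equivalent.
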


\begin{proof}[Proof of Lemma \ref{liplemma}]
    First, we show that for any $\theta_1, \theta_2 \in \Theta, \xi \in \Xi$, and $\phi_1 \in \argmax_{\phi} \LL_\A (\theta_1, \phi, \xi) $, there exists $\phi_2 \in \argmax_{\phi} \LL_\A (\theta_2 , \phi, \xi) $ such that $\|\phi_1 - \phi_2\| \leq \frac{L_{12}}{\mu} \|\theta_1 - \theta_2\|$.
    Indeed, based on smoothness assumption \eqref{lip5} and \eqref{lip4},
    \begin{equation*}
    \begin{aligned}
        \|\nabla_{\phi} \LL_\A (\theta_1, \phi_1, \xi) - \nabla_{\phi} \LL_\A(\theta_2, \phi_1, \xi )\| \leq  L_{21} \| \theta_1 - \theta_2\| ,  
    \\  \|\nabla_{\phi} \LL_\D (\theta_1, \phi_1, \xi) - \nabla_{\phi} \LL_\D(\theta_2, \phi_1, \xi)\| \leq  L_{12} \| \theta_1 - \theta_2\| .
        \end{aligned}
    \end{equation*}
    Since $\phi_2 \in \argmax_{\phi} \LL_\A (\theta_2 , \phi, \xi)$,  $\nabla_{\phi} \LL_\A (\theta_2, \phi_2, \xi ) = 0$. 
    Apply PL condition to $\nabla_{\phi} \LL_\A (\theta, \phi_2, \xi)$, 
    \begin{equation*}
     \begin{aligned}
         \max_{\phi} \LL_\A (\theta_1, \phi, \xi) -  \LL_\A (\theta_1, \phi_2, \xi) & \leq \frac{1}{2 \mu} \| \nabla_{\phi} \LL_\A (\theta_1, \phi_2, \xi)\|^2 
         \\ & = \frac{1}{2 \mu} \| \nabla_{\phi} \LL_\A (\theta_1, \phi_2, \xi) - \nabla_{\phi} \LL_\A (\theta_2, \phi_2, \xi ) \|^2 
         \\ & \leq  \frac{L^2_{21}}{2\mu} \|\theta_1 - \theta_2\|^2 \quad \quad \text{ by \eqref{lip5}.}
     \end{aligned}
    \end{equation*}
    Since PL condition implies quadratic growth, we also have 
    \begin{equation*}
         \LL_\A (\theta_1, \phi_1, \xi) - \LL_\A (\theta_1, \phi_2, \xi) \geq \frac{\mu}{2} \| \phi_1 - \phi_2\|^2.
    \end{equation*}
    Combining the two inequalities above we obtain the Lipschitz stability for $\phi^*_\xi(\cdot)$, i.e., $$ \| \phi_1 - \phi_2 \|\leq \frac{L_{21}}{\mu}\| \theta_1 - \theta_2\|. $$
    Second, show that $\nabla_{\theta} V(\theta)$ can be directly evaluated at $\{\phi^*_\xi \}_{\xi \in \Xi}$. Inspired by Danskin's theorem, we first made the following argument, consider the definition of directional derivative. Let $\ell(\theta,\phi):=\nabla_\theta \mathbb{E}_{\xi,\tau} J_\D(\theta+\eta \hat{\nabla} J_\D(\tau), \xi)$. For a constant $\tau$ and an arbitrary direction $d$, 
\begin{align*}
    & \quad \ell(\theta+\tau d, \phi^*(\theta+\tau d))- \ell(\theta, \phi^*(\theta)))\\
    &=  \ell(\theta+\tau d, \phi^*(\theta+\tau d))- \ell(\theta+\tau d, \phi^*(\theta))
    +\ell(\theta+\tau d, \phi^*(\theta))- \ell(\theta, \phi^*(\theta))\\
    &= \nabla_\phi \ell(\theta+\tau d, \phi^*(\theta))^\top\underbrace{[\phi^*(\theta+\tau d)-\phi^*(\theta))]}_{\Delta \phi} + o(\Delta \phi^2)\\
    &+ \tau \nabla_\theta \ell(\theta, \phi^*(\theta))^T d + o(d^2).
\end{align*}
Hence, a sufficient condition for the first equation is $\nabla_\phi \ell(\theta+\tau d, \phi^*(\theta))=0$, meaning that $\ell_D(\theta, \phi)$ and $\LL_\A(\theta, \phi, \xi)$ share the first-order stationarity at every $\phi$ when fixing $\theta$. Indeed, by \Cref{lemma:ift}, we have, the gradient is locally determined by 
 \begin{equation*}
     \begin{aligned}
        \nabla_{\theta}V &  = \E_{\xi \sim Q } [\nabla_{\theta} \LL_\D (\theta, \phi_\xi, \xi) + (\nabla_{\theta} \phi_\xi(\theta))^{\top}\nabla_{\phi}\LL_\D(\theta, \phi_\xi, \xi)] \\
        & = \E_{\xi \sim Q } \left[\nabla_{\theta} \LL_\D (\theta, \phi_\xi, \xi) - [( \nabla^2_{ \phi}\LL_\A (\theta, \phi, \xi))^{-1}  \nabla^2_{ \phi \theta} \LL_\A (\theta, \phi, \xi)]^{\top}\nabla_{\phi}\LL_\D(\theta, \phi_\xi, \xi) \right] . 
     \end{aligned}
 \end{equation*}

 Given a trajectory $\tau:=(s^1,a_\D^t, a_\A^t, \ldots, a_\D^{H}, a_\A^H, s^{H+1})$, let $R_\D(\tau, \xi):=\sum_{t=1}^{H} \gamma^{t-1}r_\D(s_t, a_t, \xi)$ and $R_\D(\tau, \xi):=\sum_{t=1}^{H} \gamma^{t-1}r_\D(s_t, a_t, \xi)$. Denote by $\mu(\tau; \theta, \phi)$ the trajectory distribution,  that the log probability of $\mu$ is given by 
\begin{align*}
 \log \mu(\tau;\theta,\phi) & = \sum_{t=1}^H (\log\pi_\D(a^t_\D|s^t;\theta +\eta \hat{\nabla}_{\theta} J_\D (\tau) )+\log\pi_\A(a^t_\A|s^t;\phi)+\log P(s^{t+1}|a^t_\D, a^t_\A,s^t)
\end{align*}
According to the policy gradient theorem, we have
\begin{align*}
    &\nabla_\phi \LL_\D(\theta, \phi, \xi)= \mathbb{E}_{\mu}[R_\D(\tau, \xi)\sum_{t=1}^H \nabla_\phi \log(\pi_\A(a_\A^t|s^t;\phi))],\\
    &\nabla_\phi \LL_\A(\theta, \phi, \xi )=\mathbb{E}_{\mu}[R_\A(\tau, \xi)\sum_{t=1}^H \nabla_\phi \log(\pi_\A(a_\A^t|s^t;\phi))].
\end{align*}
By SC \Cref{ass:sc}, when $\nabla_\phi \LL_\A(\theta, \phi, \xi ) = 0$, there exists $c < 0$, $d$, such that $\nabla_\phi \LL_\D(\theta, \phi, \xi) = \mathbb{E}_{\mu}[c R_\A(\tau, \xi)\sum_{t=1}^H \nabla_\phi \log(\pi_\A(a_\A^t|s^t;\phi))] + \mathbb{E}_{\mu}[ \sum_{t=1}^H \gamma^{t-1}d \sum_{t=1}^H \nabla_\phi \log(\pi_\A(a_\A^t|s^t;\phi))] = 0$. Hence $ \nabla_{\theta}V  = \E_{\xi \sim Q } [\nabla_{\theta} \LL_\D (\theta, \phi_\xi, \xi)]$.
  
    Third, $V(\theta)$ is also Lipschitz smooth. As we notice that, $\ell_\D$ is Lipschitz smooth since $\E_{\xi\sim Q}$ is a linear operator, we have,  
    \begin{equation*}
    \begin{aligned}
       & \quad \ \| \nabla_{\theta}V (\theta_1) - \nabla_{\theta}V (\theta_2) \|  \\
        &\leq \| \nabla_{\theta} \E_{\xi \sim Q}\LL_\D (\theta_1, \phi_1, \xi) - \nabla_{\theta} \E_{\xi \sim Q}\LL_\D (\theta_2, \phi_2, \xi)\| \\ 
       & = \| \nabla_{\theta}\ell_\D (\theta_1, \phi_1) - \nabla_{\theta}\ell_\D (\theta_2, \phi_1) + \nabla_{\theta}\ell_\D (\theta_2, \phi_1) - \nabla_{\theta}\ell_\D (\theta_2, \phi_2)\| \\
       & \leq \| \nabla_{\theta}\ell_\D (\theta_1, \phi_1) - \nabla_{\theta}\ell_\D (\theta_2, \phi_1)\| + \|\nabla_{\theta}\ell_\D(\theta_2, \phi_1) - \nabla_{\theta}\ell_\D(\theta_2, \phi_2)\| \\ 
        & \leq L_{11} \|\theta_1 - \theta_2 \| + L_{12} \|\phi_1 - \phi_2\|  \\
        & \leq (L_{11} +\frac{L_{12}L_{21}}{\mu}) \|\theta_1 - \theta_2 \|, 
        \end{aligned}
    \end{equation*}
     which implies the Lipschitz constant $L = L_{11} + \frac{L_{12}L_{21}}{\mu}$.
\end{proof}

It is impossible to present the convergence theory without the assistance of some standard assumptions in batch reinforcement learning, of which the justification can be found in~\citep{fallah2021convergence}. We also require some additional information about the parameter space and function structure. These assumptions are all stated in \Cref{ass:grad}.

\begin{assumption}\label{ass:grad}~
\begin{enumerate}
    \item[(a)] The following policy gradients are bounded, $\|\nabla_\phi \LL_\D (\theta, \phi, \xi) \| \leq G^2$, $ \| \LL_\A (\theta, \phi, \xi) \| \leq G^2$ for all $\theta, \phi \in \Theta \times \Phi$ and $\xi \in \Xi$. 
    \item[(b)] The policy gradient estimations are unbiased.
    \item[(c)] The variances for the stochastic gradients are bounded, i.e., for all $theta_\xi^t, \phi^t_\xi, \xi$,
 \begin{equation*}
    \E [   \| \hat{\nabla}_{\phi} J(\theta_\xi^t, \phi^t_\xi, \xi) - \nabla_{\phi} J(\theta_\xi^t, \phi^t_\xi, \xi) \|^2 ] \leq  \frac{\sigma^2}{N_b} .
\end{equation*}
    \item[(d)] The parameter space $\Theta$ has diameter $D_\Theta:= \sup_{\theta_1, \theta_2  \in \Theta} \|\theta_1 - \theta_2\|$; the initialization $\theta^0$ admits at most $D_V$ function gap, i.e., $D_V:= \max_{\theta \in \Theta} V(\theta) - V(\theta^0)$.
     \item[(e)] It holds that the parameters satisfy $  0 < \mu <  -cL_{22}$.
\end{enumerate}
\end{assumption}

Equipped with \Cref{ass:grad} we are able to unfold our main result \Cref{thm:main}, before which we show in \Cref{lemma:approxgradv} that $\phi^{*}_\xi$ can be efficiently approximated by the inner loop in the sense that $\nabla_\theta \E_{\xi \sim Q}\LL_\D (\theta^t, \phi^t_\xi(N_\A), \xi) \approx \nabla_\theta V(\theta^t)$, where $\phi^t_\xi(N_\A)$ is the last iterate output of the attacker policy.

\begin{lemma}\label{lemma:approxgradv}
 Under \Cref{ass:grad}, \ref{plass}, \ref{ass:sc}, and \ref{asslip}, let $\rho : = 1 + \frac{\mu }{c L_{22}} \in (0, 1)$, $ \bar{L} = \max \{ L_{11}, L_{12}, L_{22}, L_{21}, V_{\infty} \}$ where $V_{\infty} :=  \max\{ \max\|\nabla V(\theta)\|, 1 \}$. 
 For all $\varepsilon > 0$, if the attacker learning iteration $N_\A$ and batch size $N_b$ are large enough such that 
 \begin{equation*}
     \begin{aligned}
         N_\A & \geq \frac{1}{\log \rho^{-1}}\log \frac{32 D_V^2 (2V_{\infty} + LD_{\Theta})^4 \bar{L} |c|G^2    }{ L^2 \mu^2\varepsilon^4} \\
        N_b & \geq \frac{32 \mu L_{21}^2 D_V^2 ( 2 V_{\infty} +  L D_{\Theta} )^4}{ |c| L_{22}^2 \sigma^2  \bar{L} L\varepsilon^4} , 
     \end{aligned}
 \end{equation*}
 then, for $z_t := \nabla_{\theta} \E_{\xi \sim Q}\LL_\D (\theta^t, \phi^t_\xi(N_\A), \xi) - \nabla_{\theta} V(\theta^t)$, 
 \begin{equation*}
      \mathbb{E} [\|z_t\|]  \leq \frac{ L\varepsilon^2}{ 4D_V ( 2 V_{\infty} +  L D_{\Theta} )^2}  ,
 \end{equation*}
 and 
 \begin{equation*}
      \mathbb{E} [\| \nabla_{\phi} \LL_\A (\theta^t, \phi^t_\xi (N), \xi)\|] \leq\varepsilon. 
 \end{equation*}
\end{lemma}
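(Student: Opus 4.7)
}
The plan is to reduce both claims to a single control on the ``optimization gap'' $\Delta^t_\xi(k) := \max_{\phi}\LL_\A(\theta^t,\phi,\xi) - \LL_\A(\theta^t,\phi^t_\xi(k),\xi)$ produced by the inner SGD loop. I would first prove a geometric-plus-variance bound $\E[\Delta^t_\xi(N_\A)] \leq \rho^{N_\A}\Delta^t_\xi(0) + C\sigma^2/(\mu N_b)$, then show that the attacker-gradient norm is dominated by $\sqrt{\E[\Delta^t_\xi(N_\A)]}$ (via smoothness) and that the meta-gradient error $\|z_t\|$ is dominated by $\sqrt{\E[\Delta^t_\xi(N_\A)]}$ (via quadratic growth composed with Lipschitz smoothness of the defender's value). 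Matching each of these with the advertised tolerances then pins down the requested $N_\A \sim \log \varepsilon^{-1}$ and $N_b \sim \varepsilon^{-4}$.

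Step 1 (inner-loop PL-SGD contraction). Fix $\theta^t$ and view the inner loop as SGD on the smooth objective $\LL_\A(\theta^t,\cdot,\xi)$ with unbiased stochastic gradients of variance at most $\sigma^2/N_b$ (Assumption~\ref{ass:grad}). I would apply the standard one-step descent lemma under $L_{21}$-smoothness and use the PL inequality from Assumption~\ref{plass} to obtain, for a suitably chosen step size $\kappa_\A$, the one-step inequality $\E[\Delta^t_\xi(k+1)] \leq \rho\,\E[\Delta^t_\xi(k)] + C\sigma^2/(\mu N_b)$ with a contraction factor $\rho \in (0,1)$. The specific form $\rho = 1 + \mu/(cL_{22})$ stated in the lemma should arise after the step size is calibrated through the strict-competitiveness identity $\nabla_\phi\LL_\D = c\,\nabla_\phi\LL_\A$ (Assumption~\ref{ass:sc}), which links the smoothness constants $L_{22}$ and $L_{21}$ and lets the same constant appear in both the inner analysis and the outer descent. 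Unrolling the recursion and using the crude initial bound $\Delta^t_\xi(0) \lesssim (2V_\infty + LD_\Theta)^2/L$ coming from Assumption~\ref{ass:grad} yields the desired gap bound.

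Step 2 (deriving the two conclusions). Smoothness of $\LL_\A$ in $\phi$ gives the standard ``upper PL'' inequality $\|\nabla_\phi\LL_\A(\theta^t,\phi,\xi)\|^2 \leq 2L_{21}\bigl(\max_\phi\LL_\A - \LL_\A(\theta^t,\phi,\xi)\bigr)$; applying this at $\phi = \phi^t_\xi(N_\A)$ and taking square roots via Jensen immediately gives $\E[\|\nabla_\phi\LL_\A(\theta^t,\phi^t_\xi(N_\A),\xi)\|] \leq \sqrt{2L_{21}\,\E[\Delta^t_\xi(N_\A)]}$, which is $\leq \varepsilon$ once $\E[\Delta^t_\xi(N_\A)] \lesssim \varepsilon^2$. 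For the meta-gradient, I would invoke the Danskin-type identity $\nabla V(\theta^t) = \nabla_\theta\E_\xi[\LL_\D(\theta^t,\phi^*_\xi,\xi)]$ from Lemma~\ref{liplemma}, then use \eqref{lip3} to write $\|z_t\| \leq L_{12}\,\E[\|\phi^t_\xi(N_\A) - \phi^*_\xi\|]$ after choosing $\phi^*_\xi$ as the closest maximizer. The PL-induced quadratic growth $\tfrac{\mu}{2}\|\phi^t_\xi(N_\A)-\phi^*_\xi\|^2 \leq \Delta^t_\xi(N_\A)$ then yields $\E[\|z_t\|] \leq L_{12}\sqrt{2\,\E[\Delta^t_\xi(N_\A)]/\mu}$. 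Equating this with the target bound $L\varepsilon^2/(4D_V(2V_\infty+LD_\Theta)^2)$ and solving for the two free parameters produces precisely the $N_\A \sim \log\varepsilon^{-1}$ and $N_b \sim \varepsilon^{-4}$ quoted.

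The hard part is Step~1: carrying the contraction rate through to the exact form $\rho = 1 + \mu/(cL_{22})$ requires carefully matching the step size $\kappa_\A$ to the strict-competitiveness constant $c$ and propagating the stochastic noise term $\sigma^2/N_b$ through the geometric recursion without letting the residual variance dominate the contraction. A secondary subtlety is that the best response in Lemma~\ref{liplemma} is only defined as an element of $\arg\max_\phi \LL_\A$, so I must select $\phi^*_\xi$ to be the nearest maximizer to $\phi^t_\xi(N_\A)$ for the quadratic-growth argument to apply; the Lipschitz stability of the argmax set proved inside Lemma~\ref{liplemma} guarantees this choice is well-posed and keeps the constants in the final bound tractable.
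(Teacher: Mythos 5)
Your proposal is correct and follows the same overall architecture as the paper's proof: a geometric-contraction-plus-variance-floor bound for the inner loop, then quadratic growth from the PL condition to convert the function gap into $\E[\|\phi^t_\xi(N_\A)-\phi^*_\xi\|]$, and finally the Lipschitz constants $L_{12}$ (via \eqref{lip3} and the Danskin-type identity of \Cref{liplemma}) and $L_{21}$ (via \eqref{lip5}) to obtain the two conclusions. The one genuine difference is in Step 1: you run the recursion on the attacker's own optimization gap $\Delta^t_\xi(k)=\max_\phi\LL_\A-\LL_\A(\theta^t,\phi^t_\xi(k),\xi)$, which is the textbook SGD-under-PL analysis, whereas the paper runs it on the \emph{defender's} gap $V(\theta^t)-\E_\xi\LL_\D(\theta^t,\phi^t_\xi(k),\xi)$, invoking strict competitiveness ($\nabla_\phi\LL_\D=c\,\nabla_\phi\LL_\A$) \emph{inside} the one-step descent inequality to measure the attacker's ascent step in terms of progress on $\LL_\D$; it is this routing that makes the constants $c$ and $L_{22}$ appear in $\rho=1+\mu/(cL_{22})$. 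Your route, taken literally, yields a contraction factor of the form $1-\Theta(\mu/L_{21})$ that does not involve $c$ or $L_{22}$, so the calibration you flag as the hard part would not actually recover the stated $\rho$ without switching to the defender's gap (the two gaps are proportional up to $|c|$ under \Cref{ass:sc}, which is exactly the conversion the paper performs at the end via $\|\phi^*_\xi-\phi^t_\xi(N)\|^2\leq\frac{2}{\mu}(\LL_\A\text{-gap})\leq\frac{2|c|}{\mu}|\LL_\D\text{-gap}|$). This is a difference in bookkeeping rather than in substance — either gap can be contracted, and both give the advertised $N_\A\sim\log\varepsilon^{-1}$, $N_b\sim\varepsilon^{-4}$ scalings — and your explicit handling of the nearest-maximizer selection for the quadratic-growth step is a point the paper's write-up leaves implicit.
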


\begin{proof}[Proof of \Cref{lemma:approxgradv}]
     Fixing a $\xi \in \Xi$, due to Lipschitz smoothness,
     \begin{align*}
        & \quad \ \LL_\D (\theta^t, \phi^t_\xi(N), \xi) - \LL_\D(\theta^t, \phi^t_\xi(N-1), \xi )
        \\ & \leq \langle \nabla_{\phi} \LL_\D (\theta^t, \phi^t_\xi(N-1), \xi), \phi^t_\xi(N) - \phi^t_\xi(N-1)\rangle + \frac{L_{22}}{2} \|\phi^t_\xi(N) - \phi^t_\xi(N-1)\|^2 .
     \end{align*}
     The inner loop updating rule ensures that when $\kappa_\A = \frac{1}{L_{21}}$, $ \phi^t_\xi(N) - \phi^t_\xi(N-1) = \frac{1}{L_{21}} \hat{\nabla}_{\phi} J_\A (\theta^t_\xi, \phi^t_\xi(N-1), \xi)$. 
     Plugging it into the inequality, we arrive at
     \begin{align*}
           & \ \quad \LL_\D (\theta^t, \phi^t_\xi(N), \xi) - \LL_\D(\theta^t, \phi^t_\xi(N-1), \xi ) 
          \\  & \leq   \frac{1}{L_{21}}\langle \nabla_{\phi} \LL_\D (\theta^t, \phi^t_\xi(N-1), \xi), \hat{\nabla}_{\phi}J_\A (\theta^t_\xi, \phi^t_\xi(N-1), \xi)\rangle + \frac{L_{22}}{2L^2_{21}} \|\hat{\nabla}_{\phi} J_\A (\theta^t_\xi, \phi^t_\xi(N-1), \xi)\|^2.
     \end{align*}
     Therefore, we let $(\mathcal{F}^t_n)_{ 0\leq n \leq N}$ be the filtration generated by $\sigma( \{ \phi^t_\xi (\tau) \}_{\xi \in \Xi}|\tau \leq n)$ and take conditional expectations on $\mathcal{F}^t_n$:
     \begin{align*}
         & \quad \E [ V(\theta^t) - \ell_\D(\theta^t, \phi^t(N)) |  \mathcal{F}^t_{N-1}] \leq V(\theta^t ) -  \ell_\D (\theta^t,\phi^t (N-1)) \\
         &  \E_\xi \left[ \frac{1}{L_{21}}\langle \nabla_{\phi} \LL_\D , \nabla_{\phi} J_\A (\theta^t_\xi, \phi^t_\xi(N-1), \xi)\rangle + \frac{L_{22}} {2L^2_{21}} \|\hat{\nabla}_{\phi} J_\A (\theta^t_\xi, \phi^t_\xi(N-1), \xi)\|^2 \right].
     \end{align*}
    By variance-bias decomposition, and \Cref{ass:grad} (b) and (c),
    \begin{align*}
    & \quad \ \E[ \| \hat{\nabla}_{\phi} J_\A (\theta^t_\xi, \phi^t_\xi(N-1), \xi) \|^2| \mathcal{F}^t_{N-1}] \\ & =  \E[ \| \hat{\nabla}_{\phi} J_\A (\theta^t_\xi, \phi^t_\xi(N-1), \xi) -  \nabla_{\phi} J_\A (\theta^t_\xi, \phi^t_\xi(N-1), \xi)  + \nabla_{\phi}J_\A (\theta^t_\xi, \phi^t_\xi(N-1), \xi) \|^2| \mathcal{F}^t_{N-1}] 
     \\ & =  \E[ \| (\hat{\nabla}_{\phi} -  \nabla_{\phi}) J_\A (\theta^t_\xi, \phi^t_\xi(N-1), \xi)\|^2 | \mathcal{F}^t_{N-1} ]  +  \E [\| \nabla_{\phi} J_\A (\theta^t_\xi, \phi^t_\xi(N-1), \xi) \|^2 | \mathcal{F}^t_{N-1}] 
      \\ & \quad  + \E [ 2 \langle (\hat{\nabla}_{\phi} -  \nabla_{\phi}) J_\A (\theta^t_\xi, \phi^t_\xi(N-1), \xi) ,  \nabla_{\phi} J_\A (\theta^t_\xi, \phi^t_\xi(N-1), \xi) \rangle |\mathcal{F}^t_{N-1} ]
     \\ & \leq \frac{\sigma^2}{N_b} + \| \nabla_{\phi} J_\A (\theta^t_\xi, \phi^t_\xi(N-1), \xi) \|^2    .
    \end{align*}
    Applying the PL condition (\Cref{plass}), and \Cref{ass:grad} (a) we obtain
      \begin{align*}
        & \quad  \E[ V(\theta^t) - \ell_\D(\theta, \phi^t(N))| \phi^{N-1}]
        - V(\theta^t) -  \ell_\D(\theta, \phi^t(N-1))  
        \\ & \leq  \E_\xi \left[\frac{1}{ L_{21}}\langle \nabla_{\phi} \LL_\D , \nabla_{\phi}\LL_\A (\theta^t, \phi^t_\xi(N-1), \xi)\rangle  + \frac{L_{22}}{2 L_{21}^2} ( \frac{\sigma^2}{ N_b}+ \|\nabla_{\phi}\LL_\A (\theta^t, \phi^t_\xi(N-1), \xi) \|^2  ) \right]
        \\ & =  \E_\xi \left[  -\frac{1}{2L_{22}} \| \nabla_{\phi } \LL_\D \|^2 +  \frac{1}{2 L_{22}}\|\nabla_{\phi} (\LL_\D  +  \frac{L_{22}}{L_{21}} \LL_\A) (\theta^t, \phi^t_\xi(N-1), \xi) \|^2 + \frac{L_{22}\sigma^2 }{2 L_{21}^2 N_b} \right]
        \\ & \leq  \frac{\mu}{c L_{21}} ( \max_{\phi} \ell_\D (\theta^t, \phi)  - \ell_\D (\theta^t, \phi^t (N-1)) ) + \frac{L_{22} \sigma^2  }{2 L_{21}^2 N_b} , 
       \end{align*}
    
   rearranging the terms yields 
    \begin{align*}
         \E [ V(\theta^t) - \ell_\D(\theta^t,\phi^t (N)) | \mathcal{F}^t_n] \leq \rho ( V(\theta^t) - \ell_\D(\theta^t,\phi^t (N-1)))  + 
 \frac{L_{22}\sigma^2  }{2 L_{21}^2 N_b} , 
    \end{align*}
    where we use the fact that $ - \max_{\phi} \ell_\D (\theta^t, \phi) \leq -V(\theta^t)$.
    Telescoping the inequalities from $\tau = 0$ to $\tau = N$, we arrive at 
    \begin{align*}
         \E [ V(\theta^t) - \ell_\D(\theta^t, \phi^t(N))] \leq \rho^N (V(\theta^t) - \ell_\D(\theta^t, \phi^t(0))) + \frac{1 - \rho^N }{1 - \rho} \left(\frac{L_{22}\sigma^2  }{2 L_{21}^2 N_b}\right).
    \end{align*}   

PL-condition implies quadratic growth, we also know that $ V(\theta^t) -  \ell_\D (\theta^t, \phi^t(N)) \leq \E_\xi \frac{1}{2\mu} \| \nabla_{\phi} \LL_\D (\theta^t, \phi^t_\xi(N), \xi)\|^2 \leq \frac{1}{2\mu} G^2 $, by  \Cref{ass:sc}, 
 \begin{align*}
    \|   \phi^*_\xi (\theta^t) - \phi^t_\xi (N) \|^2  &\leq \frac{2}{\mu} (\LL_\A(\theta^t, \phi^*_\xi, \xi) - \LL_\A(\theta^t, \phi^t_\xi (N), \xi)) 
    \\ & \leq  \frac{2|c| }{\mu} \big\vert \LL_\D (\theta^t, \phi^*_\xi, \xi) - \LL_\D (\theta^t, \phi^t_\xi(N), \xi ) \big\vert
 \end{align*}
 Hence, with Jensen inequality and choice of $N_\A$ and $N_b$, 
     \begin{align*}
         \E [ \|z_t \|] & = \E [ \| \nabla_{\theta} V(\theta^t) -  \E_\xi \nabla_{\theta} \LL_\D (\theta^t, \phi^t_\xi (N_\A), \xi )\|] \\
         & \leq L_{12} \E [ \|\phi^t_\xi (N_\A) -  \phi^*_\xi \|]  \\
        & \leq  L_{12} \sqrt{\frac{2 |c|}{\mu}\E [ V(\theta^t) - \ell_\D(\theta^t, \phi^t(N_\A))] } \\
        & \leq   L_{12} \sqrt{\frac{ |c|}{\mu^2} \rho^{N_\A} G^2 + (1 - \rho^{N_\A}) \frac{|c| L_{22}^2 \sigma^2 }{\mu L_{21}^2 N_b}} .  
     \end{align*}
Now we adjust the size of $N_\A$ and $N_b$ to make $\E [ \|z_t \|]$ small enough, to this end, we set
\begin{align*}
  \rho^{N_\A} \frac{|c| G^2}{ \mu^2} & \leq \frac{\varepsilon^4 L^2  }{32 D_V^2 (2V_{\infty} + L D_{\Theta})^4 \bar{L} } \\
   \frac{|c| L_{22}^2 \sigma^2 }{  L_{21}^2 N_b} & \leq \frac{\varepsilon^4 L^2 \mu^2 }{32 D_V^2 (2V_{\infty} + LD_{\Theta})^4 \bar{L} }, 
\end{align*} 
which further indicates that 
\begin{align*}
    N_\A &   \geq \frac{1}{\log \rho^{-1}}\log \frac{32 D_V^2 (2V_{\infty} + LD_{\Theta})^4 \bar{L} |c|G^2    }{ L^2 \mu^2\varepsilon^4} \\
    N_b &  \geq \frac{32 \mu L_{21}^2 D_V^2 ( 2 V_{\infty} +  L D_{\Theta} )^4}{ |c| L_{22}^2 \sigma^2  \bar{L} L\varepsilon^4} .
\end{align*}
In the setting above, it is not hard to verify that
\begin{align*}
 \E [\|z_t \|] \leq \frac{ L\varepsilon^2}{ 4D_V ( 2 V_{\infty} +  L D_{\Theta} )^2} \leq\varepsilon. 
\end{align*}
Also note that $\| \nabla_{\phi} \LL_\A (\theta^t, \phi^t_\xi(N_\A), \xi) \| = \| \nabla_{\phi}  \LL_\A (\theta^t, \phi^t_\xi(N_\A), \xi) - \nabla_{\phi} \LL_\A (\theta^t, \phi^*_\xi, \xi)\|$, given the proper choice  of $N_\A$ and $N_b$, one has
      \begin{align*}
           & \quad \E \| \nabla_{\phi}  \LL_\A (\theta^t, \phi^t_\xi(N_\A), \xi) - \nabla_{\phi} \LL_\A (\theta^t, \phi^*_\xi, \xi)\|
          \\ & \leq  L_{21} \E [ \|\phi^t_\xi (N_\A) -  \phi^*_\xi \| ]  \leq  \frac{ L\varepsilon^2}{ 4D_V ( 2 V_{\infty} +  L D_{\Theta} )^2}  \leq \varepsilon ,
      \end{align*}
which indicates the $\xi$-wise inner loop stability.
\end{proof}

Now we are ready to provide the convergence guarantee of the first-order outer loop. 

\begin{theorem}
 Under \Cref{ass:grad}, \Cref{ass:sc}, and \Cref{asslip}, let the stepsizes be, $\kappa_\A = \frac{1}{L_{22}}$, $\kappa_\D = \frac{1}{L}$, if $N_\D, N_\A,$ and $N_b$ are large enough,
 \begin{align*}
     N_\D \geq N_\D(\varepsilon) \sim  \mathcal{O}(\varepsilon^{-2}) \quad N_\A  \geq N_\A (\varepsilon) \sim \mathcal{O} ( \log\varepsilon^{-1}), \quad N_b \geq N_b (\varepsilon)\sim \mathcal{O} (\varepsilon^{-4})
 \end{align*}
 then there exists $t \in \mathbb{N}$ such that $(\theta^t, \{\phi^{t}_\xi(N_\A) \}_{\xi \in \Xi})$ is $\varepsilon$-meta-FOSE.
\end{theorem}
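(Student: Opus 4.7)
The plan is to treat the outer loop as a noisy gradient ascent on the $L$-smooth nonconvex function $V(\theta)$ (whose smoothness is guaranteed by Lemma \ref{liplemma}), and to control the bias and variance of the gradient surrogate $\hat{\nabla}_\theta \E_{\xi \sim Q} \LL_\D(\theta^t, \phi^t_\xi(N_\A), \xi)$ using Lemma \ref{lemma:approxgradv}. The first step is to write the descent inequality for the smooth objective $V$: for the update $\theta^{t+1} = \theta^t + \kappa_\D \hat{g}_t$, where $\hat{g}_t$ denotes the stochastic first-order direction computed at iteration $t$, the $L$-smoothness of $V$ yields
\begin{equation*}
V(\theta^{t+1}) \geq V(\theta^t) + \kappa_\D \langle \nabla V(\theta^t), \hat{g}_t\rangle - \frac{L \kappa_\D^2}{2}\|\hat{g}_t\|^2 .
\end{equation*}

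Next, I would decompose $\hat{g}_t = \nabla V(\theta^t) + z_t + \delta_t$, where $z_t := \nabla_\theta \E_{\xi \sim Q}\LL_\D(\theta^t, \phi^t_\xi(N_\A), \xi) - \nabla_\theta V(\theta^t)$ is the bias term coming from using an inexact attacker best response, and $\delta_t$ is the zero-mean sampling noise arising from the Monte-Carlo estimate based on a batch of $N_b$ trajectories. Taking conditional expectations, using unbiasedness of the policy gradient estimate (Assumption \ref{ass:grad}(b)) and the variance bound (Assumption \ref{ass:grad}(c)) to obtain $\E[\|\delta_t\|^2] \leq \sigma^2/N_b$, and invoking the bias bound $\E\|z_t\| \leq L\varepsilon^2/(4 D_V(2V_\infty + LD_\Theta)^2)$ from Lemma \ref{lemma:approxgradv}, we can lower bound $\E[\langle \nabla V(\theta^t), \hat{g}_t\rangle]$ by $\|\nabla V(\theta^t)\|^2$ minus a small cross term controlled via Cauchy--Schwarz and Assumption \ref{ass:grad}(a)/(d).

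Plugging the choice $\kappa_\D = 1/L$ and telescoping from $t=0$ to $N_\D - 1$, together with the function-gap bound $V(\theta^{N_\D}) - V(\theta^0) \leq D_V$ from Assumption \ref{ass:grad}(d), yields an averaged-gradient bound of the form
\begin{equation*}
\frac{1}{N_\D}\sum_{t=0}^{N_\D - 1}\E\|\nabla V(\theta^t)\|^2 \leq \frac{2 L D_V}{N_\D} + C_1 \E\|z_t\|\cdot V_\infty + \frac{C_2 \sigma^2}{N_b} ,
\end{equation*}
for absolute constants $C_1, C_2$ depending on $L, L_{12}, L_{21}, D_\Theta$. Choosing $N_\D = \Theta(\varepsilon^{-2})$, $N_b = \Omega(\varepsilon^{-4})$, and $N_\A = \Omega(\log \varepsilon^{-1})$ exactly as in Lemma \ref{lemma:approxgradv} forces each of the three right-hand-side terms to be $\mathcal{O}(\varepsilon^2)$, so by a pigeonhole / min-over-$t$ argument there exists an iterate $t^*$ with $\E\|\nabla V(\theta^{t^*})\| \leq \varepsilon$. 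Combined with the attacker stability statement $\E\|\nabla_\phi \LL_\A(\theta^{t^*}, \phi^{t^*}_\xi(N_\A), \xi)\| \leq \varepsilon$ of Lemma \ref{lemma:approxgradv} and the meta-FOSE characterization $\|\nabla_\theta \LL_\D(\theta^*, \phi^*_\xi, \xi)\| \leq \varepsilon$ (via the Danskin-type identity from Lemma \ref{liplemma}), this certifies a first-order $\varepsilon$-meta-FOSE by Definition \ref{def:meta-fose}.

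The main obstacle I anticipate is carefully propagating the $\mathcal{O}(\varepsilon^2)$ bias bound from Lemma \ref{lemma:approxgradv} through the descent inequality without inflating the required rates: because the bias enters linearly while the gradient norm appears squared, one must use Young's inequality of the form $|\langle \nabla V, z_t\rangle| \leq \tfrac{1}{2}\|\nabla V\|^2 + \tfrac{1}{2}\|z_t\|^2$ and then absorb the $\tfrac{1}{2}\|\nabla V\|^2$ term into the left-hand side. This in turn is why Lemma \ref{lemma:approxgradv} is stated with an $\varepsilon^2$-scale bound rather than $\varepsilon$, explaining the $N_b \sim \varepsilon^{-4}$ batch size. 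Verifying that all constants can be absorbed into $\bar L, D_V, V_\infty, D_\Theta$ as in Assumption \ref{ass:grad} is routine but bookkeeping-heavy.
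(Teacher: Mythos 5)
Your proposal is correct and follows essentially the same route as the paper's proof: a descent-lemma argument on the $L$-smooth function $V$ with the surrogate gradient $\hat{\nabla}_\theta \ell_\D(\theta^t,\phi^t(N_\A))$, the bias term $z_t$ controlled by Lemma \ref{lemma:approxgradv}, telescoping against the function gap $D_V$, and a pigeonhole step, with the attacker-side condition supplied directly by the inner-loop stability bound. The one substantive difference is the stationarity measure: you certify $\E\|\nabla V(\theta^{t^*})\|\leq\varepsilon$ and pass to the meta-FOSE condition via the Danskin identity, whereas the paper works directly with the constrained linearized gap $e_t=\langle\nabla_\theta\ell_\D(\theta^t,\phi^t(N_\A)),\theta-\theta^t\rangle$, which is the quantity actually appearing in Definition \ref{def:meta-fose} for compact $\Theta$; your gradient-norm bound is the stronger (unconstrained) statement and implies the linearized-gap bound by Cauchy--Schwarz over $B(\theta^t)$, but, like the paper's own argument, it silently assumes the unprojected iterates remain interior to $\Theta$.
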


\begin{proof} 
According to the update rule of the outer loop, (here we omit the projection analysis for simplicity)
\begin{align*} 
 \theta^{t+1} - \theta^t = \frac{1}{L}\hat{\nabla}_{\theta} \ell_\D (\theta^t, \phi^t(N_\A)), 
\end{align*}
one has, due to unbiasedness assumption, let $(\mathcal{F}_t)_{ 0 \leq t\leq N_\D}$ be the filtration generated by $\sigma(\theta^t| k \leq t)$ 
\begin{align*}
     \E[ \langle  \nabla_{\theta} \ell_\D (\theta^t, \phi^t(N_\A)),  \theta^{t+1} - \theta^t \rangle | \mathcal{F}_t ] & =  \frac{1}{L}\E[ \| 
 \nabla_{\theta} \ell_\D(\theta^t, \phi^t(N_\A))\|^2 |\mathcal{F}_t] 
 \\ &  = L  \E \| \theta^{t+1} - \theta^t\|^2 | \mathcal{F}_t ], 
\end{align*}
which leads to
\begin{align*}
    \E[ \langle \nabla_{\theta} \ell_\D(\theta^t, \phi^*) , \theta^{t+1} - \theta^t \rangle | \mathcal{F}_t ] & = \E[ \langle z_t, \theta^t - \theta^{t+1}\rangle| \mathcal{F}_t ] + L \E[ \| \theta^{t+1} - \theta^t\|^2\|]  . 
\end{align*}
 Since $V(\cdot)$ is $L$-Lipschitz smooth,
 \begin{equation} \label{telev}
 \begin{aligned}
    \E[ V( \theta^{t}) -  V(\theta^{t+1} )] & \leq  \E[ \langle  \nabla_{\theta} V(\theta^t ), \theta^{t} - \theta^{t+1} \rangle] + \frac{L}{2} \E[ \| \theta^{t+1} - \theta^t\|^2 ]
   \\  \leq  \E[\langle z_t, \theta^{t+1}  &- \theta^{t}\rangle] - \E[\langle \nabla_{\theta} \ell_\D(\theta^t, \phi^t(N_\A) ),  \theta^{t+1} - \theta^t\rangle] + \frac{L}{2} \E[ \| \theta^{t+1} - \theta^t\|^2 ] 
    \\  \leq \E[\langle z_t, \theta^{t+1}  &- \theta^{t}\rangle] - \frac{L}{2} \E[ \| \theta^{t+1} - \theta^t\|^2 ] .
 \end{aligned}
 \end{equation}

Fixing a $\theta \in \Theta$, let $e_t :=  \langle \nabla_{\theta} \ell_\D(\theta^t, \phi^t(N_\A) ), \theta - \theta^t \rangle$, we have
\begin{equation} \label{boundet}
 \begin{aligned}
     \E[e_t | \mathcal{F}_t ] & =  L \E[  \langle \theta^{t+1} - \theta^t   , \theta - \theta^{t} \rangle | \mathcal{F}_t] 
     \\ &  =  \E[ \langle  \nabla_{\theta} \ell_\D (\theta^t, \phi^t(N_\A)) - \nabla_{\theta} V(\theta^t), \theta^{t+1} - \theta^{t}  \rangle + \langle  \nabla_{\theta} V(\theta^t),  \theta^{t+1} - \theta^t \rangle]
    \\ & \quad  + L \E[  \langle \theta^{t+1} - \theta^t,  \theta - \theta^{t+1} \rangle ]
    \\ & \leq   \E [ (  \|z_t\| + V_{\infty} + LD_{\Theta}) \| \theta^{t+1} - \theta^t \|]
 \end{aligned}
 \end{equation}

  By the choice of $N_b$, we have, since $V_{\infty} = \max\{ \max_{\theta} \|\nabla V(\theta)\|, 1\}$,
 \begin{align*}
      \E [ \|z_t\|] \leq L_{12} \E [ \| \phi^N - \phi^*\|] \leq \frac{ L\varepsilon^2}{ 4D_V ( 2 V_{\infty} +  L D_{\Theta} )} \leq V_{\infty} .
 \end{align*}
Thus, the relation \eqref{boundet} can be reduced to 
\begin{equation*}
\E [ e_t] \leq (2 V_{\infty}+ LD_{\Theta} ) \E [\|\theta^{t+1} - \theta^t\|]. 
\end{equation*}
Telescoping \eqref{telev} yields 
\begin{equation*}
- D_V \leq \E[ V(\theta^0) - V(\theta^{N_\D})] \leq  D_{\Theta} \sum_{t=0}^{T-1} \E [ \| z_t \|] -  \frac{L}{2(2 V_{\infty} + L D_{\Theta})^2} \E[\sum_{t=0}^{T-1} \E[ e^2_t | \mathcal{F}_t ].
\end{equation*}

Thus, setting $N_\D \geq \frac{ 4D_V (2 V_{\infty} + LD_{\Theta})^2}{L\varepsilon^2 }$, and then by Lemma \ref{lemma:attackstablize}, we obtain that, 
\begin{equation*}
    \frac{1}{N_\D } \sum_{t=0}^{N_\D -1} \E [e^2_t ]  \leq \frac{\varepsilon^2 }{2}  + \frac{ 2 D_V(2V_{\infty}+ LD_{\Theta})^2  }{L N_\D }  \leq \varepsilon^2
\end{equation*}
which implies there exists $t \in \{0, \ldots, N_\D - 1\}$ such that $\E [ e^2_t] \leq\varepsilon^2$.


\end{proof}

\ignore{

\section{Methodology}
\label{app:methodology}
\subsection{Simulated Environment}
\label{subsec: simulated env}

In the process of emulating transitions and reward functions within BSMG, our starting point is the presumption that the defender invariably contemplates the most adverse situation. This judgement is grounded on a cursory understanding of the quantity of malevolent clients under each attacker's control and the uneven data distribution levels among the clients. As an illustration, if the real ratio of malevolent devices swings between $10\%$ and $40\%$, the defender will presume the worst-case of $40\%$.
Secondly, replicating client actions, particularly local training, demands an extensive data volume, which is often not readily available. Thus, we make the assumption that a small base dataset is accessible to the server (a typical supposition in Federated Learning) and we utilize data augmentation techniques to produce a larger number of data samples. The subsequent data is then employed to train a conditional GAN model~\cite{mirza2014conditional} for MNIST and a diffusion model~\cite{sohl2015deep} for CIFAR-10, facilitating the generation of adequate data to emulate local training within the simulated setup.

In reality, the defender (the server, in this case) lacks knowledge regarding the triggers or target labels of the backdoor attacker. In an attempt to replicate the actions of a backdoor attacker, we utilize several GAN-based attack models to create various trigger distributions, each focusing on a single label. These are then perceived as varying attack forms within the simulation. Given that the defender is unaware of the poison ratio and targeted label within the attacker's tainted dataset, we adjust the defender's reward function to reflect this.
As for the reward function, it's defined as $r_{\mathcal{D}}^t=-\mathbb{E}[F''(\widehat{w}^{t+1}_g)]$, $F''(w):=\lambda' F(w,U)-(1-\lambda') \min_{c\in C} [\frac{1}{|U'|}\sum_{j=1}^{|U'|}\ell(w,(\hat{x}_i^j,c))]\geq \lambda' F(w,U)-(1-\lambda') [\frac{1}{|U'|}\sum_{j=1}^{|U'|}\ell(w,(\hat{x}_i^j,c^*))]$, where $c^*$ is the truly targeted label, and $\lambda'\in [0,1]$ is a measure of the tradeoff between the main task and the backdoor task. We take all the data in $U'$ to be poisoned, to give a rough approximation of the real attack objective $\lambda F(w,U)+(1-\lambda)F(w,U')$ with another $\lambda$.

\subsection{RL-based Attacks and Defenses}
\label{subsec: RL defense}
In line with the BSMG model, the intuitive approach would be to employ $w_g^t$ or $(w_g^t,\mathbf{I}^t)$ as the state, and ${\widetilde{g}k^t}{k=1}^{M_1+ M_2}$ or $w_g^{t+1}$ as the actions for the attacker and the defender respectively, provided the federated learning model is small. However, when training a high-dimensional model (i.e., a large neural network) through federated learning, the original state/action space could result in an overwhelmingly large search space, which would be prohibitively demanding in terms of training time and memory space.
In order to mimic an adaptive model poisoning attack, we employ the RL-based attack~\cite{li2022learning}. Similarly, an RL-based local search~\cite{li2023learning} is used to simulate an adaptive backdoor attack. Both of these methods possess 3-dimensional real action spaces subsequent to action comparison. We further streamline the process by insisting that all malicious devices under the control of the same attacker perform the same action.
To condense the state space, we reduce $w_g^t$ to only its final two hidden layers for both the attacker and defender. For the attacker, the identity vector is minimized to represent the number of malicious clients sampled at round $t$. As a universal defense against various types of attacks, we utilize reinforcement learning for hyperparameter optimization.

In the context of defense against \textbf{untargeted model poisoning attacks}, we look at methods like coordinate-wise trimmed mean, which involves a trimmed rate $\beta=[0,\frac{1}{2})$ (dimension-wise), clipping that comes with a norm bound $\alpha$ (magnitude), and FoolsGold that uses a cosine similarity threshold $\kappa$ (direction). These are all training stage defenses reliant on aggregation.
In the fight against \textbf{backdoor attacks}, we introduce random noise at a level of $\delta$ and clip the model updates with a norm bound $\alpha$ as part of the training stage defense. In the post-training stage, we consider Neuron Clipping, which involves a clip range of $\varepsilon$, or Pruning, which has a pruning mask rate of $\sigma$.
It's worth noting that each of these defensive measures can be swapped out with other algorithms. The unique element of our approach lies not in the use of fixed and hand-crafted hyperparameters, for example $a^t_{1}:=(\beta, \alpha, \kappa)$ in the untargeted defense and $a^t_{2}:=(\delta, \alpha,\varepsilon \slash \sigma)$ in the backdoor defense, but in our use of reinforcement learning (RL) to optimize these hyperparameters. Instead of concurrently searching for all parameters, the two defenses, which span a total of six dimensions, are optimized separately.

\section{Experiment Setup}
\label{app:setup}
\paragraph{Data Sources.} Our analysis utilizes two distinct datasets: the MNIST \citep{lecun1998gradient} and CIFAR-10 \citep{krizhevsky2009learning}. We implement standard $i.i.d.$ local data distributions, splitting each dataset randomly into $n$ equally sized subsets for training. The MNIST dataset is made up of 60,000 training and 10,000 testing examples, each being a 28x28 grayscale image linked to one of 10 classifications. The CIFAR-10 dataset, on the other hand, is comprised of 60,000 color images spanning 10 different categories, split into 50,000 training and 10,000 testing examples.
When examining the {\em non-i.i.d.} context (refer to Figure~\ref{fig:ablation}(d)), we utilize the approach proposed by~\citep{fang2020local} to measure data heterogeneity. Both MNIST and CIFAR-10 datasets are split into $C=10$ groups of workers. Non-i.i.d. federated learning is modeled by allocating a training instance with label $c$ to the corresponding $c$-th group with probability $q$, and to all groups with a probability of $1-q$. Greater heterogeneity is indicated by a higher value of $q$.


\paragraph{Federated Learning Configuration.} Our default parameters for the federated learning (FL) setup include a local minibatch size of 128, a single local iteration, a learning rate of 0.05, 100 workers, 5 backdoor attackers, 20 untargeted model poisoning attackers, a subsampling rate of $10\%$, and 500 and 1000 FL training rounds for MNIST and CIFAR-10, respectively. For the MNIST dataset, we employ a neural network classifier incorporating convolutional filter layers of dimensions 8x8, 6x6, and 5x5 with ReLU activations, succeeded by a fully connected layer and a softmax output. In the case of the CIFAR-10 dataset, we use the ResNet-18 model~\citep{he2016deep}.
The FL model is implemented using PyTorch~\citep{paszke2019pytorch} and all experiments are executed on a homogeneous 2.30GHz Linux system, equipped with a 16GB NVIDIA Tesla P100 GPU.
The standard loss function and optimizer used are cross-entropy loss and stochastic gradient descent (SGD), respectively. Apart from the experiments showcased in Figures~\ref{fig:ablation}(c) and~\ref{fig:ablation}(d), we maintain the initial model and random seeds of subsampling consistent for balanced comparisons.


\paragraph{Benchmark Comparisons.} We assess the efficacy of our defensive approach against a range of contemporary attack strategies. These include both non-adaptive and adaptive untargeted model poisoning attacks such as IPM~\cite{xie2020fall}, LMP~\cite{fang2020local}, and RL~\cite{li2022learning}, in addition to backdoor attacks like BFL~\cite{bagdasaryan2020backdoor} and BRL~\cite{li2023learning} (with a tradeoff parameter $\lambda=0.5$), and a combination of both types of attacks. To establish the potency of our defense, we draw comparisons with a number of robust defense mechanisms. The selected benchmarks consist of defenses activated during the training phase, like Krum~\cite{blanchard2017machine}, Clipping Median~\cite{yin2018byzantine,sun2019can,li2022learning} (with a norm bound set to 1), and FLTrust~\cite{cao2020fltrust} with 100 root data samples and a bias of $q=0.5$, in addition to post-training defenses such as Neuron Clipping~\cite{wang2022universal} and Pruning~\cite{wu2020mitigating}. We adopt the original clipping thresholds of 7 as outlined in~\cite{wang2022universal} and designate the default pruning number as 256.


\paragraph{Reinforcement Learning Configuration.} For our RL-based defense strategy, we employ the Twin Delayed DDPG (TD3)\citep{fujimoto2018addressing} algorithm, given that both the action and state spaces are continuous. This algorithm is employed to independently train the policies for untargeted defense and backdoor defense. The simulation environment is implemented using OpenAI Gym\citep{1606.01540} and the TD3 is set up using OpenAI Stable Baseline3~\citep{stable-baselines3}.
The parameters for RL training are specified as follows: the number of FL rounds is set to 300, the policy learning rate is 0.001, the policy model is based on the MultiInput Policy, the batch size is set to 256, and a $\gamma$ value of 0.99 is used for updating the target networks.
A default $\lambda$ value of 0.5 is utilized when computing the backdoor rewards.


\begin{figure*}[!t]
\centering
    \includegraphics[width=0.7\textwidth]{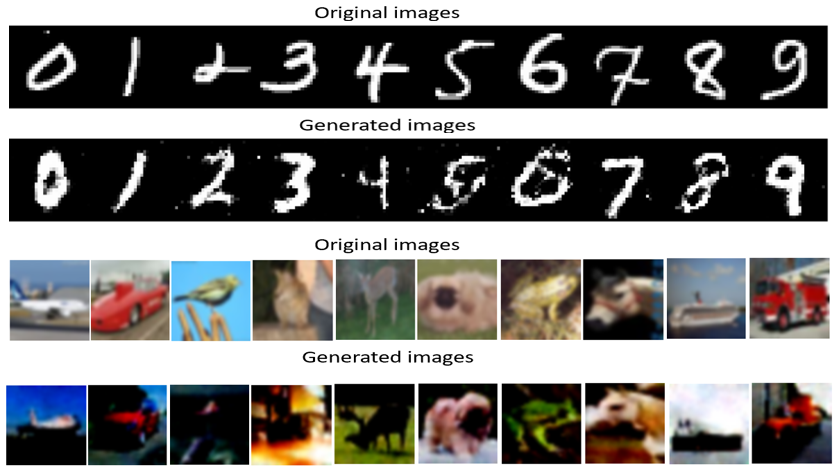}
  \caption{Self-generated MNIST images using conditional GAN~\cite{mirza2014conditional} (second row) and CIFAR-10 images using a diffusion model~\cite{sohl2015deep} (fourth row). 
  }
  \label{fig:generated}
\end{figure*}

\begin{figure*}[!t]
\centering
    \includegraphics[width=0.9\textwidth]{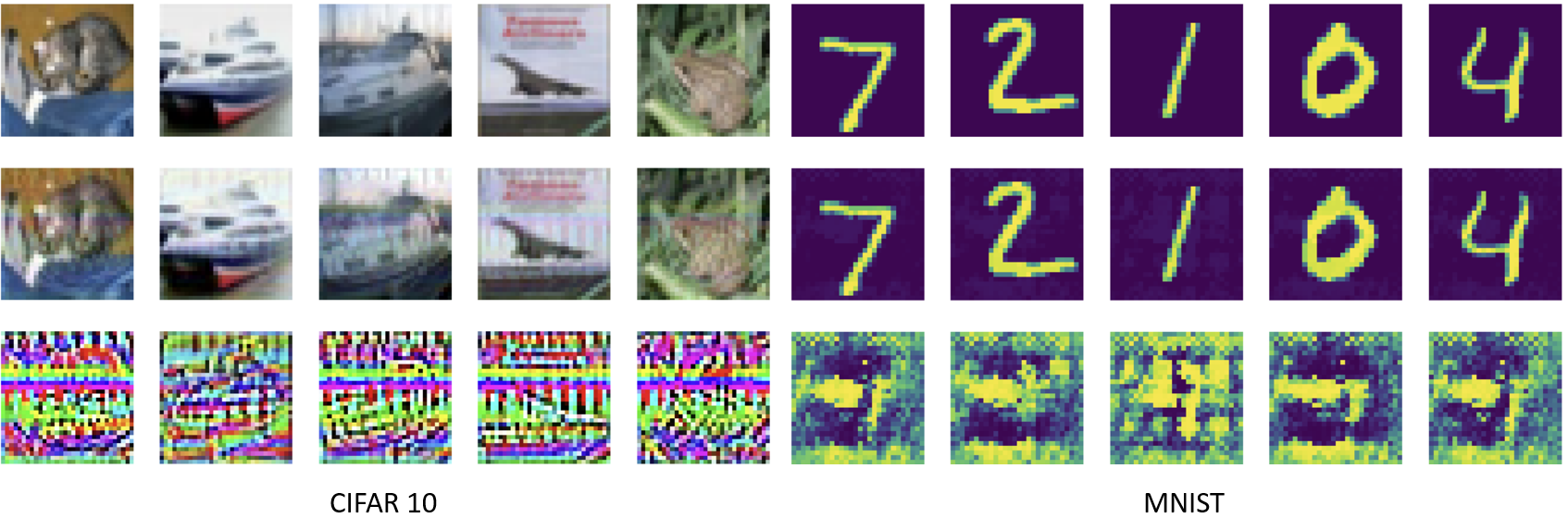}
  \caption{Generated backdoor triggers using GAN-based models~\cite{doan2021lira}. Original image (first row). Backdoor image (second row). Residual (third row).}
  \label{fig:gan_trigger}
\end{figure*}

\begin{figure*}
    \centering
    \includegraphics[width=1\textwidth]{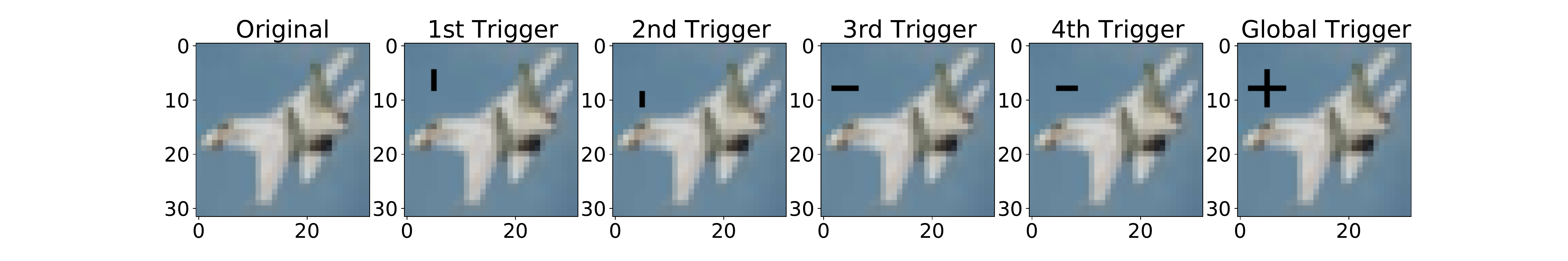}
    \caption{\small{CIFAR-10 fixed backdoor trigger patterns.
    The global trigger is considered the default poison pattern and is used for online adaptation stage backdoor accuracy evaluation. The sub-triggers are used by pre-training only.}}
    \label{fig:cifar10_dba}
\end{figure*}

\paragraph{Meta-learning Configuration.}
The potential attack sets, or attack domains, are established as follows:
For meta-RL, we include IPM~\citep{xie2020fall}, LMP~\citep{fang2020local}, and EB~\citep{bhagoji2019analyzing} as the three viable attack methods. For the meta-SG against the untargeted model poisoning attack, we consider RL-based attacks~\citep{li2022learning}, trained to combat Krum~\citep{blanchard2017machine} and Clipping Median~\citep{li2022learning, yin2018byzantine, sun2019can}, as initial attacks. As for meta-SG against the backdoor attack, RL-based backdoor attacks~\cite{li2023learning} trained against Norm-bounding~\citep{sun2019can} and Neuron Clipping~\citep{wang2022universal} (Pruning~\cite{wu2020mitigating}) are considered as initial attacks. For meta-SG against mixed types of attacks, we include both RL-based attacks~\citep{li2022learning} and RL-based backdoor attacks~\cite{li2023learning} described above as initial attacks.

During the pre-training phase, we set the number of iterations $T$ at 100. In each iteration, we randomly sample $K=10$ attacks from the attack type domain (refer to Algorithm~\ref{algo:meta-rl} and Algorithm~\ref{algo:meta-sl}). For each attack, a trajectory of length $H=200$ is generated for MNIST ($H=500$ for CIFAR-10), and we update the policies of both the attacker and the defender for 10 steps using TD3 (that is, $l=N_{\A}=N_{\D}=10$). In the online adaptation phase, the meta-policy undergoes adaptation for $100$ steps utilizing TD3 with $T=10$, $H=100$ for MNIST ($H=200$ for CIFAR-10), and $l=10$. The remaining parameters are defined as follows: the single task step size $\kappa=\kappa_{\A}=\kappa_{\D}= 0.001$, meta-optimization step size is set to 1, and the adaptation step size is 0.01.

\paragraph{Self-generated Dataset.}
We start by recognizing that the server possesses a limited initial dataset (specifically, 100 samples with $q=0.1$), a common scenario in the FL ~\citep{cao2020fltrust}. To emulate a training set with 60,000 images (applicable to both MNIST and CIFAR-10) for FL, this sparse dataset is expanded through several methods including normalization, random rotation, and color jittering, yielding a more extensive and diverse dataset. This generated dataset will be subsequently utilized as input for generative models.

For the MNIST dataset, a Conditional Generative Adversarial Network (cGAN) model~\cite{mirza2014conditional,odena2017conditional} is trained using the expanded dataset. This model, built on the codebase presented in~\cite{Pytorch-cGAN}, includes two primary components - a generator and a discriminator, both of which are neural networks. More specifically, a dataset consisting of 5,000 augmented data samples is used to train the cGAN. The network parameters are retained as default, and the training epoch is set to 100.

In the case of CIFAR-10, a diffusion model as implemented in~\cite{cifar-diffusion} is utilized. This model integrates several contemporary techniques, encompassing a Denoising Diffusion Probabilistic Model (DDPM)\cite{ho2020denoising}, deterministic sampling in the style of DDIM\cite{song2020denoising}, and continuous timesteps parametrized by the log SNR at each timestep~\cite{kingma2021variational} to facilitate varying noise schedules during sampling. The model also applies the `v' objective, which stems from Progressive Distillation for Fast Sampling of Diffusion Models \cite{salimans2022progressive}, improving the conditioning of denoised images at high noise levels. During the training phase, a dataset containing 50,000 augmented data samples is used as input to train this model. The parameters are retained as default, and the training epoch is set to 30.

\paragraph{Backdoor Attacks.}

Our evaluation of backdoor attacks takes into account the trigger patterns demonstrated in Figure~\ref{fig:gan_trigger} and Figure~\ref{fig:cifar10_dba}. In the scenario where triggers are generated via GAN (see Figure~\ref{fig:gan_trigger}), the objective is to categorize images from various classes into a single target class, also known as the all-to-one classification. In the context of predetermined patterns (as seen in Figure~\ref{fig:cifar10_dba}), the aim is to categorize images from the airplane class into the truck class, a one-to-one classification scenario. In both situations, the poison ratio is set at 0.5 by default. The global trigger depicted in Figure~\ref{fig:cifar10_dba} serves as the standard poison pattern and is utilized during the online adaptation phase for backdoor accuracy evaluation.


\section{Experiment Results}
\label{app:extended}
\begin{figure*}[t]
 	\vspace{-5pt}
 	\centering
 		\subfloat[]{%
 		\includegraphics[width=0.25\textwidth]{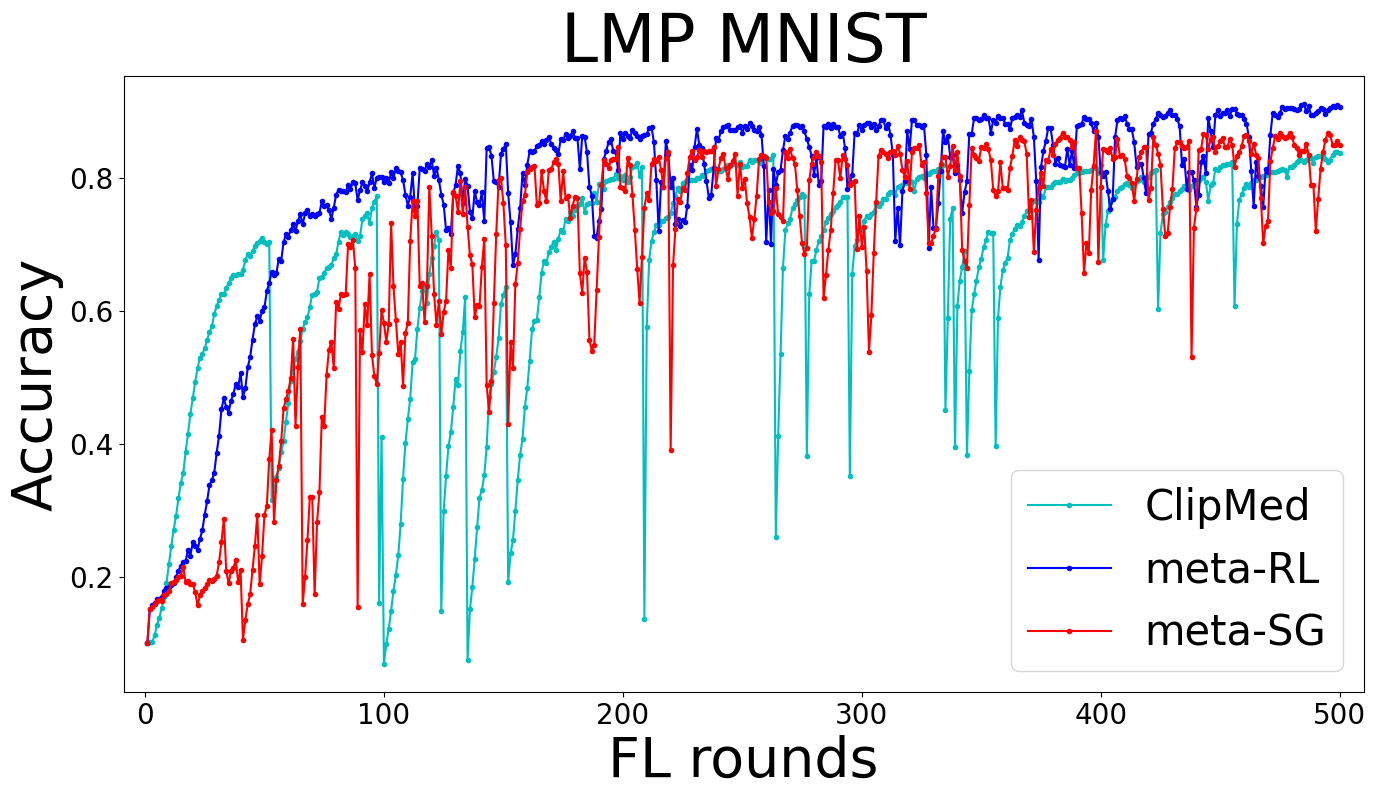}%
 	}
 	\subfloat[]{%
 		\includegraphics[width=0.25\textwidth]{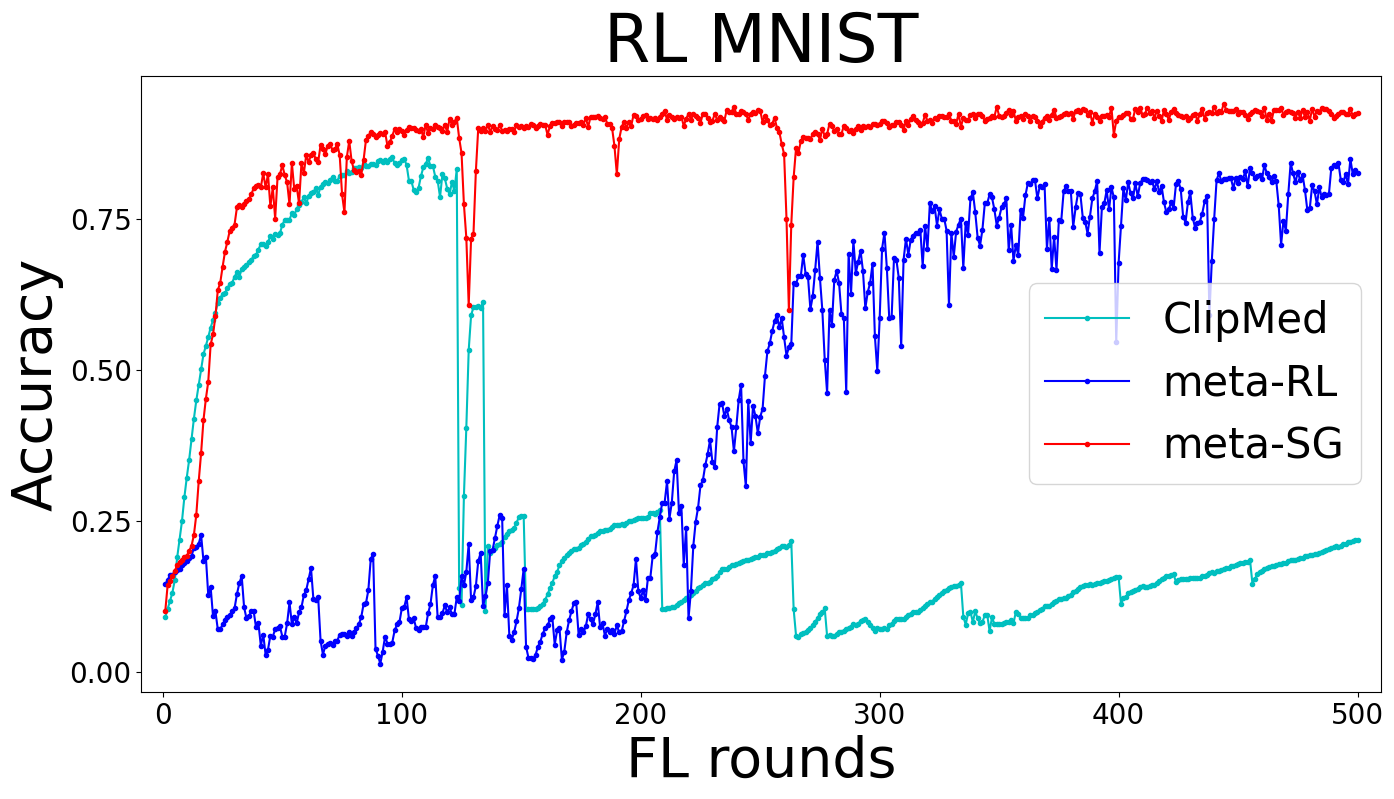}%
 	}
 	\subfloat[]{%
 		\includegraphics[width=0.25\textwidth]{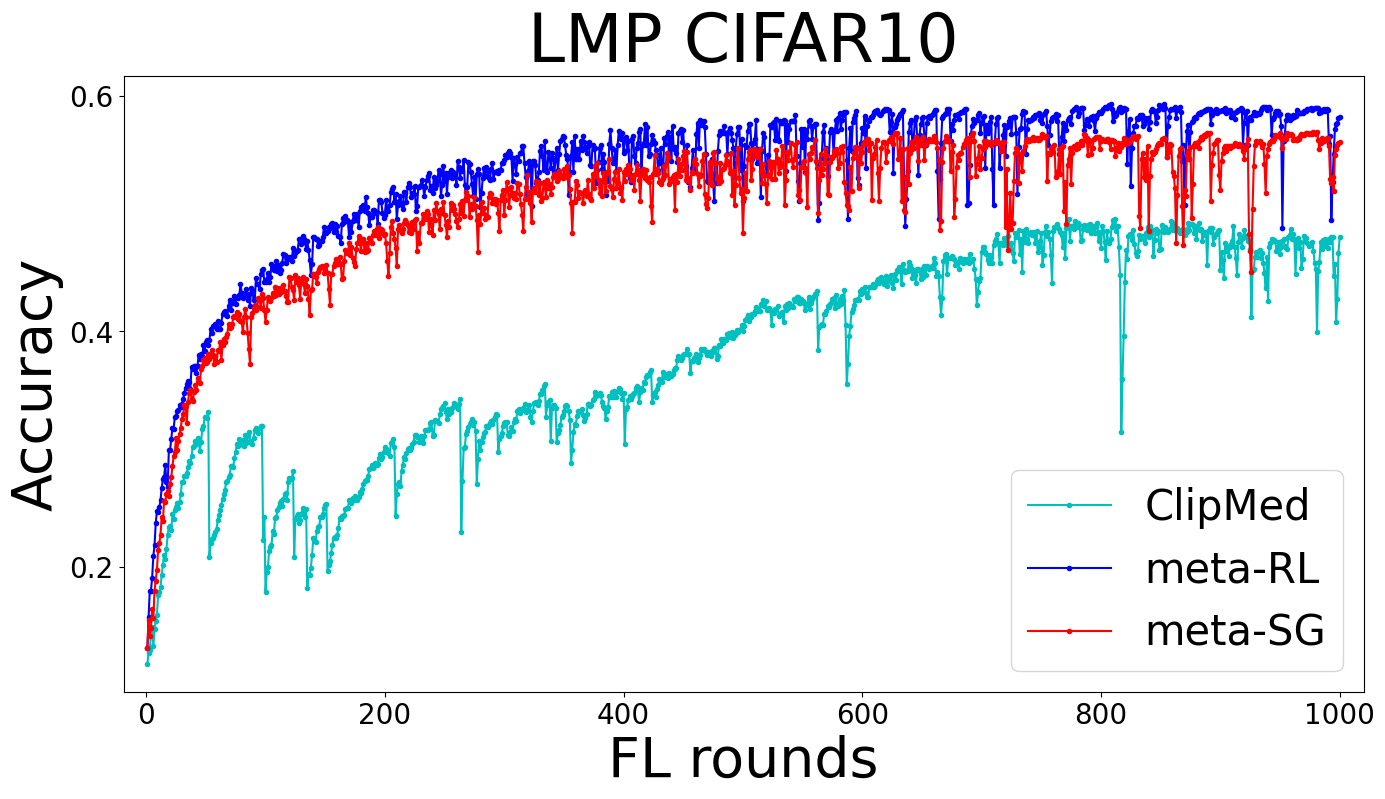}%
 	}
 	\subfloat[]{%
 		\includegraphics[width=0.25\textwidth]{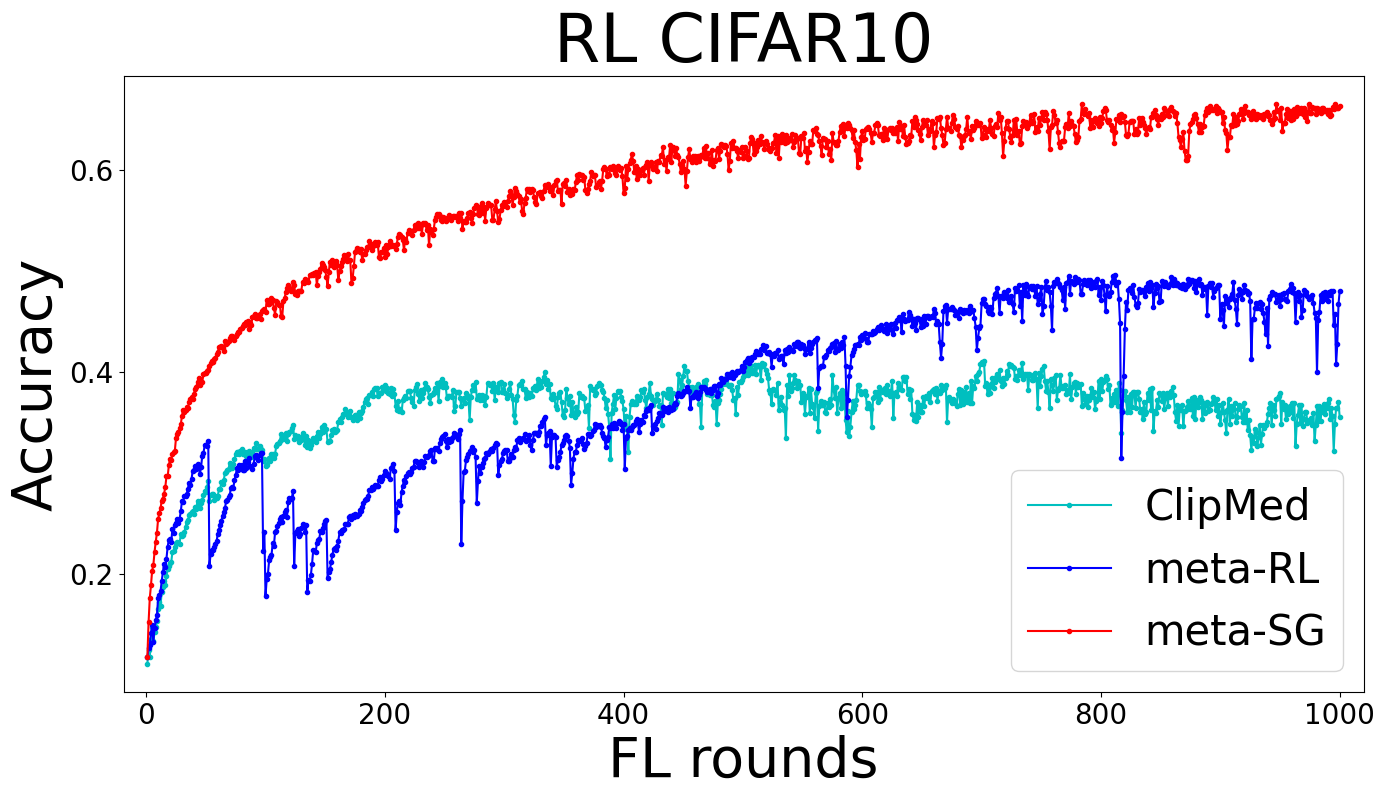}%
 	}

 	 \caption{\small Comparison of defenses against untargeted model              poisoning attacks, namely LMP and RL, applied on MNIST and CIFAR-10         datasets. All parameters set as default.}
 	\vspace{-5pt}
 	\label{fig:untargeted}
\end{figure*}

\begin{figure}[t]
    \centering
  \begin{subfigure}{0.24\textwidth}
      \centering
          \includegraphics[width=\textwidth]{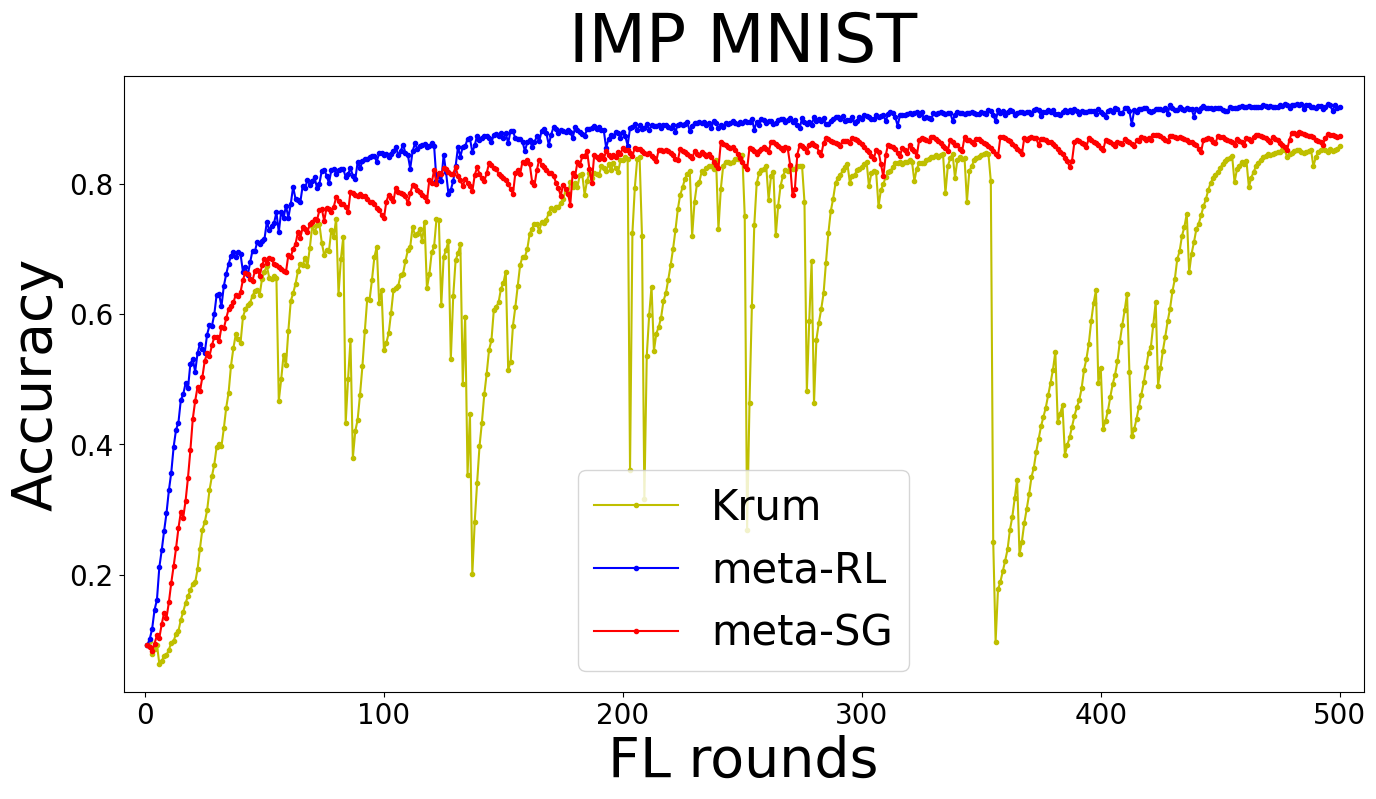}
          \caption{}
  \end{subfigure}
  \hfill
    \begin{subfigure}{0.24\textwidth}
      \centering
          \includegraphics[width=\textwidth]{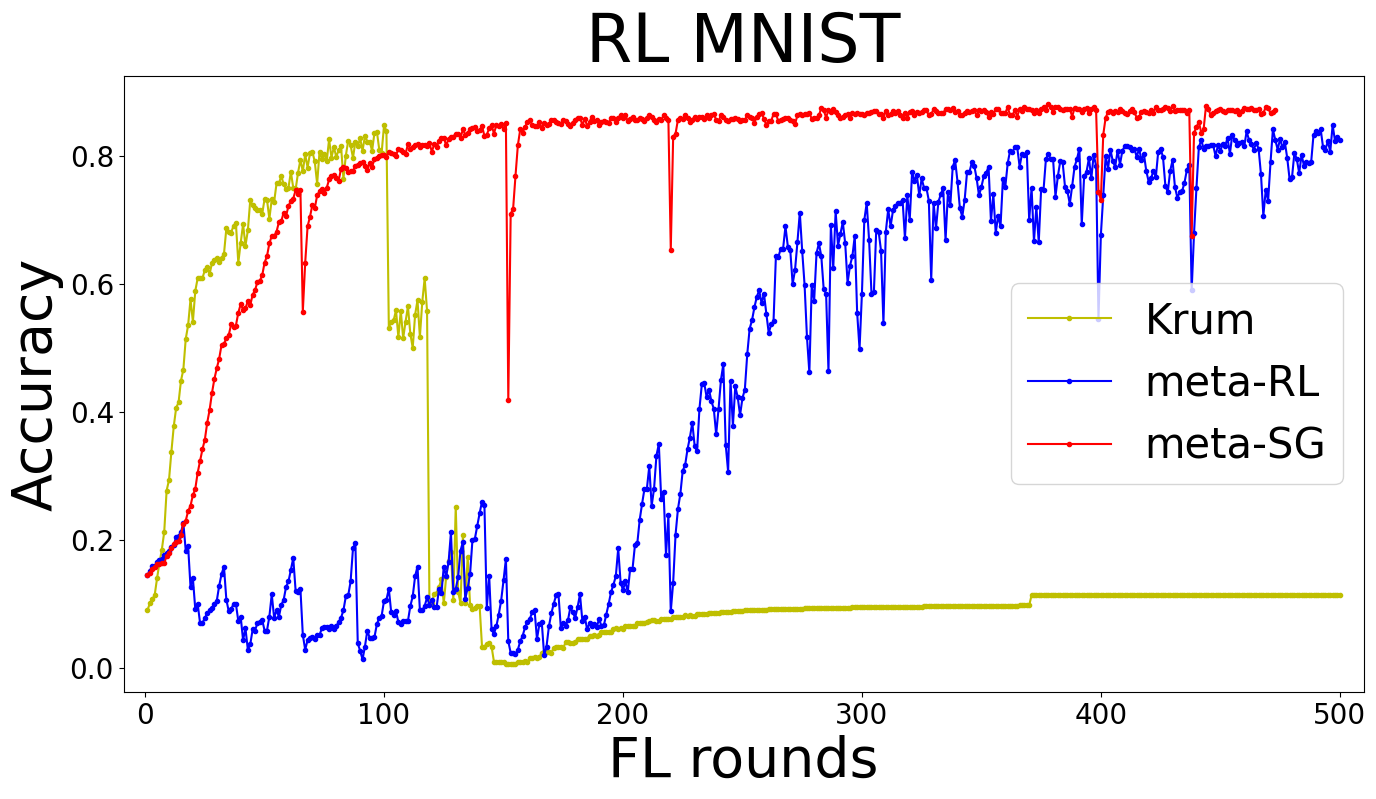}
          \caption{}
  \end{subfigure}
  \hfill
    \begin{subfigure}{0.24\textwidth}
      \centering
          \includegraphics[width=\textwidth]{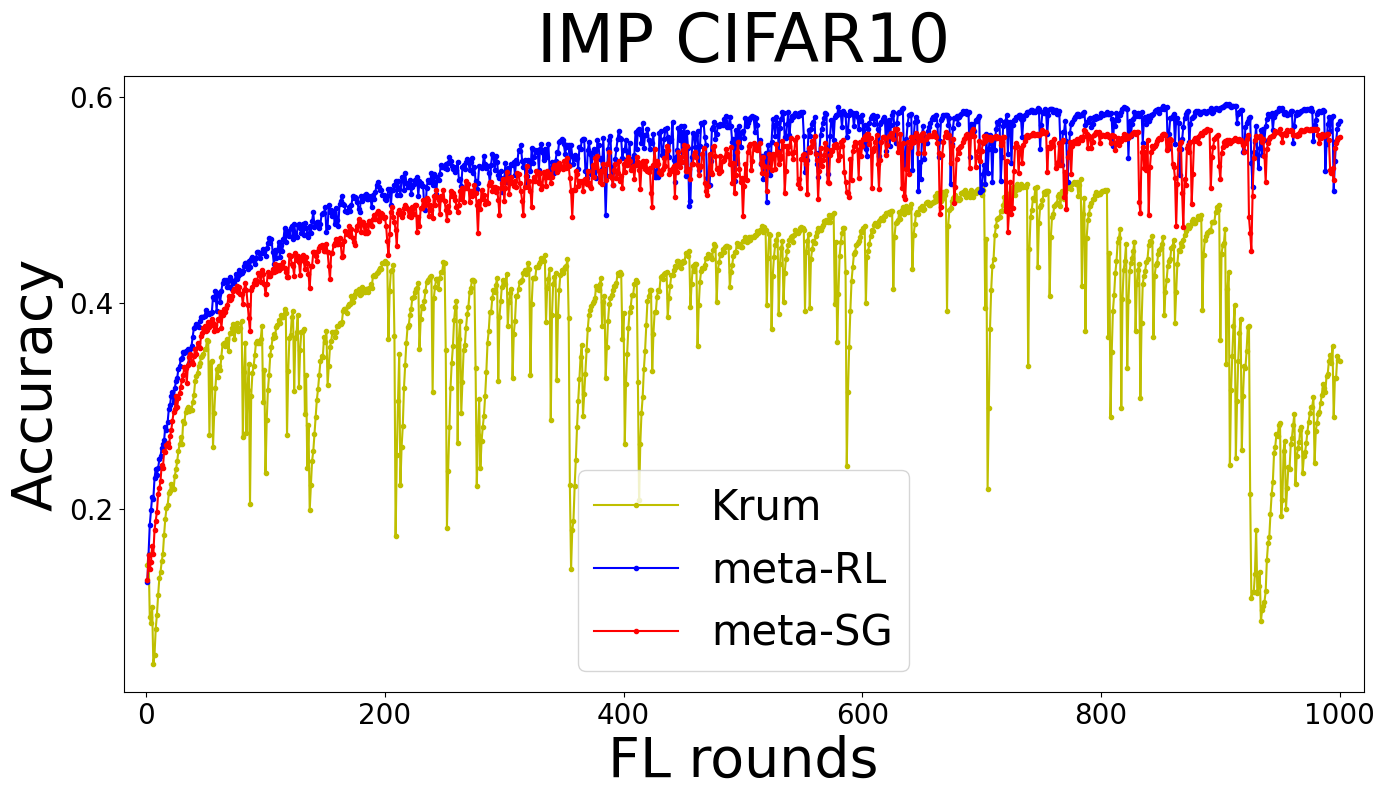}
          \caption{}
  \end{subfigure}
  \hfill
    \begin{subfigure}{0.24\textwidth}
      \centering
          \includegraphics[width=\textwidth]{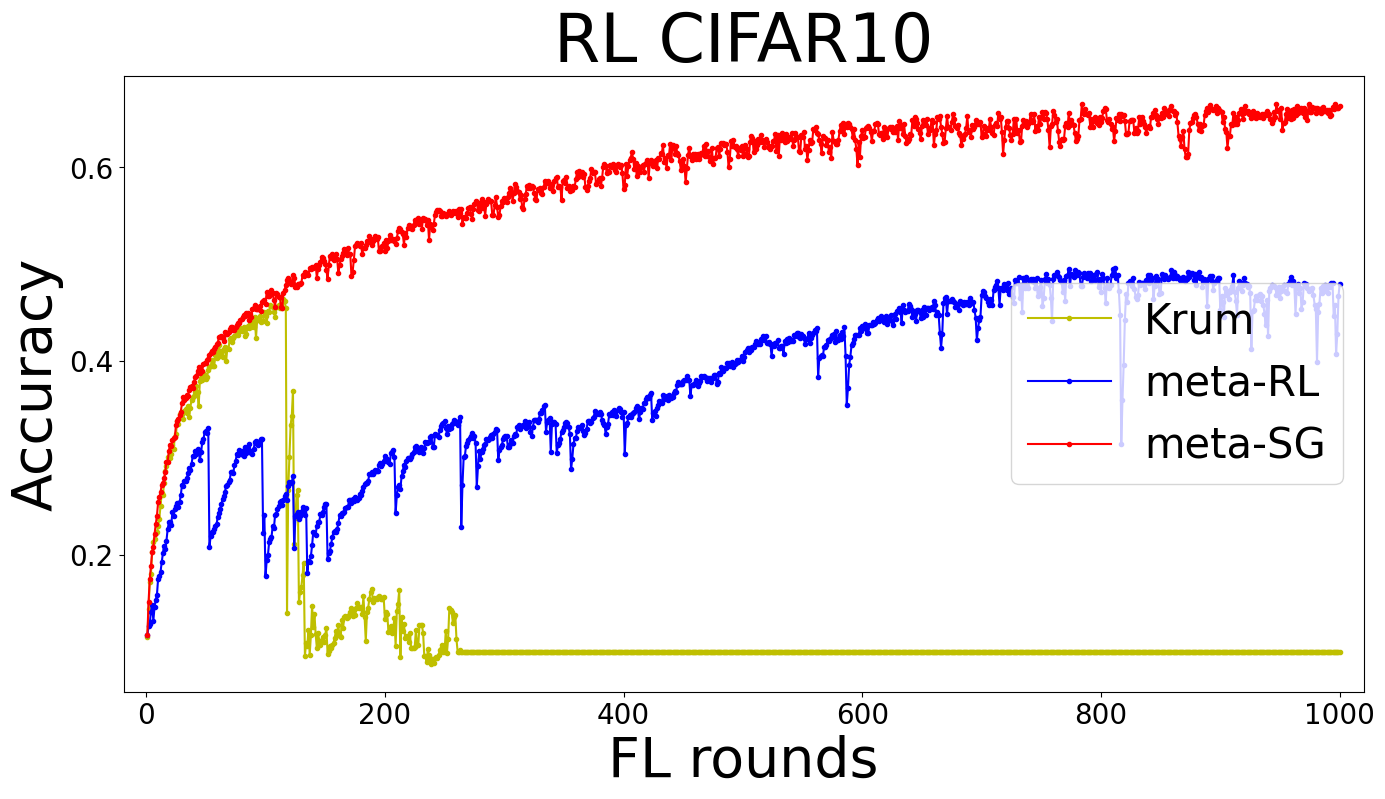}
          \caption{}
  \end{subfigure}
  \caption{\small{Comparisons of defenses against untargeted model poisoning attacks (i.e., IPM and RL) on MNIST and CIFAR-10. RL-based attacks are trained before epoch 0 against the associate defenses (i.e., Krum and meta-policy of meta-RL/meta-SG). All parameters are set as default.}}\label{fig:additional}
\end{figure}

\textbf{Adaptive Attacker Scenario.} Our meta-SG defense exhibits superior resistance to RL-based attacks~\cite{li2022learning,li2023learning}, as demonstrated in Figures~\ref{fig:meta-SG}(a), \ref{fig:untargeted}(b), \ref{fig:untargeted}(d), and \ref{fig:ablation}(a) and \ref{fig:ablation}(b). At the onset of the meta-SG's pre-training phase, we incorporate RL-based initiating policies into the attack set. This set includes RL confronting Clipping Median~\cite{li2022learning} and Krum~\cite{blanchard2017machine} for untargeted model poisoning, along with BRL countering Norm Bounding~\cite{sun2019can} and Neuron Clipping~\cite{wang2022universal} (or Pruning~\cite{wu2020mitigating}) for backdoor attacks.
Through alternating interactions with adaptive attacks, our meta-SG defenses exhibit markedly superior performance compared to fixed parameter defenses (such as Clipping Median, Krum, and FLTrust), as well as meta-RL, which solely interacts with non-adaptive attacks, namely IPM, LMP, and EB~\cite{bhagoji2019analyzing}. As displayed in Figures~\ref{fig:untargeted}(b) and~\ref{fig:untargeted}(d), the effectiveness of meta-RL~\cite{li2023robust} steadily enhances owing to online adaptation. On the other hand, meta-SG's performance quickly escalates and consistently sustains a high level (around {90\%} for MNIST and roughly {70\%} for CIFAR-10).

\textbf{Uncertain/Unknown Attacker Type}. Our meta-SG (or meta-RL) strategy possesses the capability to promptly adapt to an RL-based (or non-adaptive) attack present within the training attack set. Additionally, our approach exhibits commendable robustness against unexpected non-adaptive attacks by learning from a variety of worst-case conditions (i.e., RL-based attacks), as depicted in Figure~\ref{fig:meta-SG}(b).
However, as pointed out in Figures~\ref{fig:untargeted}(a) and \ref{fig:untargeted}(c), the defensive effectiveness of meta-SG when confronted with an unidentified attacker type is slightly inferior compared to the performance of meta-RL under uncertain conditions. This difference is less than {5\%} when the model approaches convergence.

\textbf{More Untargetd Model Poisoning Results.} Similar to results in Figure~\ref{fig:untargeted}, meta-RL achieves the best performance (slightly better than meta-SG) under IPM attacks for both MNIST and CIFAR-10. On the other hand, meta-SG performs the best (significantly better than meta-RL) against RL-based attacks for both MNIST and CIFAR-10. Notably, Krum can be easily compromised by RL-based attacks by a large margin.  In contrast, meta-RL gradually adapts to adaptive attacks, while meta-SG displays near-immunity against RL-based attacks.

\textbf{Backdoor Defense Settings.} 
In our experiments, we test two settings when the server knows the backdoor trigger but is uncertain about the target label and when the server knows the target label but not the backdoor trigger. In the former case,  we generate 10 GAN-based models~\citep{doan2021lira} targeting all 10 classes in CIFAR-10 in the simulated environment to train a defense policy in a \textbf{blackbox} setting, and 5 GAN-based models targeting classes 0-4 in the simulated environment to train a defense policy in a \textbf{graybox} setting, respectively. We then apply another GAN-based model targeting class 0 (airplane) to test the defense in each setting, with results shown in Figure~\ref{fig:ablation}(a) and Figure~\ref{fig:backdoors}(c).  In the latter case where the defender does not know the true backdoor trigger used by the attacker, we also implement GAN-based models~\citep{doan2021lira} to generate distributions of triggers (see Figure~\ref{fig:gan_trigger}) targeting one known label (truck) to simulate a black-box setting, as well as using 4 fixed sub-triggers (see Figure~\ref{fig:cifar10_dba}) targeting on one known label (truck) to simulate a \textbf{graybox} setting, and train a defense policy for each setting, and then apply a fixed global pattern (see Figure~\ref{fig:cifar10_dba}) in the real FL environment to test the defense (results shown in Figure~\ref{fig:ablation}(b) and Figure~\ref{fig:backdoors}(d)). In the default \textbf{whitebox} setting, the server knows the backdoor trigger pattern (global) and the targeted label (truck). Results in this setting are shown in Figures~\ref{fig:backdoors}(a) and~\ref{fig:backdoors}(b).

\textbf{Results on Backdoor Defenses.}
Post-stage defenses alone, such as Neuron Clipping and Pruning, are susceptible to RL-based attacks once the defense mechanism is known. However, as depicted in Figure~\ref{fig:backdoors}(a) and (b), we demonstrate that our whitebox meta-SG approach is capable of effectively eliminating the backdoor influence while preserving high main task accuracy simultaneously. 
Figure~\ref{fig:backdoors}(c) illustrates that graybox meta-SG exhibits a more stable and robust mitigation of the backdoor attack compared to blackbox meta-SG. Furthermore, in  Figure~\ref{fig:backdoors}(d), graybox meta-SG demonstrates a significant reduction in the impact of the backdoor attack, achieving nearly a $70\%$ mitigation, outperforming blackbox meta-SG.

We examine a blackbox backdoor attack scenario where the backdoor trigger or the targeted label is either unknown or uncertain to the server. The meta-policy of our meta-SG is designed to counter ten GAN-based RL attacks, each targeting a different label out of the ten possible labels in CIFAR-10 (see our technical report~\cite{li2023first} for details). Figure~\ref{fig:ablation}(a) portrays the defensive performance of this meta-policy against a GAN-based assault aimed at label $0$ within the genuine FL framework. While meta-SG considerably diminishes the backdoor accuracy (by nearly two-thirds), its performance fluctuates due to occasional instances where the meta-policy incorrectly identifies a target label, even with adaptation.
In Figure~\ref{fig:ablation}(b), we see a case where the meta-policy, having been trained with GAN-based BRL attacks, is pitted against a BRL attack utilizing a fixed global pattern~\cite{li2023learning}. While meta-SG demonstrates a significantly better defense against backdoor attacks compared to the baseline, backdoor accuracy still nears about {50\%} at the end of FL training, which is higher than in a whitebox case (approximately {10\%}), where the defender is aware of both the backdoor trigger and target label.

\textbf{Malicious Clients Number and Non-i.i.d. Degree.} We employ our meta-SG model in this context to examine the effect of imperfect knowledge about the number of malicious clients and the extent of non-i.i.d. distribution in clients' local data. We make an approximate assumption that the number of malicious clients falls within the range of $5$-$60$. During the pre-training stage, we consider only three conditions, where there are $40$, $50$, and $60$ malicious clients, respectively, as part of the potential attack scenarios. In the same way, we assume that the non-i.i.d. degree lies between $0.1$ and $0.7$, and we include non-i.i.d. levels of $0.5$, $0.6$, and $0.7$ during the pre-training phase. As illustrated in Figures~\ref{fig:ablation}(c) and~\ref{fig:ablation}(d), meta-SG delivers the highest model accuracy regardless of the number of malicious clients and non-i.i.d. levels under LMP, when the attack type is recognized by the defender.

\begin{figure}[t]
    \centering
  \begin{subfigure}{0.24\textwidth}
      \centering
          \includegraphics[width=\textwidth]{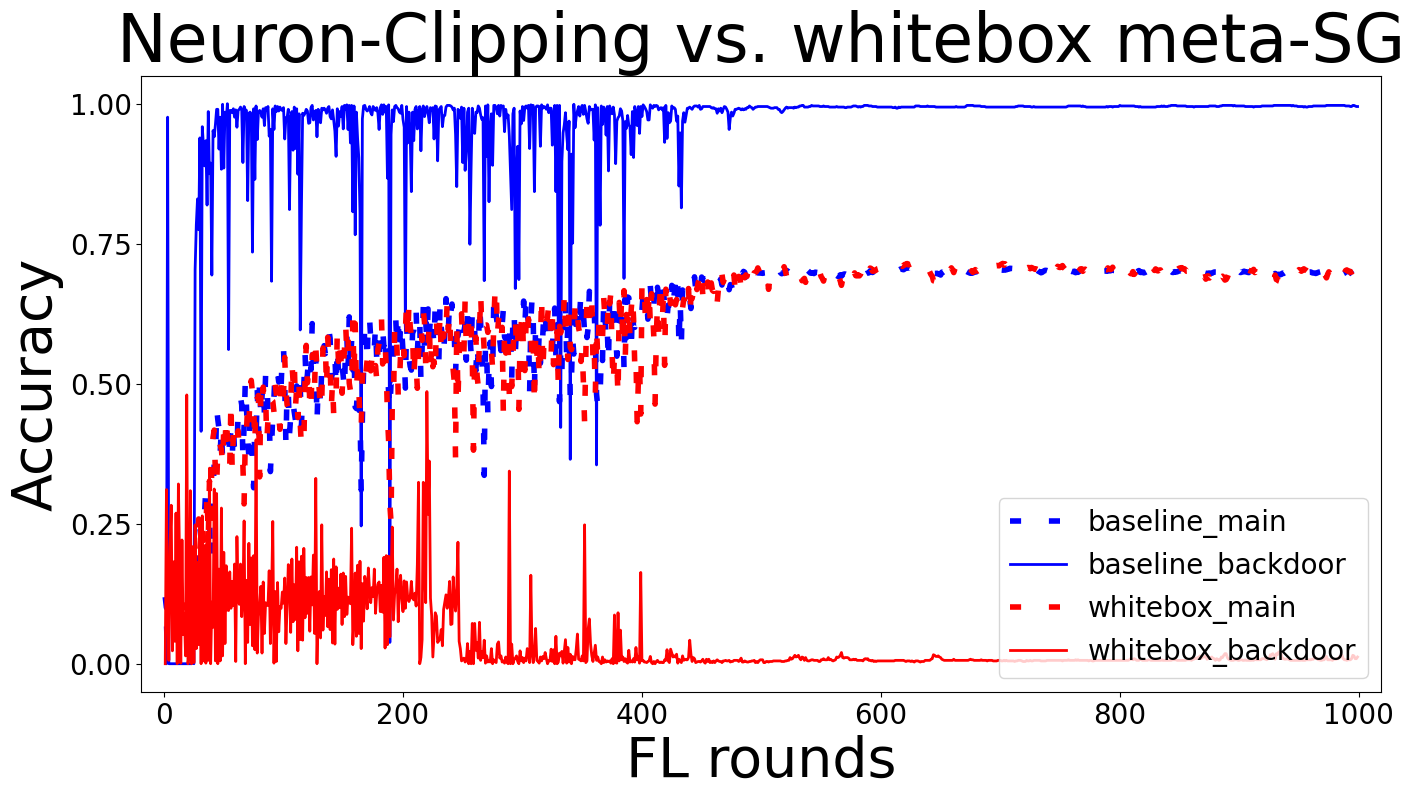}
          \caption{}
  \end{subfigure}
  \hfill
    \begin{subfigure}{0.24\textwidth}
      \centering
          \includegraphics[width=\textwidth]{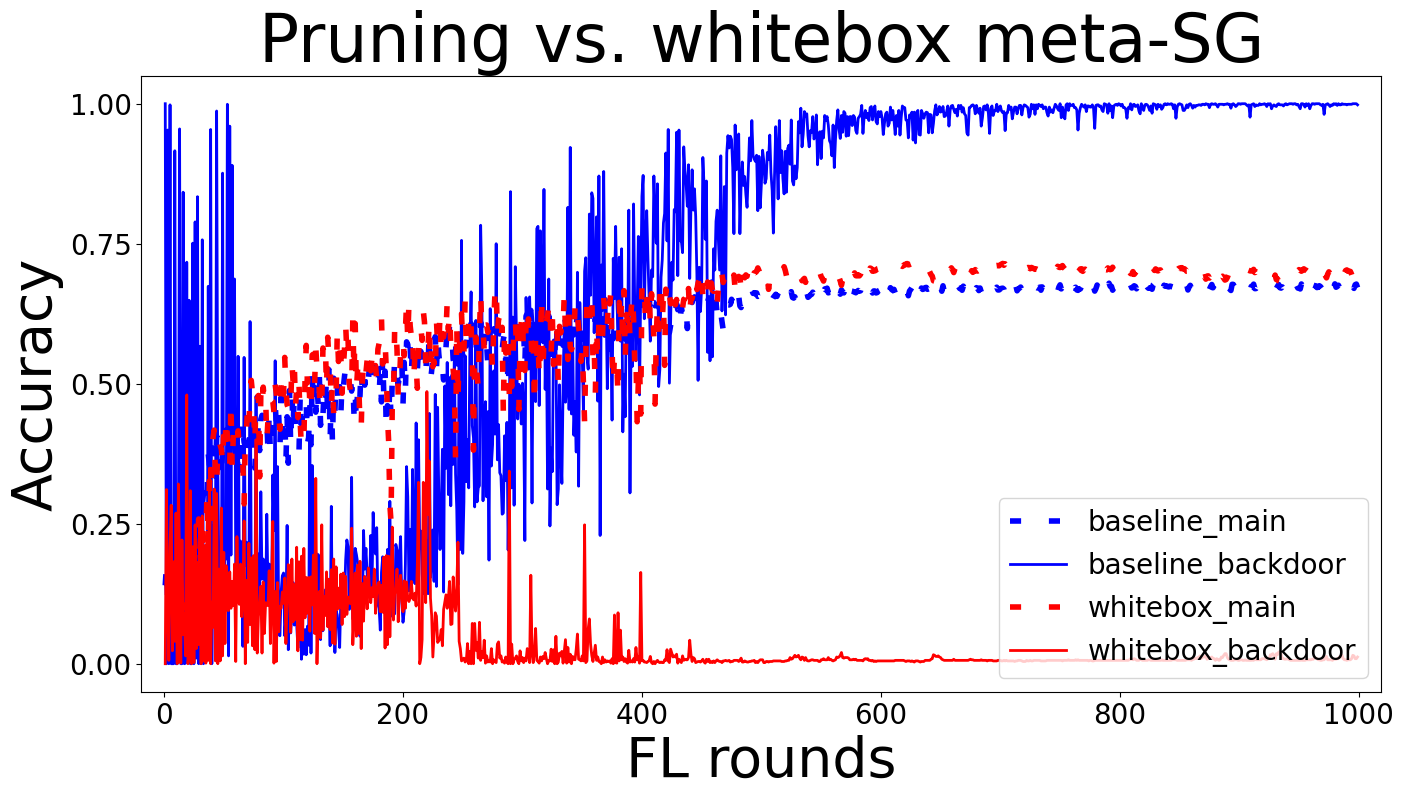}
          \caption{}
  \end{subfigure}
  \hfill
    \begin{subfigure}{0.24\textwidth}
      \centering
          \includegraphics[width=\textwidth]{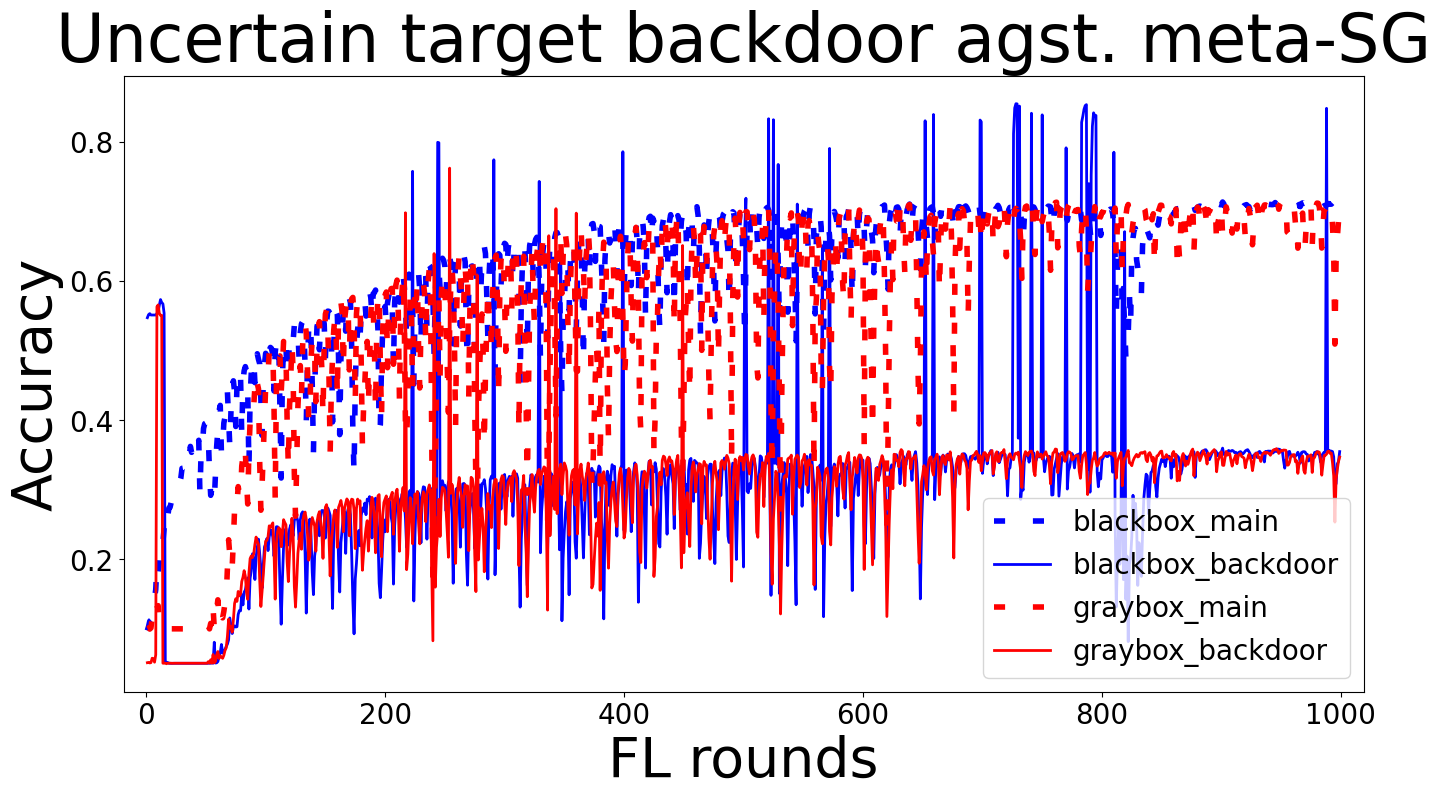}
          \caption{}
  \end{subfigure}
  \hfill
    \begin{subfigure}{0.24\textwidth}
      \centering
          \includegraphics[width=\textwidth]{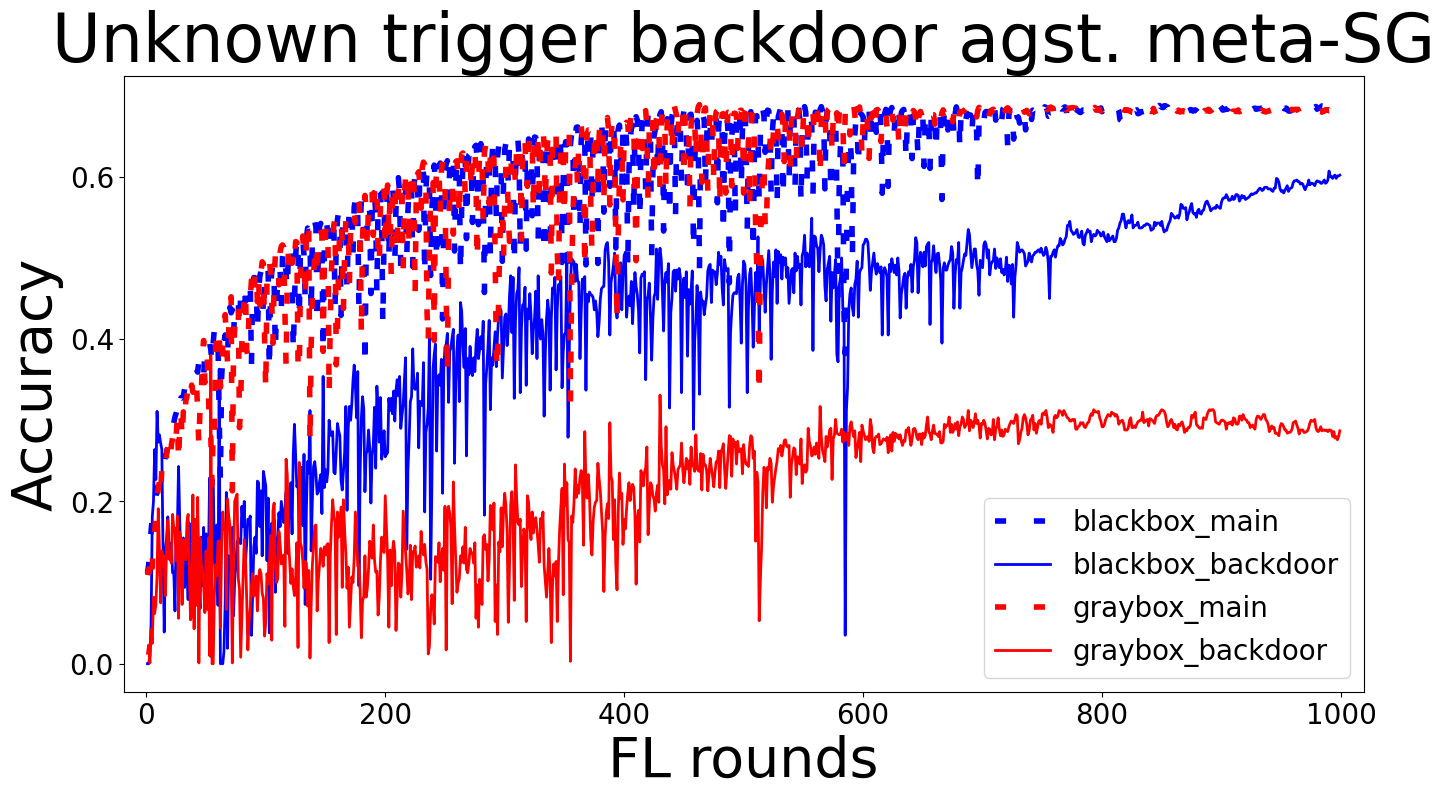}
          \caption{}
  \end{subfigure}
  \caption{\small{Comparisons of defenses (i.e., Neuron Clipping, Pruning, and meta-SG) under RL-based backdoor attack (BRL) on CIFAR-10. The BRLs are trained before epoch 0 against the associate defenses (i.e., Neuron Clipping, Pruning, and meta-policy of meta-SG). 
  Other parameters are set as default}}\label{fig:backdoors}
\end{figure}

\begin{figure*}[t]
 	\vspace{-5pt}
 	\centering
 		\subfloat[]{%
 		\includegraphics[width=0.25\textwidth]{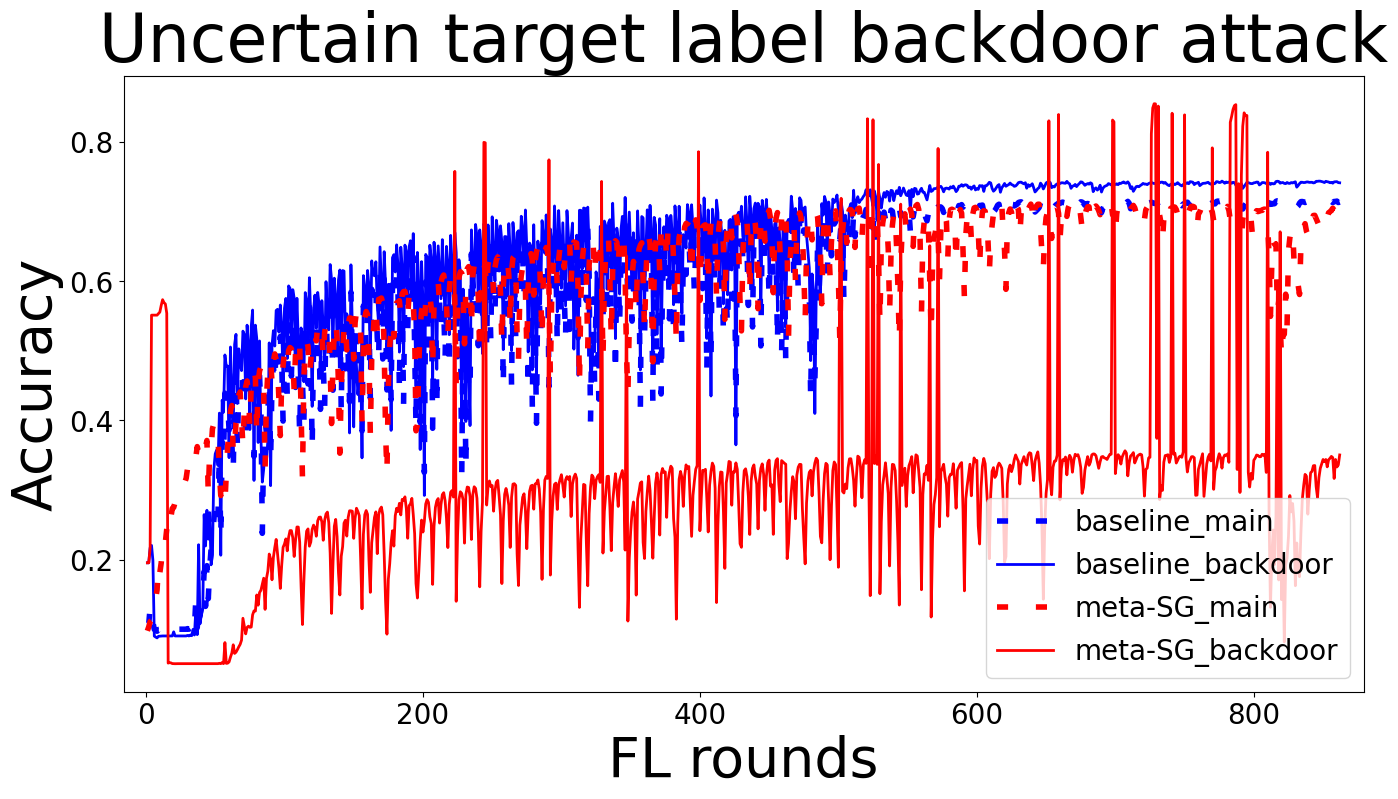}%
 	}
 	\subfloat[]{%
 		\includegraphics[width=0.2575\textwidth]{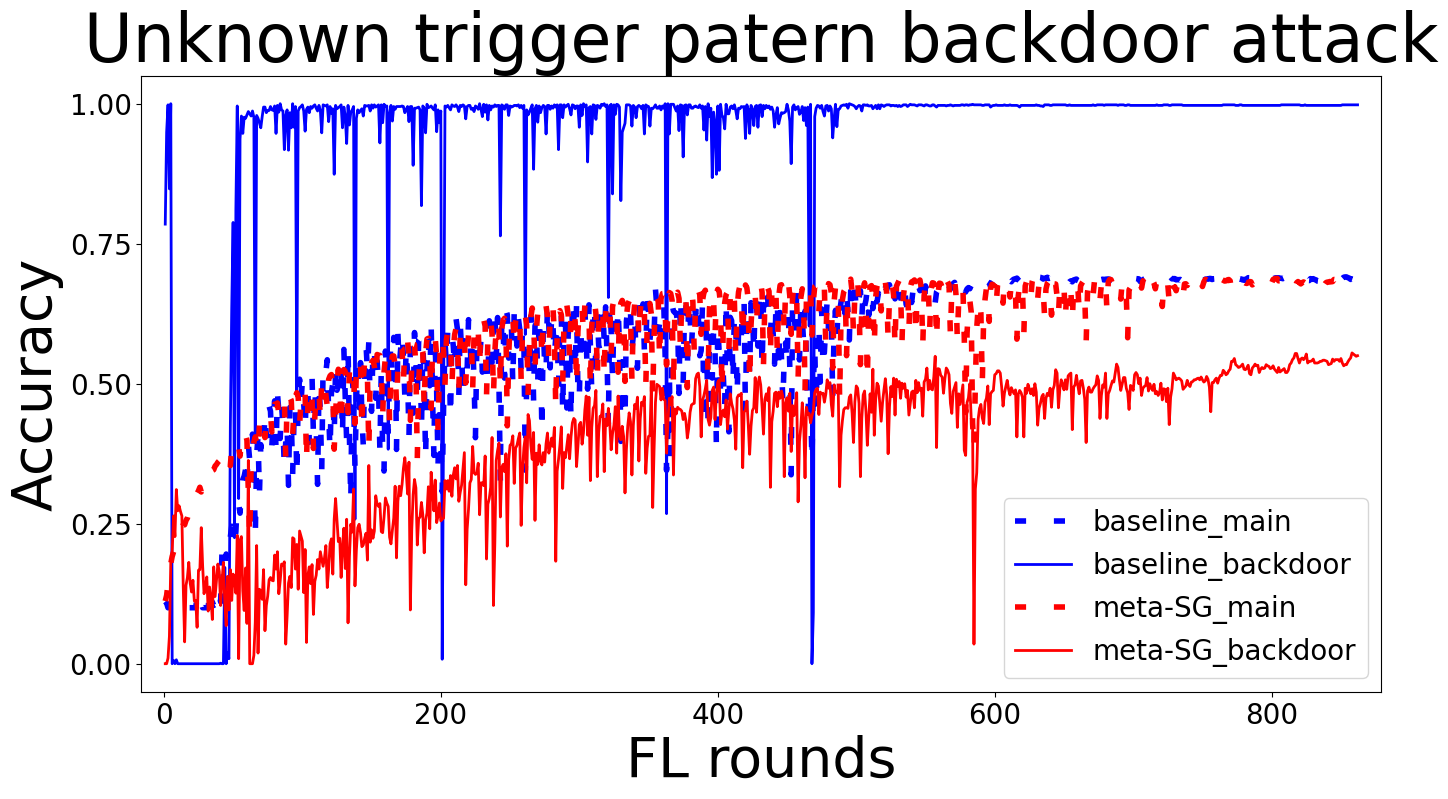}%
 	}
 	\subfloat[]{%
 		\includegraphics[width=0.24\textwidth]{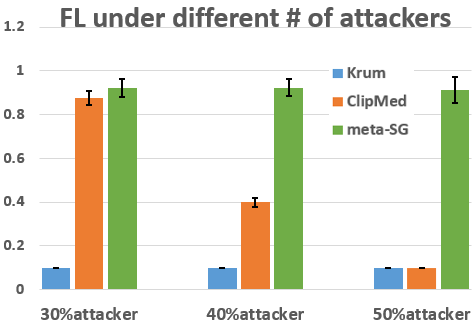}%
 	}
 	\subfloat[]{%
 		\includegraphics[width=0.2475\textwidth]{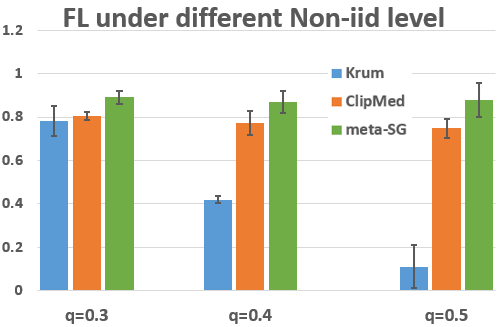}%
 	}

 	 \caption{\small Ablation Studies. For (a)-(b): uncertain backdoor            targets and unknown backdoor triggers, where the meta-policies are          trained on trigger distributions generated by GAN-based models              targeting multiple labels on CIFAR-10. For a baseline defense, the          combination of a training-stage norm bounding with a boundary of $2$        and post-training Neuron Clipping with a clipping range of $7$ is           used. For (c)-(d): meta-SG trained with the number of malicious             clients in the range of 40 to 60 and a non-$i.i.d.$ level of q= [0.5,       0.6, 0.7] on MNIST. This is compared with the defenses of Krum and          Clipping Median under the known LMP attack. Other parameters set as         default.}
 	\label{fig:ablation}
 \end{figure*}
}

\section{Related Works}

\paragraph{Poisoning/backdoor attacks and defenses in FL.} Various methods for compromising the integrity of a federated learning target model have been introduced, including targeted poisoning attacks which strive to misclassify a particular group of inputs, as explored in the studies by~\citep{bhagoji2019analyzing, baruch2019little}. Other techniques, such as those studied by~\citep{fang2020local, xie2020fall,shejwalkar2021manipulating}, focus on untargeted attacks with the aim of diminishing the overall model accuracy. The majority of existing strategies often utilize heuristics-based methods (e.g.,~\citep{xie2020fall}), or they focus on achieving a short-sighted goal (~\citep{fang2020local,shejwalkar2021manipulating}). On the other hand, malicious participants can easily embed backdoors into the aggregated model while maintaining the model’s performance on
the main task with model replacement~\cite{bagdasaryan2020backdoor}. To enhance the surreptitious nature of these poisoned updates, triggers can be distributed across multiple cooperative malicious devices, as discussed by Xie et al. (2019)\citep{xie2019dba}, and edge-case backdoors can be employed, as demonstrated by Wang et al. (2020)~\citep{wang2020attack}. However, these methods can be sub-optimal, especially when there's a need to adopt a robust aggregation rule. Additionally, these traditional methods typically demand access to the local updates of benign agents or precise parameters of the global model for the upcoming round~\citep{xie2020fall,fang2020local} in order to enact a significant attack. In contrast to these methods, RL-based approach~\cite{li2022learning, shen2021coordinated, li2023learning} employs reinforcement learning for the attack, reducing the need for extensive global knowledge while focusing on a long-term attack goal.

Several defensive strategies have been suggested to counter model poisoning attacks, which broadly fall into two categories: those based on robust aggregation and those centered around detection.
Robust-aggregation-based defenses encompass techniques such as dimension-wise filtering. These methods treat each dimension of local updates individually, as explored in studies by~\citep{bernstein2018signsgd,yin2018byzantine}. Another strategy is client-wise filtering, the goal of which is to limit or entirely eliminate the influence of clients who might harbor malicious intent. This approach has been examined in the works of~\citep{blanchard2017machine,pillutla2022robust,sun2019can}. Some defensive methods necessitate the server having access to a minimal amount of root data, as detailed in the study by~\cite{cao2020fltrust}. 
Naive backdoor attacks are limited by even simple defenses like norm-bounding ~\citep{sun2019can} and weak differential private ~\citep{geyer2017differentially} defenses.
Despite to the sophisticated design of state-of-the-art non-addaptive backdoor attacks against federated learning, post-training stage defenses ~\citep{wu2020mitigating,nguyen2021flame,rieger2022deepsight} can still effectively erase suspicious neurons/parameters in the backdoored model.


\paragraph{Multi-agent meta learning. } Meta-learning, and in particular meta-reinforcement-learning aim to create a generalizable policy that can fast adapt to new tasks by exploiting knowledge obtained from past tasks \cite{duan2016rl,finn2017model}. The early use cases of meta-learning have been primarily single-agent tasks, such as few-shot classification and single-agent RL \cite{finn2017model}. A recent research thrust is to extend the meta-learning idea to multi-agent systems (MAS), which can be further categorized into two main directions: 1) distributed meta-learning in MAS \cite{kayaalp2022dif,zhang2022distributed}; 2) meta-learning for generalizable equilibrium-seeking \cite{gupta2021dynamic,harris2022meta,zhao2022stackelberg,tao23ztd}.  The former focuses on a decentralized operation of meta-learning over networked computation units to reduce computation/storage expenses. The latter is better aligned with the original motivation of meta-learning, which considers how to solve a new game (or multi-agent decision-making) efficiently by reusing past experiences from similar occasions.            

In stark contrast to the existing research efforts, our work leverages the adaptability of meta-learning to address information asymmetry in dynamic games of incomplete information, leading to a new equilibrium concept: meta-equilibrium (see \Cref{def:meta-se}). What distinguishes our work from the aforementioned ones is that 1) every entity in our meta-SG is a self-interest player acting rationally without any coordination protocol; 2) meta-learning in our work is beyond a mere solver for computing long-established equilibria (e.g., Stackelberg equilibrium); it brings up a non-Bayesian approach to processing information in dynamic games (see \Cref{app:meta-se}), which is computationally more tractable. This meta-equilibrium notion has been proven effective in combating information asymmetry in adversarial FL. Since asymmetric information is prevalent in security studies, our work can shed light on other related problems. 

\paragraph{First-order methods in bilevel optimization.} The meta-SG problem in \eqref{eq:meta-se} amounts to a stochastic bilevel optimization. The meta-SL in \Cref{algo:meta-sl} admits a much simpler gradient estimation than what one would often observe in the bilevel optimization literature \cite{chen23bilevel, kwon23fully}, where the gradient estimate for the upper-level problem involves a Hessian inverse \cite{chen23bilevel} or some first-order correction terms \cite{kwon23fully}. The key intuition behind this simplicity lies in the strict competitiveness (see \Cref{ass:sc}). Informally speaking, \eqref{eq:meta-se} is more akin to minimax programming \cite{nouiehed2019solving,li2022sampling}, even though it is a general-sum game. However, the data-driven meta-adaptation within the value function in  \Cref{eq:meta-se} leads to a more involved gradient estimation. since the data induces extra randomness in addition to policy gradient estimates \cite{fallah2021convergence}. Perhaps, the closest to our work is \cite{li2022sampling} where the authors investigate adversarial meta-RL and arrive at a similar Stackelberg formulation. However, \cite{li2022sampling} considers a minimax relaxation to the original Stackelberg formulation, leading to simpler nonconvex programming. Our work is among the first endeavors to investigate fully first-order algorithms for solving general-sum Stackelberg games.

\end{document}